\documentclass[10pt]{article}

\usepackage{arxiv}
\usepackage[utf8]{inputenc}
\usepackage[T1]{fontenc}
\usepackage{graphicx}
\usepackage[export]{adjustbox}
\usepackage{hyperref}
\hypersetup{colorlinks=false,pdfborder={0 0 0}}
\usepackage{xcolor}
\usepackage{listings}
\definecolor{codebg}{rgb}{1.0,1.0,1.0}
\definecolor{codekw}{rgb}{0.10,0.27,0.68}
\definecolor{codestr}{rgb}{0.65,0.13,0.13}
\definecolor{codecmt}{rgb}{0.30,0.55,0.30}
\definecolor{codeframe}{rgb}{0.80,0.80,0.80}
\lstdefinestyle{pythonpaper}{
  language=Python,
  backgroundcolor=\color{codebg},
  basicstyle=\ttfamily\small,
  keywordstyle=\color{codekw}\bfseries,
  stringstyle=\color{codestr},
  commentstyle=\color{codecmt}\itshape,
  numbers=none,
  frame=single,
  rulecolor=\color{codeframe},
  framesep=4pt,
  xleftmargin=8pt,
  xrightmargin=8pt,
  showstringspaces=false,
  breaklines=true,
  captionpos=b,
  aboveskip=6pt,
  belowskip=6pt,
}
\usepackage{url}
\usepackage{amsfonts}
\usepackage{amsmath}
\usepackage{amssymb}
\usepackage{amsthm}
\usepackage{microtype}
\usepackage{bm}
\usepackage{array}
\usepackage{booktabs}
\usepackage[sort&compress,round]{natbib}
\usepackage{enumitem}
\usepackage{placeins}
\usepackage[capitalize,noabbrev]{cleveref}
\usepackage{tikz}
\usetikzlibrary{positioning, arrows.meta, calc, shapes.geometric, fit, backgrounds}
\definecolor{lifthyper}{HTML}{ff7f0e}
\definecolor{liftdirect}{HTML}{d62728}
\definecolor{liftshared}{HTML}{4a4a4a}

\theoremstyle{plain}

\newtheorem{lemma}{Lemma}
\newtheorem{theorem}{Theorem}
\newtheorem{corollary}{Corollary}

\theoremstyle{definition}
\newtheorem{remark}{Remark}

\newcommand{\R}{\mathbb{R}}
\newcommand{\E}{\mathbb{E}}
\newcommand{\Var}{\mathrm{Var}}

\newcommand{\tr}{\mathrm{tr}}

\newcommand{\softplus}{\mathrm{softplus}}

\newcommand{\bb}{{\bm{b}}}
\newcommand{\be}{{\bm{e}}}
\newcommand{\bI}{{\bm{I}}}
\newcommand{\bg}{{\bm{g}}}
\newcommand{\bH}{{\bm{H}}}
\newcommand{\bJ}{{\bm{J}}}
\newcommand{\br}{{\bm{r}}}

\newcommand{\bV}{{\bm{V}}}
\newcommand{\bSigma}{{\bm{\Sigma}}}
\newcommand{\bz}{{\bm{z}}}
\newcommand{\btheta}{{\bm{\theta}}}
\newcommand{\bphi}{{\bm{\phi}}}

\newcommand{\bX}{{\bm{X}}}

\newcommand{\bx}{{\bm{x}}}
\newcommand{\by}{{\bm{y}}}
\renewcommand{\L}{\mathcal{L}}

\title{A lift for input-convex neural network training}

\author{
  Ali Siahkoohi, Anirudh Thatipelli \\
  Department of Computer Science \\
  University of Central Florida
}
\date{}

\begin{document}
\maketitle

\begin{abstract}
\noindent Input-convex neural networks (ICNNs) are widely used for a range of learning tasks---log-concave density estimation, convex-potential normalizing flows, optimal transport, and transport-map inversion for high-dimensional Bayesian posteriors. All of these tasks share a structural constraint: the inter-layer weights of the ICNN must remain non-negative. The standard recipe for enforcing it, projected gradient descent (PGD) onto the non-negative cone, applies a hard, non-smooth projection---the stiff-penalty limit of an ADMM-style constraint splitting---and its classical convergence guarantees do not transfer to the non-smooth ICNN training landscape; the differentiable alternative, softplus reparametrization, instead attenuates the gradient exponentially in the weight magnitude, stalling training with dead inter-layer weights and plateaued loss. To address this limitation, and inspired by the parameter-extension lifts of PDE-constrained inverse problems, we propose the \emph{lift}: instead of constraining the inter-layer weights directly, we train an unconstrained hypernetwork that emits them from a permutation-invariant summary of the input batch. This adds a source of stochasticity to the training dynamics that softens the loss landscape, letting the iterates escape the gradient-attenuated region where direct softplus stalls. We trace this softening to three structural ingredients---a learnable bias acting as \textbf{slack}, a hypernetwork \textbf{body} that conditions on the target batch, and a \textbf{cross-covariance} coupling the two through batch stochasticity---and prove each one necessary: deleting any single ingredient collapses the cross-covariance that carries the softening. By means of log-concave energy-based modeling at scales from one-dimensional toy targets to image-flavored latents, and convex-potential normalizing flows on a 21-dimensional tabular benchmark, we show that \emph{the lift reaches a lower test loss than both PGD and direct softplus, and turns a plateau-bounded training trajectory into a valley-descending one}.
\end{abstract}

\begin{figure*}[t]
\centering
\begin{minipage}[t]{0.49\textwidth}
  \centering
  \includegraphics[width=\linewidth,trim=0 0 0 0,clip]{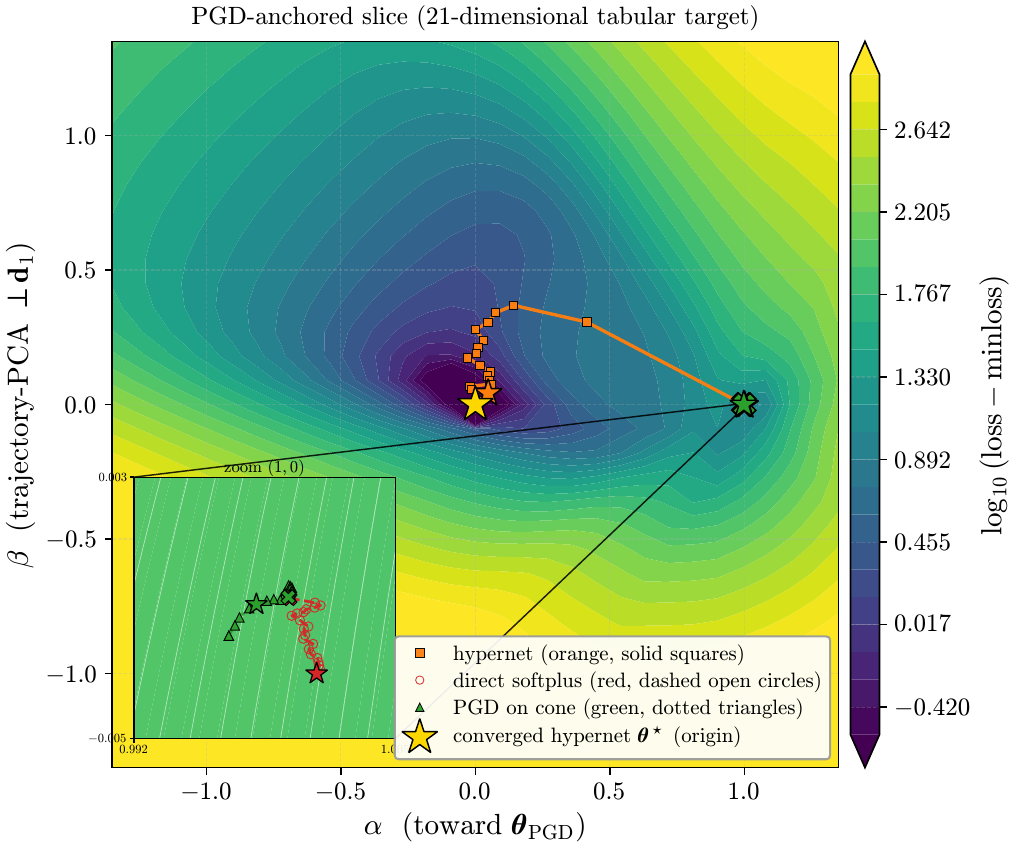}\\[-2pt]
  {\small\textbf{(a) Parameter-space view.}}
\end{minipage}\hfill%
\begin{minipage}[t]{0.49\textwidth}
  \centering
  \includegraphics[width=\linewidth,trim=0 0 0 0,clip]{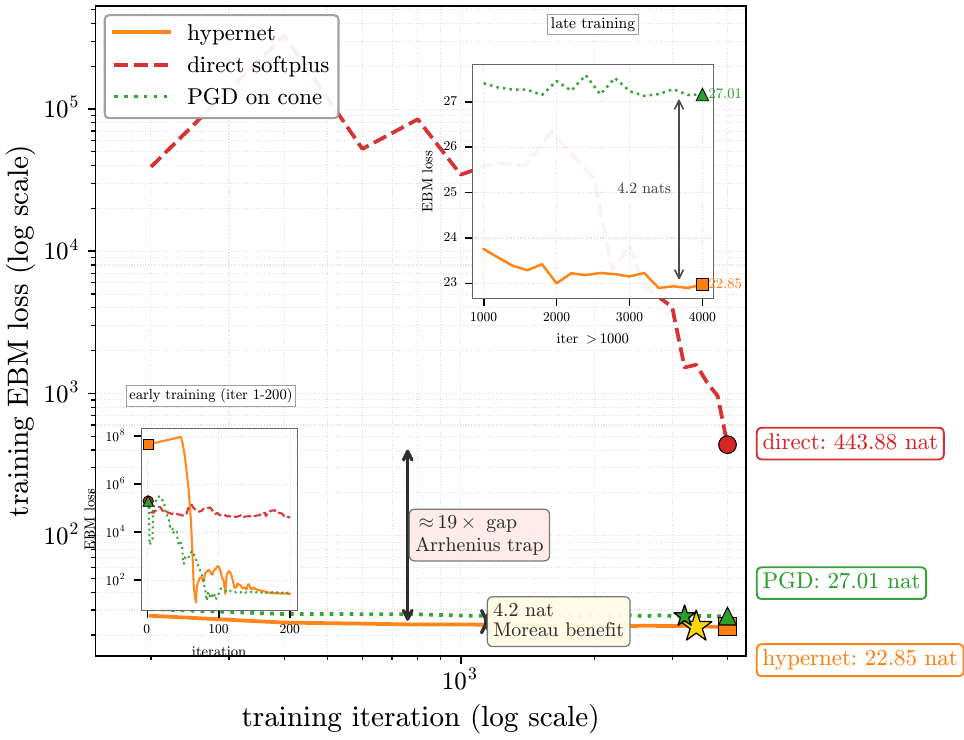}\\[-2pt]
  {\small\textbf{(b) Loss-space view.}}
\end{minipage}
\caption{\textbf{Three positivity reparametrizations on log-concave EBM training, three different fates} (21-dimensional tabular target; test negative log-likelihood reported at each method's lowest-validation-loss checkpoint). \textbf{(a)}~Loss landscape on a two-dimensional PGD-anchored slice (\cref{sec:landscape-viz}), the converged \textcolor[HTML]{ff7f0e}{\textbf{hypernet}} at the origin (gold star); \textbf{(b)}~held-out validation loss versus iteration on the same run. \textcolor[HTML]{ff7f0e}{\textbf{Hypernet}} descends through the basin to the deepest loss; \textcolor[HTML]{2ca02c}{\textbf{PGD}} lands at the cone boundary; \textcolor[HTML]{d62728}{\textbf{direct}} softplus is trapped on the readout shoulder (\cref{sec:setup-shoulder}), an order of magnitude above the other two. The lift's margin above \textcolor[HTML]{2ca02c}{\textbf{PGD}} is the landscape-smoothing benefit of \cref{lem:moreau}; the gap above \textcolor[HTML]{d62728}{\textbf{direct}} is the Kramers escape of \cref{cor:fpt}. Code to partially reproduce the results is available on \href{https://github.com/luqigroup/icnnlift}{\textcolor[HTML]{00008B}{GitHub}}.}
\label{fig:teaser}
\label{fig:teaser-trace}
\end{figure*}

\section{Introduction}
\label{sec:introduction}

Input-convex neural networks (ICNNs)~\citep{amos2017icnn} parametrize the convex scalar fields that drive several modern learning tasks in probabilistic modeling and Bayesian inference. They underlie the negative log-density of a log-concave~\citep{prekopa1971logarithmic,saumard2014logconcavity} energy-based model, the convex potential whose gradient defines a convex-potential normalizing flow~\citep{huang2021cpflow}, the convex potential of a PCP-Map~\citep{wang2024conditional,bunne2022supervised} for transport-map posterior sampling, and the Brenier potential of ICNN-parametrized optimal transport~\citep{makkuva2020icnnot,korotin2021wasserstein}. These transport-map constructions sit within a broader move toward generative posterior sampling for high-dimensional Bayesian inverse problems, a setting in which score-based and diffusion-model samplers have also been applied to seismic imaging~\citep{baldassari2024conditional,siahkoohi2026admm}. Across the ecosystem the ambient dimension ranges from a single coordinate on toy targets to thousands of pixels on image-density and PCP-Map applications, and all four applications share one structural feature: input-convexity demands positivity of the inter-layer weights, i.e., $\btheta\succeq\bm{0}$.

The dominant practical recipe for enforcing this constraint is projected gradient descent (PGD)~\citep{amos2017icnn}: an unconstrained step on $\btheta$ followed by the projection $\btheta \leftarrow \max(\btheta, 0)$. The projection is non-differentiable on the active set, where the iterates concentrate, and PGD's classical convergence guarantees ($O(1/k)$ for smooth convex problems with a closed convex constraint and $O(1/\sqrt{k})$ to stationarity for smooth non-convex problems~\citep{nesterov2018lectures,beck2017first}) assume a Lipschitz-smooth objective that the ICNN training landscape violates there, so they do not transfer. A differentiable alternative reparametrizes the inter-layer weights as $\btheta = \psi(\tilde\btheta)$ with $\tilde\btheta \in \R^d$ unconstrained and $\psi$ a monotone non-negative map, of which there are two genuine families---the smooth one, softplus, and the non-smooth max-based one, the ReLU-type readouts $\psi(\tilde\theta) = \max(\tilde\theta, \epsilon)$ that project onto the non-negative cone (the cone projection being the $\epsilon = 0$ case). Each enforces $\btheta \succeq \bm{0}$ by construction, and each introduces a chain-rule prefactor $\psi'(\tilde\btheta)$ that multiplies every downstream gradient and collapses on an extended region of parameter space: the softplus prefactor $\psi'(\tilde\theta)$ vanishes smoothly as $\tilde\theta \to -\infty$~\citep{hoedt2023principled}, while the ReLU-type prefactor is identically zero on the gated set and matches PGD at the cost of differentiability. Stochastic gradient descent escapes this attenuated region only on a time scale that grows exponentially with the inverse noise level (\cref{cor:fpt})---the Kramers--Arrhenius~\citep{kramers1940brownian,hanggi1990reaction,xie2021diffusion} regime, fundamentally slower than the polynomial classical PGD rates. Existing remediations treat the symptom rather than the structure: specialized initialization schedules~\citep{hoedt2023principled} each tame an individual failure mode; plain alternating direction method of multipliers (ADMM) with positivity~\citep{boyd2011admm} enforces the constraint by a closed-form prox on the slack but leaves the primal block without data-conditioned reparametrization, and its stiff-penalty limit $\rho\to\infty$ reduces structurally to PGD on the cone.

Inspired by the parameter-extension lifts of full-waveform inversion (FWI)~\citep{symes2020source,vanleeuwen2013}---where a stiff variable is recast in a larger space to turn a hard non-convex problem into a smoother one---we propose the \textbf{lift}. Rather than constraining the inter-layer weights directly, we train an unconstrained hypernetwork~\citep{ha2017hypernetworks} that emits them from the input batch. Because the batch is resampled at every step, the emitted weights fluctuate as training proceeds---an additional source of stochasticity in the training dynamics. Unlike ordinary mini-batch gradient noise, this fluctuation is not suppressed by the gradient attenuation that stalls direct softplus, and it softens the loss landscape around the region where training would otherwise plateau. We trace the effect to three structural ingredients of the construction---a learnable slack, the batch-conditioned body, and the stochastic coupling between them---and prove that none is dispensable (\cref{thm:joint-necessity}). On a one-dimensional log-concave energy-based model (EBM) and a convex-potential normalizing flow on a 21-dimensional tabular benchmark, the lift descends to a lower test loss than both PGD and direct softplus, turning a plateau-bounded trajectory into a valley-descending one~(\cref{fig:teaser}).

\subsection{Contributions}
\label{sec:contributions}

\begin{enumerate}[leftmargin=*,topsep=2pt,itemsep=3pt]
\item[\textbf{(1)}] \textbf{The lift.} We propose the hypernetwork-emitted ICNN parametrization of equation~\eqref{eq:lift}, which introduces an additional source of stochasticity into the training dynamics that effectively smooths the loss landscape around the readout shoulder. We then read the parametrization as a split-variable decomposition (\cref{sec:lift}, \cref{sec:admm}), an interpretation that delivers the lift's conditioning advantage without invoking the rate-theoretic assumptions that an ADMM convergence analysis would require.
\item[\textbf{(2)}] \textbf{Three jointly necessary structural ingredients.} We identify three ingredients of the lift's conditioning advantage---an identity-Jacobian \textbf{slack}, a data-conditioned \textbf{body}, and a non-vanishing \textbf{cross-covariance} between them---and prove that each is necessary for an operationally measurable cross-covariance estimator (\cref{thm:joint-necessity}). An implicit strong-convexification result links the cross-covariance to an added curvature modulus on the loss landscape, scoped to the stochastic-readout regime.
\item[\textbf{(3)}] \textbf{Empirical evidence across two ICNN paradigms.} On one-dimensional log-concave EBM targets, a four-architecture ablation isolates each ingredient and a 30-seed paired sweep bounds the lift's distributional improvement across the log-concave family (\cref{sec:exp-ebm}). On a 21-dimensional tabular target and on two-dimensional synthetic targets for convex potential flows, the lift improves test loss over direct softplus and produces a measurably smoother training trajectory and a better-conditioned loss-landscape geometry (\cref{sec:exp-cpflow}, \cref{sec:landscape-viz}).
\end{enumerate}

The remainder of the paper is organized as follows. We first name the chain-rule attenuation pathology that motivates the lift (\cref{sec:setup}), introduce the slack-plus-batch-summary reparametrization that resolves it (\cref{sec:lift}), and decompose the conditioning advantage into three structural ingredients (\cref{sec:three-ingredients}). We then contrast the construction against plain ADMM-with-positivity (\cref{sec:admm}) and present the empirical evidence across log-concave EBM training and convex-potential flow estimation (\cref{sec:experiments}). The PGD-baseline ablation (\cref{sec:ablations}) and a discussion of scope and related work (\cref{sec:discussion}) close out the argument before the conclusion (\cref{sec:conclusion}).

\FloatBarrier
\section{Positivity-reparametrized ICNNs attenuate the gradient on the readout shoulder}
\label{sec:setup}

Before introducing the lift, we name the structural pathology it resolves: a positivity readout $\psi$ whose chain-rule prefactor $\psi'$ collapses on an extended region of parameter space, trapping stochastic gradient descent (SGD). \cref{sec:setup-icnn} fixes the ICNN training setup; \cref{sec:setup-shoulder} defines the readout shoulder; \cref{sec:setup-failure-modes} closes the case that both readout families---smooth softplus and the non-smooth ReLU-type---fail there uniformly.

\subsection{ICNN training and the positivity constraint}
\label{sec:setup-icnn}

We train ICNN-parametrized models~\citep{amos2017icnn} by SGD on a data-driven loss $\L$. The running example is a single-component ICNN-EBM, $p_\btheta(\bx)\propto\exp(-E_\btheta(\bx))$, with $E_\btheta$ an input-convex neural network whose convex non-decreasing activations and non-negative inter-layer weights $\btheta_l\succeq\bm{0}$ together enforce input-convexity of $E_\btheta$; convex potential flows, PCP-Map, and ICNN-parametrized optimal transport share the same structure. Positivity is enforced through a monotone non-negative readout $\psi:\R\to\R_{\ge 0}$, $\btheta = \psi(\tilde\btheta)$, with $\tilde\btheta\in\R^d$ unconstrained. The canonical instance is $\psi = \softplus$, whose derivative $\psi'(\tilde\theta)\in(0,1)$ multiplies every downstream gradient through the chain rule. Throughout, $\btheta\in\R^d$ denotes the flattened vector of inter-layer weights to which the positivity constraint applies; the ICNN's other parameters (per-layer biases and any unconstrained weights) are absorbed into the function $E_\btheta$ but are not analyzed separately, because the lift mechanism acts only on the constrained weights.
\subsection{The readout shoulder}
\label{sec:setup-shoulder}

The locus $\psi'(\tilde\btheta) = \sigma_s\ll 1$ is the \textbf{softplus shoulder} of width $\sigma_s$ (\cref{fig:softplus-shoulder}). The shoulder is an extended region of parameter space, not a thin set: at $\sigma_s = 0.05$ it includes the entire half-line $\tilde\theta \lesssim -3$, so an iterate that enters it has room to wander. Once inside the shoulder, the chain-rule prefactor $\psi'(\tilde\btheta)$ is exponentially small in the weight magnitude, and SGD escapes only on a time scale that grows exponentially with the inverse noise level---the Kramers--Arrhenius regime~\citep{kramers1940brownian,hanggi1990reaction} quantified by \cref{cor:fpt} in \cref{sec:mechanism}. The same exponential escape time appears for the other canonical positivity reparametrizations (\cref{rem:positivity-generality}).

\begin{figure}[t]
\centering
\includegraphics[width=0.84\linewidth]{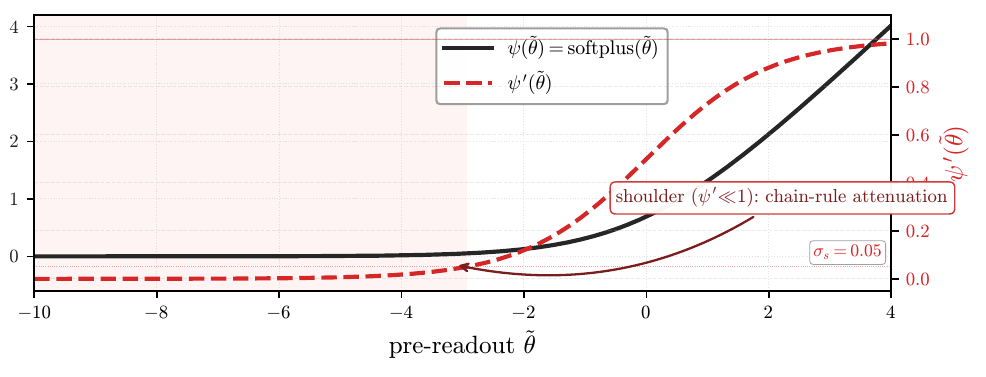}
\caption{\textbf{Softplus shoulder: an extended region of parameter space where the chain-rule prefactor $\psi'$ collapses.} The readout $\psi(\tilde\theta) = \softplus(\tilde\theta)$ (black, left axis) and its derivative $\psi'(\tilde\theta)$ (red dashed, right axis) on a single scalar coordinate $\tilde\theta$; the shaded region is the shoulder $\{\tilde\theta : \psi'(\tilde\theta) < \sigma_s\}$ with $\sigma_s = 0.05$. Iterates that enter the shoulder have body-path gradient $\partial \btheta/\partial h_\bphi = \psi'(\tilde\btheta)$ attenuated below $\sigma_s$; the lift's identity-Jacobian slack channel (\cref{sec:lift}) provides the unmodulated escape route.}
\label{fig:softplus-shoulder}
\end{figure}

\subsection{Direct readouts fail uniformly on the shoulder}
\label{sec:setup-failure-modes}

The direct softplus parametrization treats $\tilde\btheta$ as a free parameter and updates it through the chain-rule gradient $\nabla_{\tilde\btheta}\L = \psi'(\tilde\btheta)\odot\nabla_\btheta\L$. On the softplus shoulder the prefactor $\psi'$ vanishes and the trajectory stalls; on the ReLU-type gate the prefactor is identically zero on the gated set and the trajectory is trapped. The direct softplus is feasible by construction but is not amenable to gradient descent in the high-attenuation regime, a property shared by every canonical positivity reparametrization.

\begin{remark}[Generality of the positivity reparametrization.]
\label{rem:positivity-generality}
The lift analysis below uses only that $\psi:\R\to\R_{\ge 0}$ is differentiable monotone, with a chain-rule prefactor $\psi'$ that is small on an extended region of parameter space, so chain-rule attenuation is genuinely active on a non-trivial subset of parameter space. The first condition is met by any smooth monotone-positive reparametrization; the second is the operative restriction, and it holds for softplus ($\psi'(u)\in(0,1)$, with $\psi'(u)\to 0$ as $u\to-\infty$) and, more generally, for any smooth monotone $\psi$ with infimum zero, since such a map necessarily has a prefactor that decays toward zero over an unbounded sub-level region rather than only on a thin set. The non-differentiable limit---the ReLU-type readout at $\epsilon = 0$, where $\psi(u) = \max(u,0)$ and $\psi'(u) = \mathbf{1}[u>0]$, the cone projection---is treated in \cref{sec:ablation-pgd} as the projection limit, in which the gated coordinates have exactly zero gradient and the bias-channel identity Jacobian remains the only escape route.
\end{remark}

\FloatBarrier
\section{The lift: slack-plus-hypernetwork reparametrization of the constrained weights}
\label{sec:lift}

While mini-batch gradient noise reliably escapes sharp basins in standard deep-learning training~\citep{hochreiter1997flat,keskar2017sharp,mandt2017variational,jastrzebski2017three,li2017sde}, the previous section showed that the readout shoulder attenuates this very noise: the chain-rule prefactor $\psi'$ shrinks the deterministic drift and the stochastic kick at the same rate, and SGD stalls. In this section we introduce the lift, a reparametrization that gives SGD a structurally distinct noise source bypassing $\psi'$: \cref{sec:lift-def} makes the construction precise, \cref{sec:lift-channels} reads off the two structural channels it opens, \cref{sec:lift-mechanism} traces how they combine with batch stochasticity to deliver an implicit smoothing of the shoulder, and \cref{sec:lift-api} packages the result as a one-call wrapper for any existing ICNN training pipeline. The formal claims behind the intuition land in \cref{sec:three-ingredients}.

\subsection{Routing the constraint through a slack-plus-hypernetwork emission}
\label{sec:lift-def}

The \textbf{lift} (\cref{fig:lift-architecture}) replaces the direct softplus's free parameter $\tilde\btheta$ with the sum of a learnable slack bias and a permutation-invariant hypernetwork emission, then routes the result through the positivity readout:
\begin{equation}
\label{eq:lift}
\btheta \;=\; \psi(\tilde\btheta), \qquad \tilde\btheta \;=\; \bb \;+\; h_\bphi(\bX), \qquad h_\bphi(\bX) \;\equiv\; h_\bphi^{(2)}\!\Bigl(\tfrac{1}{n}\sum_{i=1}^{n} h_\bphi^{(1)}(\bx_i)\Bigr).
\end{equation}
Here $\bb\in\R^d$ is the per-coordinate slack bias; $h_\bphi^{(1)}$ and $h_\bphi^{(2)}$ are fully-connected branches with collected weights $\bphi$, applied per-point and after mean-pooling respectively over the conditioning batch $\bX = (\bx_1, \dots, \bx_n)$; and $\psi$ is the positivity readout of \cref{sec:setup}. The DeepSets-style mean-pool makes $h_\bphi$ permutation-invariant in the ordering of $\bX$~\citep{zaheer2017deepsets}; this DeepSets-hypernetwork emission pattern was first applied to generative-model training by~\citet{mayer2024fairness}, where conditioning the emitted weights on a permutation-invariant batch summary mitigates model autophagy disorder in self-consumed training loops~\citep{alemohammad2024mad}, and a function-space variant was applied by~\citet{thatipelli2026hypernetwork} to produce grid-independent finite-dimensional representations of functional data for downstream clustering. Training minimizes the same ICNN loss $\L$ as the direct softplus, but over the lifted parameters $(\bphi, \bb)$ in place of $\btheta$ directly:
\begin{equation}
\label{eq:lift-objective}
\boxed{\quad \textbf{The lift's training objective:}\quad \min_{\bphi,\,\bb}\;\L\bigl(\psi(\tilde\btheta)\bigr) \quad}
\end{equation}
The loss family, the model class, the data, and the evaluation metric are unchanged relative to the direct softplus, which is recovered by freezing $\bb\equiv\bm{0}$ and $h_\bphi\equiv\bm{0}$ and optimizing $\tilde\btheta$ in their place. Only the optimization variable differs.

\begin{figure}[t]
\centering
\begin{tikzpicture}[
  >={Stealth[length=2.4mm]},
  every node/.append style={font=\small},
  directbox/.style={draw=liftdirect, fill=liftdirect!8, line width=1.0pt, rounded corners=2pt, inner sep=4pt, align=center},
  hyperbox/.style={draw=lifthyper, fill=lifthyper!10, line width=1.0pt, rounded corners=2pt, inner sep=4pt, align=center},
  sharedbox/.style={draw=liftshared, fill=liftshared!8, line width=0.8pt, rounded corners=2pt, inner sep=4pt, align=center},
  slackarrow/.style={->, line width=1.6pt, draw=lifthyper},
  bodyarrow/.style={->, line width=1.0pt, draw=lifthyper},
  directarrow/.style={->, line width=1.0pt, draw=liftdirect},
  sharedarrow/.style={->, line width=1.0pt, draw=liftshared!80},
]

\node[hyperbox, fill=lifthyper!4, anchor=west] (batch) at (0, -0.95) {%
  $\bx_1$\\[-1pt]$\bx_2$\\[-1pt]$\vdots$\\[-1pt]$\bx_n$%
};
\node[hyperbox, anchor=west] (h1)
  at ($(batch.east) + (0.42cm, 0)$) {$h_\bphi^{(1)}$\\\scriptsize per-point};
\node[circle, draw=lifthyper, fill=lifthyper!12, line width=1.0pt, minimum size=0.9cm, inner sep=0pt, font=\small, anchor=west]
  (pool) at ($(h1.east) + (0.36cm, 0)$) {$\tfrac{1}{n}\!\sum$};
\node[hyperbox, anchor=west] (h2)
  at ($(pool.east) + (0.36cm, 0)$) {$h_\bphi^{(2)}$\\\scriptsize readout};
\node[circle, draw=lifthyper, fill=white, line width=1.2pt, minimum size=0.8cm, inner sep=0pt, font=\large, anchor=west]
  (sum) at ($(h2.east) + (0.85cm, 0)$) {$\oplus$};

\node[directbox, minimum width=1.6cm, minimum height=1.0cm, anchor=center]
  (direct-w) at ($(h1.center) + (0, 1.9)$) {$\tilde\btheta$\\\scriptsize (free param.)};

\node[hyperbox, fill=lifthyper!22, minimum width=1.4cm, minimum height=0.9cm, anchor=center]
  (slack) at ($(sum.center) + (0, -1.6)$) {$\bb$\\\scriptsize \textbf{slack bias}};

\node[sharedbox, minimum height=2.7cm, minimum width=1.0cm, anchor=west]
  (psi) at ($(sum.east) + (1.2cm, 0.95)$) {$\psi(\cdot)$\\\scriptsize softplus};
\node[sharedbox, minimum height=2.7cm, minimum width=1.0cm, anchor=west]
  (theta) at ($(psi.east) + (0.55cm, 0)$) {$\btheta\succeq\bm{0}$\\\scriptsize constr.};
\node[sharedbox, minimum height=2.7cm, minimum width=1.5cm, anchor=west]
  (icnn) at ($(theta.east) + (0.55cm, 0)$) {ICNN\\$E_\btheta(\bx)$};

\coordinate (psi-body) at (psi.west |- sum.center);
\coordinate (psi-direct) at (psi.west |- direct-w.center);

\draw[bodyarrow] (batch.east) -- (h1.west);
\draw[bodyarrow] (h1.east) -- (pool.west);
\draw[bodyarrow] (pool.east) -- (h2.west);
\draw[bodyarrow] (h2.east) -- (sum.west);

\draw[slackarrow] (slack.north) -- (sum.south);

\draw[bodyarrow] (sum.east) -- (psi-body);

\draw[directarrow] (direct-w.east) -- (psi-direct);

\draw[sharedarrow] (psi.east) -- (theta.west);
\draw[sharedarrow] (theta.east) -- (icnn.west);

\node[liftdirect, font=\bfseries, left=0.18cm of direct-w, anchor=east, align=right] {direct\\\scriptsize softplus};
\node[lifthyper, font=\bfseries, left=0.18cm of batch, anchor=east, align=right] {lift\\\scriptsize (ours)};
\end{tikzpicture}
\caption{\textbf{The lift in one picture.} \textbf{Top row (\textcolor{liftdirect}{red, direct softplus baseline}):} the pre-readout iterate $\tilde\btheta$ is a free parameter, passed through the positivity readout $\psi$ to produce the constrained weight $\btheta\succeq\bm{0}$, then through the ICNN energy $E_\btheta(\bx)$. \textbf{Bottom row (\textcolor{lifthyper}{orange, lift}):} the conditioning batch $\bX = (\bx_1, \dots, \bx_n)$ feeds the DeepSets hypernetwork $h_\bphi(\bX) = h_\bphi^{(2)}\bigl(\tfrac{1}{n}\!\sum_i h_\bphi^{(1)}(\bx_i)\bigr)$; its emission is summed with the slack bias $\bb$ at the $\oplus$ node to produce $\tilde\btheta = \bb + h_\bphi(\bX)$, which feeds the same shared $\psi \to \btheta \to E_\btheta(\bx)$ tail. The slack and body channels and their Jacobians through the readout are analyzed in \cref{sec:lift-channels,sec:three-ingredients}.}
\label{fig:lift-architecture}
\end{figure}

\subsection{The lift decomposes the pre-readout iterate into a constant slack and a batch-conditioned body}
\label{sec:lift-channels}

The slack-plus-hypernetwork decomposition $\tilde\btheta = \bb + h_\bphi(\bX)$ splits the pre-readout iterate into two terms with very different dependence on the conditioning batch $\bX$. The slack $\bb$ is a parameter, constant across SGD iterations that draw different batches. The body $h_\bphi(\bX)$ is a function of the batch---its output varies as $\bX$ varies. Both terms enter $\tilde\btheta$ additively, so their first-order Jacobians are identical ($\partial\tilde\btheta/\partial\bb = \partial\tilde\btheta/\partial h_\bphi = \bI_d$) and each picks up the same readout prefactor $\mathrm{diag}(\psi'(\tilde\btheta))$ when chain-ruled to the constrained weight $\btheta = \psi(\tilde\btheta)$. The asymmetry the lift introduces is not in the Jacobians---both optimization variables see the same shoulder attenuation as the direct softplus---but in which term carries the batch dependence.

At iteration $t$, a fresh batch $\bX^{(t)}$ is drawn and the lifted iterate evaluates to $\tilde\btheta^{(t)} = \bb + h_\bphi(\bX^{(t)})$. Across a trailing window of $T$ iterations at approximately fixed $(\bphi, \bb)$---the SGD updates to these parameters are slow relative to the batch-to-batch variation in $h_\bphi(\bX^{(t)})$---let $\bar{\tilde\btheta} := \tfrac{1}{T}\sum_s \tilde\btheta^{(s)}$ denote the trailing mean and $\delta\tilde\btheta^{(t)} := \tilde\btheta^{(t)} - \bar{\tilde\btheta}$ the batch-induced fluctuation of the iterate. The slack contributes a constant offset and drops out of the fluctuation; the body carries it entirely:
\begin{equation}
\label{eq:lift-batch-fluct}
\delta\tilde\btheta^{(t)} \;=\; \delta h_\bphi(\bX^{(t)}).
\end{equation}
The direct softplus parametrization has no analog: its pre-readout iterate is a free parameter, independent of $\bX$, so $\delta\tilde\btheta^{(t)} \equiv 0$. \cref{sec:lift-mechanism} shows how this batch-induced fluctuation couples to mini-batch gradient noise and delivers an implicit smoothing of the readout shoulder.

\subsection{Why this helps: implicit landscape smoothing through batch stochasticity}
\label{sec:lift-mechanism}

The previous subsection split the pre-readout iterate into a constant slack and a batch-conditioned body; we now walk, in four steps, why that split lets SGD escape a shoulder on which the direct softplus stalls. The mathematics is assembled formally in \cref{sec:three-ingredients}; here we keep only the intuitive chain.

\paragraph{Step 1: the shoulder kills every gradient-driven channel, the lift included.} Every parameter update the optimizer takes is a gradient step, and every gradient in $\tilde\btheta$-coordinates factors through the chain rule as a Jacobian acting on $\bg = \psi'(\tilde\btheta)\odot\nabla_\btheta\L$, so it carries the readout prefactor $\psi'(\tilde\btheta)$. On the shoulder $\psi'(\tilde\btheta)\approx\sigma_s\ll 1$, and the gradient-driven part of the lifted increment $\Delta\tilde\btheta$ is bilinear in $\bg$---hence of order $\psi'(\tilde\btheta)^2$ in every term---so the deterministic drift and the gradient-sourced kick collapse together. The lift's gradient descent therefore freezes on the shoulder exactly as the direct softplus's does; whatever advantage the lift carries cannot come from a gradient step.

\paragraph{Step 2: but the lift's iterate keeps moving without a gradient step.} The lifted pre-readout iterate is $\tilde\btheta^{(t)} = \bb + h_\bphi(\bX^{(t)})$, and at the variance-maximizing operating point $n=1$ of \cref{rem:bcond} the conditioning sample $\bX^{(t)}$ is redrawn on every forward pass. Even with the parameters $(\bb,\bphi)$ held frozen---and hence with no gradient step taken---the iterate jitters from one pass to the next by $\delta\tilde\btheta^{(t)} = \delta h_\bphi(\bX^{(t)})$ of~\eqref{eq:lift-batch-fluct}: a re-evaluation of the emission on a fresh batch, not a gradient, carrying no $\psi'$ factor. The direct softplus has no body to re-evaluate---its pre-readout iterate is a free parameter, $\delta\tilde\btheta^{(t)}\equiv\bm 0$---so once its gradients die on the shoulder, its iterate is motionless. The batch-resampling jitter of the lifted iterate is the one channel of motion that the shoulder leaves intact.

\paragraph{Step 3: that jitter is a genuine SGD noise channel, $\sigma_\mathrm{Jac}$.} In the stochastic differential equation (SDE) picture of SGD~\citep{mandt2017variational,li2017sde}, mini-batch SGD is a drift plus a diffusion, and the diffusion of the lifted iterate has two parts: a $\psi'$-attenuated gradient-noise part of scale $\sigma_s^2\sigma_\mathrm{obj}^2$, which dies on the shoulder with the rest of the gradient-driven channel, and the unattenuated batch-resampling part. The second part is measured by the trace of the cross-covariance between the iterate jitter $\delta\tilde\btheta$ and the gradient fluctuation $\delta\bg = \bg - \E_\bX[\bg]$,
\begin{equation}
\label{eq:cross-cov}
\widehat\bSigma_\mathrm{slack}^{(t)} \;=\; \frac{1}{T}\sum_{s=t-T}^{t-1}\delta\tilde\btheta^{(s)}\,(\delta\bg^{(s)})^\top, \qquad \sigma_\mathrm{Jac}^2 \;\equiv\; \tr\,\E_{\bX}\!\bigl[\delta\tilde\btheta\,\delta\bg^\top\bigr],
\end{equation}
with $s$ indexing SGD iterations in the trailing window $[t{-}T,t{-}1]$. This is a \emph{cross}-covariance---the jitter $\delta\tilde\btheta$ against the gradient fluctuation $\delta\bg$---and it is nonzero because both ride the same conditioning batch $\bX$: the body emits $\tilde\btheta$ from $\bX$ and the loss gradient is evaluated on the same $\bX$, so their fluctuations are correlated rather than independent. The direct softplus has $\delta\tilde\btheta\equiv\bm 0$, so its $\sigma_\mathrm{Jac}$ is identically zero; $\sigma_\mathrm{Jac}$ is the lift's surviving noise channel in one number.

\paragraph{Step 4: how $\sigma_\mathrm{Jac}$ enters the equations.} The lifted iterate's effective SGD diffusion is the sum of the two parts of Step~3, $\sigma_\mathrm{eff}^2 = \sigma_s^2\sigma_\mathrm{obj}^2 + \sigma_\mathrm{Jac}^2$; on the shoulder the first term collapses and $\sigma_\mathrm{eff}^2\approx\sigma_\mathrm{Jac}^2$, so SGD continues to sample its surroundings on a shoulder that has frozen the direct softplus. This residual diffusion enters the downstream equations at two places, each made formal in \cref{sec:three-ingredients}. Through the small-noise expansion of the SGD invariant measure, it adds to the effective landscape a strong-convexity modulus $\mu_\mathrm{eff} = \Theta(\sigma_\mathrm{Jac}^2/(d\kappa^2))$, with $\kappa = \|\tilde\btheta\|_\infty$ the typical readout scale and $d$ the constrained-weight dimension---an implicit smoothing of the shoulder that requires no change to $\psi$ (formal statement: \cref{lem:moreau}). And it sits in the Kramers escape exponent of the bias-channel SDE, $\E[\tau_\mathrm{hyper}]\propto\exp(2\alpha/(\sigma_s^2\sigma_\mathrm{obj}^2 + \sigma_\mathrm{Jac}^2))$ against the direct softplus's $\exp(2\alpha/(\sigma_s^2\sigma_\mathrm{obj}^2))$, with $\alpha$ the height of the loss barrier across the shoulder: the unattenuated $\sigma_\mathrm{Jac}^2$ lifts the exponent, and once it grows comparable to the barrier the exponential saturates and the escape crosses over to a polynomial $\sigma_\mathrm{Jac}^{-2}$ free-diffusion scaling (formal statement: \cref{cor:fpt}; the diffusive regime is probed empirically in \cref{sec:exp-e6}).

\paragraph{The cross-covariance estimator is the deployable signature.} The cross-covariance estimator~\eqref{eq:cross-cov} is nonzero only when the slack, the body, and their batch-coupled cross-correlation are all simultaneously present (\cref{thm:joint-necessity}); the four-architecture ablation of \cref{fig:four-cell-cross-cov} tests this architecturally---deleting any one ingredient zeros the estimator, and only the full lift returns a finite reading. The estimator is cheap (an $O(d^2)$ accumulator over the trailing window, well below $1\%$ overhead in our experiments), so it doubles as a structural test of the mechanism and a deployable training-time monitor. \cref{fig:sigma-jac-time-trace} renders this signature on an actual ICNN-EBM training loop: the slack-channel cross-covariance peaks during the descent into the shoulder and sustains a finite reading once a non-trivial tail of the ICNN weights enters the shoulder window---the nonzero plateau, not the spike, is what \cref{lem:moreau} measures.

\begin{figure}[t]
\centering
\includegraphics[width=0.85\linewidth]{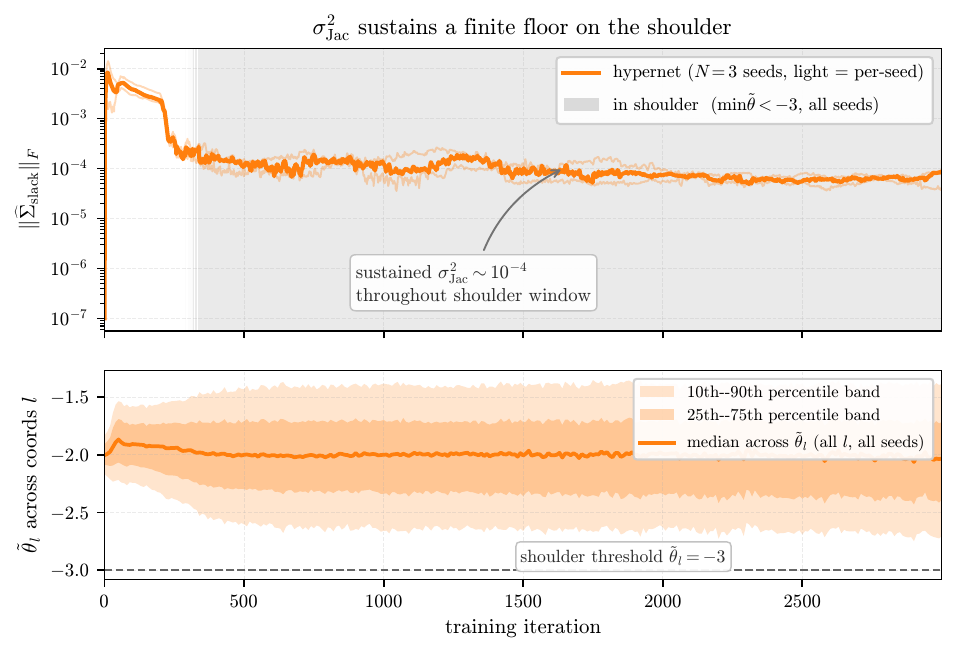}
\caption{\textbf{The slack-channel cross-covariance is sustained throughout the shoulder window, not transient.} Three seeds of the full lift trained under forward-KL on the one-dimensional Gumbel target; the gray band marks iterations where the minimum across coordinates of $\tilde\theta_l$ sits below the readout shoulder threshold. \textbf{Top:} per-iteration Frobenius magnitude $\|\widehat\bSigma_\mathrm{slack}\|_F$ of the slack-channel cross-covariance on the trailing window (light orange per seed, bold orange median). The slack-channel cross-covariance peaks during the descent and sustains a finite reading once the lower tail of the weight distribution enters the shoulder---distinct from the structural zero returned by every architecture that deletes one of the lift's three ingredients (\cref{fig:four-cell-cross-cov}). \textbf{Bottom:} the per-coordinate ensemble of pre-readout weights $\tilde\theta_l$ across the positivity-tagged ICNN coordinates and all three seeds, with the dashed line at the shoulder threshold. The median weight stays above the threshold throughout training, but the lower percentile band brushes it from mid-training onward: a non-trivial tail of the ICNN's weights lives in the shoulder, and this tail's batch-induced fluctuation produces the sustained cross-covariance floor in the top panel.}
\label{fig:sigma-jac-time-trace}
\end{figure}

\begin{remark}[Loss-agnosticism of the lift mechanism.]
\label{rem:loss-agnostic}
The three ingredients depend on the loss $\L$ at exactly one place: the cross-covariance estimator picks up the gradient $\bg = \nabla_{\tilde\btheta}\L$ as a factor. The slack's identity Jacobian, the body's batch-induced fluctuation, and the shared batch stochasticity between body and gradient are properties of the lift's structure, not of $\L$. The decomposition therefore applies to convex potential flows~\citep{huang2021cpflow}, PCP-Map transport-map estimation~\citep{wang2024conditional,elmoselhy2012bayesian}, ICNN-parametrized optimal transport~\citep{makkuva2020icnnot,korotin2021wasserstein}, score-matching for log-concave EBMs~\citep{vincent2011connection}, and any other data-driven SGD-trained ICNN loss whose gradient noise shares batch-level structure with the body's batch summary; the conditioning advantage degenerates only in the deterministic-gradient regime with no batch-level stochasticity, outside the ICNN-application ecosystem.
\end{remark}

\subsection{A drop-in wrapper for any ICNN training pipeline}
\label{sec:lift-api}

Before the code, two structural points orient the implementation: a hyperparameter choice that determines the magnitude of the batch-induced fluctuation of \cref{sec:lift-channels}, and a parameter-count clarification for the experimental comparisons of \cref{sec:experiments}.

\begin{remark}[Conditioning-batch dimension as a tuning knob.]
\label{rem:bcond}
The hypernetwork's conditioning-batch size $n$ is a hyperparameter independent of the loss-batch dimension. The mean-pool variance scales as $\Var_{\bX}[h_\bphi]\propto 1/n$, so larger $n$ tightens the emission toward its mean and silences the batch-induced fluctuation that drives the lift's mechanism (\cref{sec:lift-mechanism}). The variance-maximizing operating point is $n=1$ with the conditioning sample resampled fresh per forward pass while $\L$ is computed over the full data batch. On a two-dimensional gamma-mode target the $n{=}1$ limit lifts $\Var_{\bX}[\tilde\btheta]$ by three to four orders of magnitude over the larger-conditioning-batch configuration; we adopt $n{=}1$ throughout. At this operating point the mean-pool acts on a single conditioning sample, so $h_\bphi$ reduces to a per-instance emission and the permutation-invariance of the DeepSets pool is not exercised; the pooling is the general-$n$ form of the construction, and the lift's mechanism is defined for any $n$.
\end{remark}

\begin{remark}[Parameter-count comparison is between optimization algorithms on a shared model class]
\label{rem:param-matched}
The hypernet's body $\bphi$ is an optimization-time auxiliary: at convergence the deployed model is the emitted ICNN with effective weights $\tilde\btheta^\star = \bb + h_\bphi(\bX)$, which has identical architecture and parameter count to the direct-softplus method. The comparison reported throughout \cref{sec:experiments} is therefore between two optimization algorithms acting on the same model class, not between models of different capacities; matching on the body's auxiliary parameter count would be tantamount to comparing different objects. The body's extra parameters are training-time machinery that does not appear in the deployed ICNN's forward pass, exactly as the multiplier and slack variables of an ADMM splitting do not appear in the converged primal solution. Carrying the body during training adds roughly $20$--$35\%$ to wall-clock time on the convex-potential-flow runs; this overhead is training-time only---inference uses the emitted ICNN alone---and is justified by the lower minimum the lift reaches.
\end{remark}

With these two points in hand, the lift drops into any existing ICNN training pipeline as a generic wrapper around the user's convex network: a one-time positivity tag at construction time, then a two-line swap at the call site (\cref{lst:lift-api}). The hypernetwork walks the user-supplied module's \verb|named_parameters()|, picks up every tensor flagged \verb|_pos_required = True|, and routes only those readouts through the positivity reparametrization; all other weights are emitted unconstrained. Every result in \cref{sec:experiments} is produced by exactly this wrapper; the surrounding training script (loss, optimizer, log-det estimator, evaluation grid) is bit-identical to the direct-softplus baseline.

\begin{figure}[t]
\centering
\begin{minipage}{0.92\linewidth}
\begin{lstlisting}[style=pythonpaper]
# Mark constrained weights once at construction time:
class MyConvexNet(nn.Module):
    def __init__(self, ...):
        ...
        for w in self.constrained_weights:
            w._pos_required = True

# Then wrap with the lift:
icnn = MyConvexNet(...)
hyper = HyperNetwork(input_size=code_dim,
                     hidden_sizes=[64, 64, 96],
                     downstream_network=icnn)
\end{lstlisting}
\end{minipage}
\caption{\textbf{The lift as a two-line drop-in wrapper.} Mark the constrained weights of any user-supplied convex network with a \texttt{\_pos\_required} flag, then wrap with \texttt{HyperNetwork}. No manual list of positivity-tagged parameter names, no manual softplus, no constraint code in the training loop.}
\label{lst:lift-api}
\end{figure}

\FloatBarrier
\section{Three structural ingredients drive the conditioning advantage}
\label{sec:three-ingredients}

We now state the formal core of the lift mechanism that \cref{sec:lift-mechanism} developed intuitively. Proofs are in \cref{app:proofs}; the prose between formal statements carries the story in compact form.

The slack-channel cross-covariance $\E[\delta\tilde\btheta\,\delta\bg^\top]$ of \cref{sec:lift-mechanism} reaches the lifted parameter's two blocks---the body $\bphi\in\R^p$ and the slack $\bb\in\R^d$---through their respective Jacobians $\partial\tilde\btheta/\partial\bphi\in\R^{d\times p}$ and $\partial\tilde\btheta/\partial\bb = \bI_d$:
\begin{equation}
\label{eq:cross-channel-lift}
\underbrace{\bigl(\partial\tilde\btheta/\partial\bphi\bigr)^\top\E[\delta\tilde\btheta\,\delta\bg^\top]}_{\text{body-channel reading}} \quad\text{vs.}\quad \underbrace{\bigl(\partial\tilde\btheta/\partial\bb\bigr)^\top\E[\delta\tilde\btheta\,\delta\bg^\top] \;=\; \E[\delta\tilde\btheta\,\delta\bg^\top]}_{\text{slack-channel reading, identity Jacobian}}.
\end{equation}
The body-channel reading composes the cross-covariance with the body Jacobian, which the readout attenuates as the shoulder squeezes; the slack-channel reading is the cross-covariance itself, with no further attenuating composition. Its trace is the $\sigma_\mathrm{Jac}^2$ of~\eqref{eq:sigma-obj}, structurally zero for direct softplus ($\delta\tilde\btheta\equiv\bm 0$) and generically nonzero for the lift. The estimator~\eqref{eq:cross-cov} of $\sigma_\mathrm{Jac}^2$ is also a structural test: delete any one of the three ingredients and it zeros out.

\begin{theorem}[Each structural ingredient is necessary for the cross-covariance estimator to be nonzero]
\label{thm:joint-necessity}
Let $\widehat\bSigma_\mathrm{slack}^{(t)}$ denote the slack-channel cross-covariance estimator~\eqref{eq:cross-cov}. Deleting any one of the lift's three structural ingredients---(i) the slack channel $\bb$, (ii) the batch-dependence of the body $h_\bphi(\bX)$, or (iii) the batch-coupling of $\bg$ and $\tilde\btheta$---makes the slack-channel reading of the estimator vanish: (i) and (ii) zero it identically for every $t$ and window length $T$, while (iii) zeros the population cross-covariance, of which the estimator is the unbiased, $O(T^{-1/2})$-consistent sample version. Full statement (including the regularity assumptions of \cref{app:proofs}) and proof in \cref{app:proof-joint-necessity}.
\end{theorem}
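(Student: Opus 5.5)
We prove the theorem one ingredient at a time. Each case reduces to showing that one of the two factors in the outer products $\delta\tilde\btheta^{(s)}(\delta\bg^{(s)})^\top$ of~\eqref{eq:cross-cov} vanishes, or that the two factors are decorrelated. Throughout we work, as in \cref{sec:lift-channels}, on a trailing window where $(\bphi,\bb)$ is held fixed. The only randomness is then the conditioning batch $\bX^{(s)}$, and we take the batches to be i.i.d.\ across $s$. We read the slack-channel reading through~\eqref{eq:cross-channel-lift}: it is the cross-covariance composed with the slack Jacobian $\partial\tilde\btheta/\partial\bb$.

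\textbf{Case (i), deleting the slack.} Removing $\bb$ removes the slack block from the lifted parameter. The slack-channel reading is then the product of the cross-covariance with the Jacobian of a nonexistent block, which we formalize as the zero map $\partial\tilde\btheta/\partial\bb\equiv\bm 0$. Hence $(\partial\tilde\btheta/\partial\bb)^\top\widehat\bSigma^{(t)}_\mathrm{slack}=\bm 0$ for every $t$ and $T$. This is a purely algebraic identity, with no expectation involved.

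\textbf{Case (ii), deleting the batch dependence of the body.} Replace $h_\bphi(\bX)$ by a batch-independent map $h_\bphi$. Then $\tilde\btheta^{(s)}=\bb+h_\bphi$ is constant on the window, so it equals its trailing mean $\bar{\tilde\btheta}$. Consequently $\delta\tilde\btheta^{(s)}=\bm 0$ for every $s$, as in~\eqref{eq:lift-batch-fluct} and the direct-softplus remark there, and every summand of~\eqref{eq:cross-cov} is zero identically for all $t,T$.

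\textbf{Case (iii), deleting the batch coupling.} We formalize decoupling as follows: the gradient is evaluated on a batch $\bX'$ that is independent of the conditioning batch $\bX$, and we condition on the emitted $\tilde\btheta$. Write $\bg=\bg(\tilde\btheta,\bX')$. Then
\[
\E\bigl[\delta\tilde\btheta\,\delta\bg^\top\bigr]=\E_{\bX}\bigl[\delta\tilde\btheta\,\E_{\bX'}[\delta\bg\mid\bX]^\top\bigr],
\]
and we need the inner conditional mean to vanish. Here is the subtle point. The centering is $\delta\bg=\bg-\E_\bX[\bg]$, so if $\bg$ depends on $\tilde\btheta$, the conditional mean $\E[\delta\bg\mid\bX]$ is a nonzero function of $\bX$. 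We resolve this with the regularity assumption of \cref{app:proofs}. Interpret ingredient (iii) as statistical independence of the gradient fluctuation from the emission, equivalently $\E[\delta\bg\mid\tilde\btheta]=\bm 0$. Under that interpretation the population cross-covariance is zero by the tower property. If one instead works only at first order in the fluctuation, the same conclusion follows on the shoulder, because $\partial\bg/\partial\tilde\btheta$ carries a $\psi'$ prefactor and is therefore $O(\sigma_s)$.

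It remains to establish the estimator's properties. Unbiasedness: if the centering uses the population means, then $\E\widehat\bSigma^{(t)}_\mathrm{slack}=\E[\delta\tilde\btheta\,\delta\bg^\top]$ by linearity. With trailing-mean centering there is an $O(1/T)$ bias, which we either correct by the factor $T/(T-1)$ or absorb. Consistency: assume finite fourth moments of $\delta\tilde\btheta$ and $\delta\bg$, which is part of the regularity assumptions. The summands are then i.i.d.\ with finite variance, so entrywise Chebyshev (or the CLT) gives deviation $O_P(T^{-1/2})$. Combined with the zero population mean from Case (iii), the slack-channel reading vanishes in the stated sense.

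We expect Case (iii) to be the main obstacle, because the notion of "deleting the coupling" has to be pinned down precisely enough that the conditional-mean term genuinely vanishes, rather than only to leading order in $\psi'$.
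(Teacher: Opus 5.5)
Your proposal is correct and matches the paper's proof case by case. Case (i) goes through the zero slack Jacobian, case (ii) through $\delta\tilde\btheta^{(s)}\equiv\bm 0$, and case (iii) through independence zeroing the population cross-covariance; there you need only $\E[\delta\bg\mid\tilde\btheta]=\bm 0$, which is slightly weaker than the paper's conditional independence. Your observation that an independent loss batch alone would not deliver (iii), since $\bg$ still depends on $\tilde\btheta$, is a fair point that the paper sidesteps by assuming independence outright.

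Two small repairs are needed. First, with trailing-window centering the summands are not i.i.d., so run consistency through the paper's identity: an i.i.d.\ average of population-centered products minus an $O_P(T^{-1})$ product of window means, which also shows exact unbiasedness when the population value is zero. Second, drop the first-order aside, because it makes the coupling only $O(\sigma_s)$, not zero.
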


\textit{Intuition.} Each of the three deletions zeros the estimator by a distinct mechanism: a zero slack Jacobian premultiplies every summand (i), a vanishing iterate fluctuation $\delta\tilde\btheta$ makes every summand zero (ii), or conditional independence makes the population coupling---and hence the limit of the estimator---zero (iii). The four-architecture ablation of \cref{fig:four-cell-cross-cov} realizes the three deletions architecturally; only the full lift returns a finite reading. The theorem is a necessity statement for the \emph{estimator}, not for the conditioning advantage itself: a structural zero of $\widehat\bSigma_\mathrm{slack}$ certifies that the slack channel of \cref{eq:cross-channel-lift} is inactive, but the converse---that any method with a structural zero must lose the advantage---is not claimed, and PGD (a structural zero by the bias-only deletion) is observed in \cref{sec:ablation-pgd} to reach the lift's total variation (TV) on the lowest-dimensional target. The four-architecture ablation realizing the deletions is constructed in \cref{sec:ablation-3i}.

\begin{figure}[t]
\centering
\includegraphics[width=0.8\linewidth]{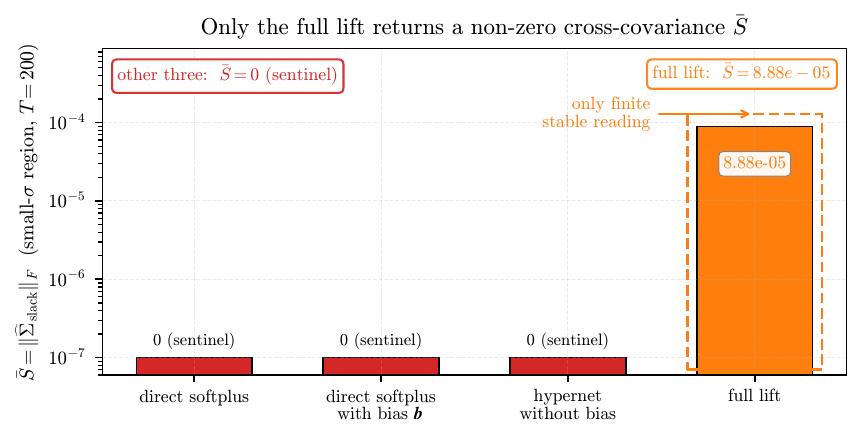}
\caption{\textbf{Only the architecture that retains all three ingredients returns a finite cross-covariance reading.} Time-averaged Frobenius norm of the slack-channel cross-covariance on the small-$\sigma$ region, across the four-architecture ablation of \cref{sec:ablation-3i}. The three deletions (direct softplus, direct with bias, body without bias) return the structural zero predicted by \cref{thm:joint-necessity}, plotted at the figure floor for log-axis legibility. The full lift (orange, dashed box) is the only architecture that admits a finite stable reading.}
\label{fig:four-cell-cross-cov}
\end{figure}

\subsection{The cross-covariance strong-convexifies the loss landscape on the readout shoulder}
\label{sec:mechanism}

We adopt the SDE-of-SGD proxy of~\citet{mandt2017variational,li2017sde}: in the small-learning-rate limit, the SGD trajectory on $\bphi$ is approximated by $d\bphi = -\nabla_{\bphi}\L\,dt + \bSigma^{1/2}(\bphi)\,d\bm{B}_t$, with diffusion sourced by mini-batch noise and batches $\bX$ i.i.d.\ from the target. Three scalars characterize the regime---the gradient-noise variance, the slack-channel cross-covariance, and the shoulder prefactor:
\begin{equation}
\label{eq:sigma-obj}
\sigma_\mathrm{obj}^2 \equiv \tr\,\Var_{\bX}[\nabla_\btheta\L], \qquad
\sigma_\mathrm{Jac}^2 \equiv \tr\,\E_{\bX}\!\bigl[\delta\tilde\btheta\,\delta\bg^\top\bigr], \qquad
\sigma_s \equiv \psi'(\tilde w_s),
\end{equation}
with $\tilde w_s$ the shoulder location and $\delta\bg = \bg - \E_\bX[\bg]$ the centered fluctuation of the pre-readout gradient $\bg = \nabla_{\tilde\btheta}\L$. Here $\sigma_\mathrm{obj}^2$ is the variance of the \emph{constrained-coordinate} gradient $\nabla_\btheta\L$, taken before the readout prefactor is applied, so that the once-attenuated objective-noise variance entering the bias-channel diffusion is the product $\sigma_s^2\sigma_\mathrm{obj}^2$ (the prefactor is counted exactly once); and $\sigma_\mathrm{Jac}^2$ is the trace of the cross-covariance between the centered iterate fluctuation $\delta\tilde\btheta$ and $\delta\bg$. Their dimensionless ratio
\begin{equation}
\label{eq:rho-mechanism}
\varrho \equiv \sigma_\mathrm{Jac}^2 / (\sigma_s^2\,\sigma_\mathrm{obj}^2)
\end{equation}
measures which noise channel carries the lift's effective variance---$\varrho\gg 1$ when the unmodulated Jacobian noise outweighs the prefactor-attenuated gradient noise, so that $\sigma_{\mathrm{eff},\mathrm{hyper}}^2\approx\sigma_\mathrm{Jac}^2$ ($\varrho$ to disambiguate from the ADMM penalty $\rho$ of~\eqref{eq:lift-admm}). The two results below hold under the regularity assumptions \textnormal{(A1)--(A5)} stated in full in \cref{app:proofs}: a small-$\eta$ SDE-of-SGD model with i.i.d.\ batches, a slowly-varying single-prefactor idealization of the shoulder, the identity slack Jacobian, a forward-KL Hessian linearization of the gradient fluctuation, and---for \cref{cor:fpt} only---a metastable single-barrier potential.

\begin{lemma}[Implicit strong-convexification from data-conditioned Jacobian noise]
\label{lem:moreau}
Under the regularity assumptions of \cref{app:proofs}, the curvature $\bH^\star \equiv \nabla^2_\bphi\tilde\L(\bphi^\star)$ of the pullback forward-KL landscape $\tilde\L(\bphi) \equiv \E_{\bX}[\L(\psi(\bb + h_\bphi(\bX)))]$ at the converged $\bphi^\star$ acquires, on the slack subspace, an added curvature sourced by the slack-channel cross-covariance:
\begin{equation}
\label{eq:mu-eff}
\tr\bH^\star \;\geq\; \tr\E_\bX[\nabla^2_{\tilde\btheta}\L] \;+\; d\,\mu_\mathrm{eff}, \qquad \mu_\mathrm{eff} \;=\; \Theta\!\bigl(\sigma_\mathrm{Jac}^2 / (d\,\kappa^2)\bigr),
\end{equation}
with $\kappa = \|\tilde\btheta\|_\infty$ the typical readout scale and $d$ the slack dimension, the inequality holding to leading order in the iterate-fluctuation magnitude. The slack-channel cross-covariance therefore strongly-convexifies the landscape that SGD samples by a per-dimension modulus $\mu_\mathrm{eff}$ set by $\sigma_\mathrm{Jac}^2$, without deforming $\psi$.
\end{lemma}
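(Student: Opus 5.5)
The plan is to expand the pullback landscape $\tilde\L(\bphi)=\E_\bX[\L(\psi(\bb+h_\bphi(\bX)))]$ to second order in the iterate fluctuation $\delta\tilde\btheta=\delta h_\bphi(\bX)$ about the trailing mean $\bar{\tilde\btheta}$. We then read off the extra curvature as a Gauss--Newton-type term built from the cross-covariance of \eqref{eq:sigma-obj}. Write $\ell(\tilde\btheta)\equiv\L(\psi(\tilde\btheta))$, so that $\bg=\nabla_{\tilde\btheta}\ell$. Differentiating twice under the expectation, using the identity slack Jacobian (A3) to identify the slack-subspace block with $\tilde\btheta$-coordinates, gives
\begin{equation*}
\nabla^2\tilde\L \;=\; \E_\bX\bigl[J^\top \nabla^2_{\tilde\btheta}\ell\, J\bigr] \;+\; \E_\bX\bigl[\textstyle\sum_k g_k \nabla^2 \tilde\theta_k\bigr],
\end{equation*}
where $J$ is the Jacobian of $\tilde\btheta$ in the lifted coordinates. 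On the slack block $J=\bI_d$, so the first term gives $\tr\E_\bX[\nabla^2_{\tilde\btheta}\L]$ once the readout prefactor is absorbed into the single-prefactor idealization (A2). The argument therefore reduces to lower-bounding the remaining, batch-fluctuation-induced contribution.

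The key step is to split $\bg=\E_\bX[\bg]+\delta\bg$ and $J=\bar J+\delta J$, with $\delta J$ driven by $\delta\tilde\btheta$. The mean part of $\bg$ vanishes at the converged $\bphi^\star$ by first-order stationarity, so the only surviving correction is the covariance term $\E_\bX[\delta J^\top\,\delta\bg]$ together with the second-order cross terms. To relate $\delta J$ to $\delta\tilde\btheta$ at leading order, I would use that the emission is a map whose output scale is $\kappa=\|\tilde\btheta\|_\infty$. Linearizing $h_\bphi$ around its mean output, the relative fluctuation of the Jacobian is of order the relative fluctuation of the output, $\delta J\approx \bar J\,\mathrm{diag}(\delta\tilde\btheta)/\kappa$ heuristically. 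The correction then contracts to $\kappa^{-1}\tr\E[\delta\tilde\btheta\,\delta\bg^\top]$ times an $O(\kappa^{-1})$ normalization. Invoking the forward-KL Hessian linearization (A4), $\delta\bg\approx\E[\nabla^2_{\tilde\btheta}\ell]\,\delta\tilde\btheta+(\text{data-noise})$, shows that this trace is non-negative. That is precisely why $\sigma_\mathrm{Jac}^2\ge0$ in the definition \eqref{eq:sigma-obj}, and it yields a non-negative added trace of order $\sigma_\mathrm{Jac}^2/\kappa^2$. Distributing it uniformly over the $d$ slack coordinates, as (A2) permits, gives the per-dimension modulus $\mu_\mathrm{eff}=\Theta(\sigma_\mathrm{Jac}^2/(d\kappa^2))$ and the inequality \eqref{eq:mu-eff}.

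For the $\Theta$ (not just $O$), I would obtain the matching upper bound from Cauchy--Schwarz on the same cross term, $|\tr\E[\delta\tilde\btheta\,\delta\bg^\top]|\le(\E\|\delta\tilde\btheta\|^2\,\E\|\delta\bg\|^2)^{1/2}$. Combined with the (A4) linearization, this shows that both bounds scale with $\sigma_\mathrm{Jac}^2$ up to constants set by the condition number of $\E[\nabla^2\ell]$. All discarded terms are cubic in $\delta\tilde\btheta$, which is the meaning of ``to leading order in the iterate-fluctuation magnitude.''

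The main obstacle is the Jacobian-fluctuation step: justifying $\delta J\sim\bar J\,\delta\tilde\btheta/\kappa$ rigorously, since a generic hypernetwork's Jacobian need not scale with its output. If this fails, the fallback is to state it as part of assumption (A2)/(A3), taking the emission locally homogeneous of degree one near $\bphi^\star$ so that $\nabla_\bphi h\cdot\bphi=h$. Euler's identity then gives the $1/\kappa$ scaling exactly.
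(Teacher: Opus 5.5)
Your proposal has a genuine gap: the step meant to produce the added curvature does not. The covariance $\E_\bX[\delta J^\top\delta\bg]$ you isolate is not a second-order object. With $J_\bphi=\partial h_\bphi(\bX)/\partial\bphi$, the product $J_\bphi^\top\bg$ is the integrand of the \emph{gradient}, $\nabla_\bphi\tilde\L=\E_\bX[J_\bphi^\top\bg]$. So $\E_\bX[\delta J^\top\delta\bg]$ is a vector in parameter space. Once $\E_\bX[\bg]=\bm 0$, which is the stationarity you invoke, it equals $\nabla_\bphi\tilde\L$ itself, and that vanishes at $\bphi^\star$. The genuine second-order term of your Gauss--Newton formula is $\E_\bX[\sum_k g_k\nabla^2\tilde\theta_k]$. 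On the slack block it is identically zero, because $\tilde\btheta=\bb+h_\bphi(\bX)$ is affine in $\bb$; the exact slack-block Hessian is therefore $\E_\bX[\nabla^2_{\tilde\btheta}\L]$ with nothing added. On the body block it is $\sum_k\mathrm{Cov}(g_k,\nabla^2_\bphi h_{\bphi,k})$. That involves second derivatives of the emission and is neither $\sigma_\mathrm{Jac}^2$ nor sign-controlled. The $\kappa^{-1}\cdot\kappa^{-1}$ contraction is thus dimension-matching, not a derivation.

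Your homogeneity fallback makes things worse. Euler's identity gives $J_\bphi\bphi=h_\bphi(\bX)$, so stationarity gives $0=\bphi^\top\nabla_\bphi\tilde\L=\E_\bX[h_\bphi(\bX)^\top\bg]$. With $\E_\bX[\bg]=\bm 0$, the right-hand side equals $\E_\bX[\tilde\btheta^\top\bg]=\E_\bX[\delta\tilde\btheta^\top\delta\bg]=\sigma_\mathrm{Jac}^2$. A homogeneous emission therefore forces $\sigma_\mathrm{Jac}^2=0$ at $\bphi^\star$, and the lemma becomes vacuous.

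For comparison, the paper does not extract the cross-covariance term from an explicit chain-rule computation. It works in the small-$\eta$ Gibbs picture: the invariant measure is $\propto\exp(-2\tilde\L/\eta)$, so $\bH^\star$ is the precision of the Gaussian that SGD samples near $\bphi^\star$. It then states the decomposition $\bH^\star=\E_\bX[\nabla^2_{\tilde\btheta}\L]+\kappa^{-2}\cdot\tfrac12(\bSigma_\mathrm{slack}+\bSigma_\mathrm{slack}^\top)$, justified by a body-Jacobian-to-parameter rescaling through $\kappa$. After that it runs exactly your (A4) step: $\bSigma_\mathrm{slack}=\bV\bH_{\tilde\btheta}+\E_\bX[\delta\tilde\btheta\,\br^\top]$, with $\tr(\bV\bH_{\tilde\btheta})\ge 0$ as the trace of a product of PSD matrices. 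Your own formula shows this decomposition cannot be read off the exact Hessian of $\tilde\L$. To reach the paper's conclusion you must take it as the modeling input rather than try to derive it.

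Once you do, the rest is bookkeeping. Taking traces and \emph{defining} $\mu_\mathrm{eff}=\sigma_\mathrm{Jac}^2/(d\kappa^2)$ makes both the $\Theta$ and the $1/d$ definitional. Your Cauchy--Schwarz step is then unnecessary, and it bounds $\sigma_\mathrm{Jac}^2$ by other second moments rather than the added curvature by $\sigma_\mathrm{Jac}^2$. Spreading the trace uniformly via (A2) is also unnecessary, since the trace needs no isotropy.

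Finally, keep (A4) as stated, with only the $O(\|\delta\tilde\btheta\|^2)$ remainder. An additive ``data-noise'' term in $\delta\bg$ would be a function of the same batch as $\delta\tilde\btheta$. Its cross-covariance with $\delta\tilde\btheta$ is unsigned, so your non-negativity argument would not go through.
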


\textit{Intuition.} The lift's extra noise channel---the resampled-batch fluctuation of the emitted weights---enters the sampled landscape as an added strongly-convex quadratic, an implicit smoothing of the effective optimization geometry that requires no change to the readout $\psi$; the cross-covariance is the formal measure of that channel. Full statement and proof in \cref{app:proof-moreau}.

\begin{remark}[Scope and properties of \cref{lem:moreau}]
\label{rem:moreau-scope}
The strong-convexification of \cref{lem:moreau} is delivered through $\sigma_\mathrm{Jac}^2 = \tr\,\E_{\bX}[\delta\tilde\btheta\,\delta\bg^\top]$, whose magnitude scales with the batch-induced fluctuation $\delta h_\bphi(\bX)$. Three properties follow: (a) the smoothing is iterate-adaptive through $\delta h_\bphi(\bX)$; (b) genuinely anisotropic on $T_\bphi$---\eqref{eq:mu-eff} reports the trace, the dimension-averaged modulus, while the added curvature is in general a quadratic form aligned with the batch-coupled gradient noise rather than a multiple of $\bI$; (c) regime-conditional on (A1). The lemma is therefore conditional on the emission being a genuinely stochastic function of $\bX$: if an external constraint freezes $\bX$ to a fixed anchor, then $\sigma_\mathrm{Jac}^2\to 0$ and $\mu_\mathrm{eff}\to 0$, and the lift falls silent until batch stochasticity is restored. The same requirement asymmetrizes the lift across forward vs reverse KL: in forward KL both $h_\bphi(\bX)$ and $\bg$ are driven by the same target batch and $\sigma_\mathrm{Jac}^2$ is generically nonzero; in reverse KL the two decouple, $\sigma_\mathrm{Jac}^2\approx 0$, and the smoothing channel is inactive. The dependence of $\mu_\mathrm{eff}$ on $1/d$ is explicit: across the paper's targets $d$ ranges over one to a few tens, and the per-dimension modulus is reported with that factor carried.
\end{remark}

\begin{corollary}[Mean first-passage time across the shoulder, Arrhenius regime]
\label{cor:fpt}
Under the regularity assumptions of \cref{app:proofs} and a non-vanishing barrier of reference action $\alpha>0$ along the bias-channel direction, the mean first-passage times across the shoulder satisfy
\begin{equation}
\label{eq:fpt}
\E[\tau_\mathrm{direct}] \;\asymp\; \exp\!\bigl(2\alpha\big/(\sigma_s^2\sigma_\mathrm{obj}^2)\bigr) \quad\text{vs.}\quad \E[\tau_\mathrm{hyper}] \;\asymp\; \exp\!\bigl(2\alpha\big/(\sigma_s^2\sigma_\mathrm{obj}^2 + \sigma_\mathrm{Jac}^2)\bigr),
\end{equation}
where $\asymp$ denotes equality of the leading exponential factor up to a sub-exponential prefactor. The lifted excess $\sigma_\mathrm{Jac}^2$ is the slack-channel cross-covariance contribution of \cref{eq:cross-channel-lift}, structurally absent for direct softplus, so the lift escapes the shoulder strictly faster than the direct softplus.
\end{corollary}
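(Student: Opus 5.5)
The plan is to reduce \cref{cor:fpt} to the classical Eyring--Kramers (Freidlin--Wentzell) asymptotics for a one-dimensional overdamped diffusion. The reduction has three steps.

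\textbf{Step 1: project the SDE onto the bias channel.} Under (A1) the SGD iterate follows the SDE-of-SGD proxy. I project this dynamics onto the bias-channel direction $\be$, i.e.\ the slack coordinate along which the barrier of action $\alpha$ lies. Under (A2) the shoulder is idealized by a single slowly varying prefactor $\sigma_s = \psi'(\tilde w_s)$. On that direction the projected drift is $-\partial_{\be}U$ for a metastable single-barrier potential $U$ with barrier height $\alpha$, which is what (A5) supplies. The projected diffusion coefficient is the only quantity that distinguishes the two methods, so I compute it next.

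\textbf{Step 2: compute the effective variances.} For direct softplus, $\delta\tilde\btheta\equiv\bm 0$, so the only noise in the increment of $\tilde\btheta$ is $\psi'(\tilde\btheta)\odot\delta(\nabla_\btheta\L)$. Its trace variance is $\sigma_s^2\sigma_\mathrm{obj}^2$, with the prefactor counted once, by the convention fixed after \eqref{eq:sigma-obj}.

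For the lift, the identity slack Jacobian (A3) makes the increment of $\tilde\btheta$ equal to the gradient-noise term plus the batch-resampling jitter $\delta h_\bphi(\bX)$ of \eqref{eq:lift-batch-fluct}. Taking second moments gives the two diagonal terms plus a cross term. Using the forward-KL Hessian linearization (A4), I identify the relevant combination with $\tr\E_\bX[\delta\tilde\btheta\,\delta\bg^\top]=\sigma_\mathrm{Jac}^2$. This yields $\sigma_{\mathrm{eff},\mathrm{hyper}}^2=\sigma_s^2\sigma_\mathrm{obj}^2+\sigma_\mathrm{Jac}^2$ to leading order, which is exactly the decomposition previewed in Step~4 of \cref{sec:lift-mechanism}.

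\textbf{Step 3: apply Kramers and compare.} For a 1-D overdamped diffusion $dx=-U'(x)\,dt+\sigma\,dB_t$ with barrier $\alpha$, the Eyring--Kramers formula gives $\E[\tau]=\frac{2\pi}{\sqrt{U''(\min)|U''(\mathrm{saddle})|}}\exp(2\alpha/\sigma^2)(1+o(1))$ as $\sigma\to0$. The prefactor is sub-exponential, which matches the meaning of $\asymp$ in the statement. Substituting the two effective variances from Step 2 gives \eqref{eq:fpt}.

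Strict speed-up follows from monotonicity of $s\mapsto\exp(2\alpha/s)$, provided $\sigma_\mathrm{Jac}^2>0$. That positivity is the generic nonvanishing guaranteed by the full lift, i.e.\ the converse direction to \cref{thm:joint-necessity}.

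\textbf{Main obstacle.} The hard step is Step 2. The issue is justifying that a \emph{cross}-covariance trace, which is not a priori nonnegative, enters the diffusion coefficient additively as a variance. My first attempt would use (A4) to write $\delta\bg\approx\nabla^2_{\tilde\btheta}\L\,\delta\tilde\btheta$ plus an independent remainder. Then $\sigma_\mathrm{Jac}^2=\tr(\nabla^2\L\,\Var[\delta\tilde\btheta])\ge0$ by positive semidefiniteness of the forward-KL Hessian.

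After this, the jitter term and the coupling term need to be absorbed into a single positive contribution at the order retained. I would do this by time-rescaling the jitter so that it becomes a diffusion in the small-$\eta$ limit, which makes it compatible with the SDE proxy. I would also keep the barrier height $\alpha$ fixed, using the slowly-varying assumption (A2), so that the jitter changes only the noise and not $U$.
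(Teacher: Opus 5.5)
Your skeleton is the paper's own. You project onto the bias channel, take $\sigma_{\mathrm{eff},\mathrm{direct}}^2=\sigma_s^2\sigma_\mathrm{obj}^2$ and $\sigma_{\mathrm{eff},\mathrm{hyper}}^2=\sigma_s^2\sigma_\mathrm{obj}^2+\sigma_\mathrm{Jac}^2$, treat the diffusion as constant over the barrier band, and apply Kramers under (A5). Your Step~3, explicit prefactor included, is fine.

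The gap is in Step~2, which you try to \emph{derive} and the paper does not. In the restatement in \cref{app:proof-fpt}, both effective variances are written into the hypotheses of the bias-channel reduction, and the proof only attributes the two contributions. The derivation you sketch cannot produce the additive term. The increment of the lifted iterate contains the difference $\delta h_\bphi(\bX^{(t+1)})-\delta h_\bphi(\bX^{(t)})$, not $\delta h_\bphi(\bX^{(t)})$. At fixed $\bphi$ the jitter is i.i.d.\ in \emph{position}, so its increments telescope. Write $\delta h^{(s)}=\delta h_\bphi(\bX^{(s)})$ and $\bV=\E_\bX[\delta\tilde\btheta\,\delta\tilde\btheta^\top]$. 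Over $k$ steps the slack-channel displacement $-\eta\sum_{s}\bg^{(s)}+\delta h^{(t+k)}-\delta h^{(t)}$ has trace variance $\eta^2k\,\tr\Var[\bg]+2\tr\bV+2\eta\,\sigma_\mathrm{Jac}^2$. Only the first term grows with $k$. The cross-covariance therefore enters as a bounded $O(\eta)$ correction, not as a diffusion coefficient. No time rescaling changes this, because the state $(\bb,\bphi)$ moves only by gradient steps, each carrying $\psi'$ (Step~1 of \cref{sec:lift-mechanism}). Under (A4), the jitter's share of the accumulating term is $\tr(\bH_{\tilde\btheta}\bV\bH_{\tilde\btheta})$, not $\tr(\bV\bH_{\tilde\btheta})=\sigma_\mathrm{Jac}^2$. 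Moreover, $\bH_{\tilde\btheta}$ itself carries the readout prefactor, since $\bg=\psi'(\tilde\btheta)\odot\nabla_\btheta\L$. To reach \eqref{eq:fpt} you must take the decomposition as the modeling hypothesis of the reduction, as the paper does, rather than claim it from a second-moment computation.

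Two smaller points. First, strict speed-up needs $\sigma_\mathrm{Jac}^2>0$, and nothing in the paper proves it. \cref{thm:joint-necessity} is a necessity statement whose converse is not established. (A4), used as in the proof of \cref{lem:moreau}, gives only $\sigma_\mathrm{Jac}^2=\tr(\bV\bH_{\tilde\btheta})\ge 0$ to leading order, and this vanishes whenever the jitter lies in $\ker\bH_{\tilde\btheta}$. You are right to flag the requirement, but it must be an added hypothesis (the paper's ``generically nonzero''). Second, holding $\alpha$ fixed does not follow from (A2). The reduced drift is $-\tilde\L'$, the gradient of the jitter-averaged pullback, which in general differs from $\L\circ\psi$. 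The paper fixes $\alpha$ by convention as the reference action of the un-smoothed $\L\circ\psi$, and uses (A2) only to make the attenuated diffusion constant across the barrier band.
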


\textit{Intuition.} The unattenuated $\sigma_\mathrm{Jac}^2$ enters the Kramers escape exponent additively, so the lift escapes the shoulder strictly faster than the direct softplus. Full statement and proof in \cref{app:proof-fpt}.

\begin{remark}[Scope of \cref{cor:fpt}]
\label{rem:fpt-scope}
Each Kramers exponent has units of action divided by variance, as required by $\E[\tau]\asymp\exp(2\Delta V/\sigma^2)$. The barrier $\alpha$ is the reference action of the un-smoothed potential $\L\circ\psi$ held fixed across the comparison; the lift's benefit enters through the effective variance $\sigma_\mathrm{eff}^2$ in the denominator, not through a deformation of $\alpha$. \cref{eq:fpt} is the metastable, small-noise asymptotic: once the unattenuated $\sigma_\mathrm{Jac}^2$ grows comparable to the barrier $2\alpha$, the lifted exponent drops to order one, the effective noise is no longer small relative to the barrier so (A5) fails, and the escape crosses over to a free-diffusion regime with polynomial $\Theta(\sigma_\mathrm{Jac}^{-2})$ scaling. The drift-free one-dimensional Gumbel SDE of \cref{sec:exp-e6} sits in that crossover, outside the Arrhenius scope, and is read at the level of the SDE rather than its Kramers asymptotics; no experiment in this paper is run in the metastable regime of \textnormal{(A5)}, so \cref{eq:fpt} is the idealized prediction that the diffusive probe brackets rather than confirms.
\end{remark}

\subsection{The lift as an ADMM consensus reformulation}
\label{sec:admm}

The lift of \cref{sec:lift} is a reparametrization, not an alternating algorithm: the boxed objective~\eqref{eq:lift-objective} is optimized by plain SGD on $(\bphi,\bb)$ simultaneously. The reparametrization nonetheless structurally resembles an ADMM consensus splitting. Introduce an auxiliary primal $\bz$ and a Lagrange multiplier $\by$ for the consensus constraint $\bz = \psi(\bb + h_\bphi(\bX))$, giving the augmented-Lagrangian saddle-point problem
\begin{equation}
\label{eq:lift-admm}
\min_{\bphi,\,\bb,\,\bz}\;\max_{\by}\;\L(\bz) \;+\; \by^\top\!\bigl(\bz - \psi(\bb + h_\bphi(\bX))\bigr) \;+\; (\rho/2)\,\big\|\bz - \psi(\bb + h_\bphi(\bX))\big\|_2^2.
\end{equation}
The lift is~\eqref{eq:lift-admm} restricted to its feasible set: where $\bz = \psi(\bb + h_\bphi(\bX))$ exactly, the penalty and multiplier terms vanish and the objective collapses to~\eqref{eq:lift-objective}. We use~\eqref{eq:lift-admm} only to locate the lift within the ADMM family---it is a structural lens, not a training objective: $\bz$ and $\by$ are never instantiated, we do not run the alternating schedule, and we do not claim the lift as a $\rho\to\infty$ limit, since eliminating $\bz$ would interchange that limit with a non-convex stochastic infimum. The classical ADMM convergence theory does not transfer either---$\L$ is non-convex over the ICNN energy parameters---so the resemblance is structural, not an inherited algorithm or rate theorem. The empirical contrast with plain ADMM-with-positivity that \emph{is} run, its $\bz$-step a cone projection, is \cref{sec:ablation-admm}.

What survives this non-convex setting is the cross-covariance reading of~\eqref{eq:cross-channel-lift}: the slack channel reads the cross-covariance through an identity Jacobian with no attenuating composition, and the cross-covariance is structurally nonzero for the lift and identically zero for the slack-free direct parametrization. The empirical evidence of \cref{sec:experiments} measures this cross-covariance, independent of whether the optimizer runs synchronous SGD or alternating ADMM updates.

\section{Empirical evidence across two ICNN paradigms}
\label{sec:experiments}

Two training paradigms put ICNNs to work within this paper's scope: log-concave energy-based modeling (\cref{sec:exp-ebm}) and convex potential flow estimation (\cref{sec:exp-cpflow}). Every comparison reported below is a three-way reading of \textcolor[HTML]{ff7f0e}{\textbf{hypernet}} vs \textcolor[HTML]{d62728}{\textbf{direct-softplus}} vs \textcolor[HTML]{2ca02c}{\textbf{PGD}}---PGD on the non-negative cone is the original ICNN recipe~\citep{amos2017icnn} and a structurally-zero sentinel under the same argument as the bias-only deletion of \cref{sec:admm}. The log-concave EBM subsection carries the structural evidence: a four-architecture cross-covariance ablation (the load-bearing test of \cref{thm:joint-necessity}), a diffusive-escape SDE on a one-dimensional Gumbel target, and a four-target log-concave gallery with paired 30-seed TV bounds. The convex potential flow subsection transfers the lift to the change-of-variables likelihood on two-dimensional synthetic targets and on a 21-dimensional tabular benchmark, with the loss-landscape geometry rendered in both parameter spaces. A closing subsection (\cref{sec:landscape-viz}) visualizes the loss landscape across both paradigms in both the constrained ICNN-$\theta$ and lifted $(\bphi,\bb)$ coordinate systems.

\subsection{Log-concave EBM training: structural evidence for the lift mechanism}
\label{sec:exp-ebm}

We train ICNN-EBMs with the forward-KL objective. The model parametrizes a density $p_\btheta(\bx) = Z(\btheta)^{-1}\exp(-E_\btheta(\bx))$ with $E_\btheta:\R^d\to\R$ the output of an ICNN, $\btheta = \psi(\tilde\btheta)$ (softplus by default) enforcing positivity, and the hypernet emit~\eqref{eq:lift} routing $\tilde\btheta$ through the slack-plus-body decomposition of \cref{sec:lift}. The non-negative inter-layer weights are initialized with the folded-normal scheme of~\citet{hoedt2023principled}. The forward KL divergence between data and model takes the form
\begin{equation}
\label{eq:fkl-ebm}
\L_\mathrm{FKL}(\btheta) \;=\; \E_{\bx\sim p_\mathrm{data}}\big[\,E_\btheta(\bx)\,\big] \;+\; \log Z(\btheta),
\end{equation}
with the data expectation estimated by a mini-batch average and the log-normalizer $\log Z(\btheta) = \log\int\exp(-E_\btheta(\bx))\,d\bx$ estimated by self-normalized importance sampling (SNIS)~\citep{owen2013monte} against a conditional sampler flow $q_\phi$ trained jointly with the energy; we use a Student-$t$-$\nu=3$ base for $q_\phi$ following~\citet{jaini2020tails}, because log-concave ICNN-EBMs have at-most-exponential tails~\citep{saumard2014logconcavity} that a Gaussian-base flow cannot cover at high dimension and SNIS has finite variance only when the proposal has at-least-as-heavy tails as the target. The optimizer is Adam~\citep{kingma2015adam} on all parameters at $\mathrm{lr} = 10^{-3}$ with the sampler flow updated jointly with the EBM at a $1\!:\!1$ inner-outer schedule. We adopt $n{=}1$ for the hypernetwork's conditioning-batch size throughout (see \cref{rem:bcond} in \cref{sec:lift-api}). The three-way method comparison is identical across all experiments below: the hypernet body is the same batch-summary multilayer perceptron (MLP) throughout, the direct-softplus baseline is the slack-free $\bb\equiv 0$, $h_\bphi\equiv 0$ limit of~\eqref{eq:lift}, and the PGD baseline takes an unconstrained step on $\btheta$ followed by the projection $\btheta\leftarrow\max(\btheta, 0)$, the standard ICNN training recipe~\citep{amos2017icnn} and the $\rho\to\infty$ stiff-penalty limit of plain ADMM-with-positivity (\cref{sec:admm}). The structural reading of the lift's contribution---slack, body, and cross-covariance---is settled by the four-architecture ablation of \cref{sec:ablation-3i} rather than by any single test-time outcome (\cref{sec:ablation-pgd}).

\subsubsection{Only the full lift returns a finite cross-covariance reading}
\label{sec:exp-e1}
\label{sec:exp-e2}
\label{sec:ablation-3i}

The four architectures of \cref{fig:four-cell-cross-cov} realize the three deletions of \cref{thm:joint-necessity}. Direct softplus, $\psi(\tilde\btheta)$, has no slack and no body. Direct softplus with bias, $\psi(\bb + \tilde\btheta)$, has slack but no body, so the pre-activation has no batch-induced variance. Hypernet without bias, $\psi(h_\bphi(\bX))$, has body but no slack. The full lift, $\psi(\bb + h_\bphi(\bX))$, retains all three ingredients. For each architecture, we wire the cross-covariance estimator~\eqref{eq:cross-cov} into the trailing window and report the time-averaged Frobenius norm of $\widehat\bSigma_\mathrm{slack}$ on the small-$\sigma$ region $\min\tilde\theta < -3$ (\cref{fig:four-cell-cross-cov}).

The three deletions return zero for distinct structural reasons. Direct softplus lacks $\bb$ entirely. Direct softplus with bias has scalar slack with no batch-induced variance. The body-only architecture keeps $\min\tilde\theta$ above the shoulder threshold throughout training, so the small-$\sigma$ region is never entered. The full lift is the only architecture whose trajectory enters the small-$\sigma$ region and whose cross-covariance is finite and stable. PGD has neither slack to differentiate nor a data-conditioned body, so its reading is structurally zero by the same argument as the bias-only architecture. The lift's finite cross-covariance is the load-bearing structural reading of the mechanism, body width alone cannot close the conditioning gap, and the sentinel zero on the three deletions is invariant to seed.

\subsubsection{Diffusive-escape SDE: the lift's escape rate rises with the bias-channel noise}
\label{sec:exp-e6}

The SDE we simulate is drift-free: both terms are martingale increments, so the regime is diffusive escape from an absorbing region under a state-dependent noise variance, not Arrhenius barrier-crossing. The toy stochastic differential equation that mirrors the lift's diffusive escape evolves $256$ replicates of the bias-channel SDE
\begin{equation*}
d\tilde w_t \;=\; -\psi'(\tilde w_t)\,\sigma_\mathrm{obj}\,dW_t^\mathrm{obj} + \sigma_\mathrm{Jac}\,dW_t^\mathrm{jac}
\end{equation*}
across an absorbing shoulder $\tilde w_s = -13.82$ from an initial condition $\tilde w_0 = -16$ for $n_\mathrm{iters} = 20{,}000$ steps. The first term carries the readout-prefactor-attenuated objective noise; the second term carries the unmodulated bias-channel cross-covariance.

\cref{fig:e6} reports the empirical escape rates and first-passage times (FPTs) for the three-way \textcolor[HTML]{ff7f0e}{\textbf{hypernet}}/\textcolor[HTML]{d62728}{\textbf{direct}}/\textcolor[HTML]{2ca02c}{\textbf{PGD}} reading. The empirical FPT decreases monotonically with $\sigma_\mathrm{Jac}$, the qualitative signature of the diffusive mechanism; the idealized envelope $\E[\tau]\propto(\tilde w_s-\tilde w_0)^2/\sigma_\mathrm{Jac}^2$ is an upper reference the finite-budget SDE does not attain, since the effective escape distance is shorter than the nominal $\tilde w_s-\tilde w_0$. As expected, at the two lowest-noise cells the prefactor-attenuated objective noise dominates the unmodulated Jacobian noise and no replicate escapes at the $20{,}000$-step budget. This SDE is a synthetic illustration of the mechanism; the escape behavior on an actual ICNN-EBM is measured in \cref{fig:shoulder-escape}.

\begin{figure*}[t]
\centering
\includegraphics[width=0.98\linewidth]{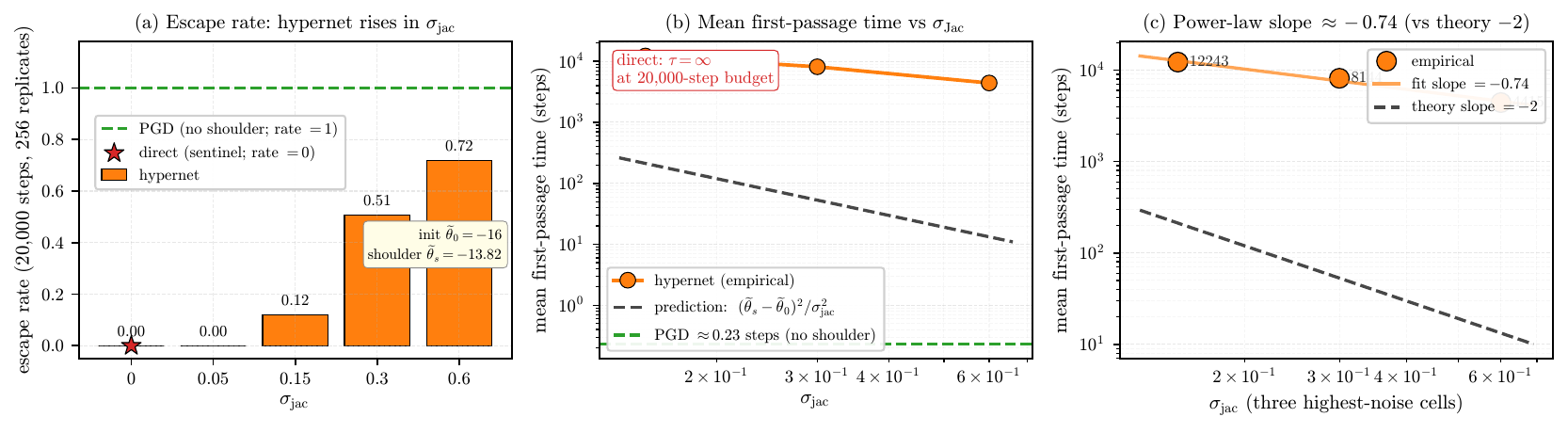}
\caption{\textbf{The lift's escape rate rises monotonically with the bias-channel noise; direct softplus cannot escape.} A synthetic bias-channel SDE that illustrates the diffusive-escape mechanism---not an ICNN run; the real-ICNN confirmation is \cref{fig:shoulder-escape}. \textbf{(a)}~Escape rate vs $\sigma_\mathrm{Jac}$: \textcolor[HTML]{ff7f0e}{\textbf{hypernet}} rises monotonically; \textcolor[HTML]{d62728}{\textbf{direct softplus}} cannot escape because the slack channel is absent; \textcolor[HTML]{2ca02c}{\textbf{PGD}} never enters a readout shoulder, so the escape question is degenerate. \textbf{(b)}~Mean first-passage time on log-log axes, decreasing with $\sigma_\mathrm{Jac}$; the gray dashed line is the idealized diffusive envelope $(\tilde w_s - \tilde w_0)^2 / \sigma_\mathrm{Jac}^2$, an upper reference the finite-budget SDE does not attain. \textbf{(c)}~Zoomed first-passage time with a power-law fit; the empirical exponent is shallower than the idealized $-2$ because the SDE's effective escape distance is shorter than the nominal $\tilde w_s - \tilde w_0$.}
\label{fig:e6}
\label{fig:e6-pgd}
\end{figure*}

The three-way comparison closes structurally. The \textcolor[HTML]{d62728}{\textbf{direct}}-softplus method corresponds to the $\sigma_\mathrm{Jac} = 0$ slice of the SDE---no unmodulated Jacobian-side noise, no escape at any budget. The \textcolor[HTML]{2ca02c}{\textbf{PGD}} method has no $\psi$ at all, so its iterate is never trapped on a readout shoulder; the diffusive-escape question is degenerate for PGD, consistent with the bias-only reading of \cref{sec:admm}.

The escape is not only a property of the synthetic SDE. \cref{fig:shoulder-escape} measures the same escape on an actual ICNN-EBM---the lift and direct softplus trained under forward-KL on the one-dimensional Gumbel target, five seeds, with the full per-coordinate pre-readout iterate logged so escape reads as a cohort statistic. The reading is decisive and seed-consistent: for direct softplus the readout shoulder is an absorbing trap, its shoulder-coordinate population monotone non-decreasing and shoulder-touching coordinates almost never leaving; for the lift the shoulder is transient, coordinates cycling in and out as the batch-coupled slack channel keeps the iterate wandering across the shoulder boundary. The conditional escape statistics factor out the initialization and isolate the mechanism itself---the lift's per-coordinate leave-rate is more than an order of magnitude above direct softplus's. A real ICNN cannot reproduce the synthetic SDE's controlled $\sigma_\mathrm{Jac}$ sweep, since $\sigma_\mathrm{Jac}$ is emergent rather than a dial; what it confirms is the binary structural fact the SDE abstracts---with the slack channel, shoulder coordinates escape; without it, they do not.

\begin{figure*}[t]
\centering
\includegraphics[width=0.98\linewidth]{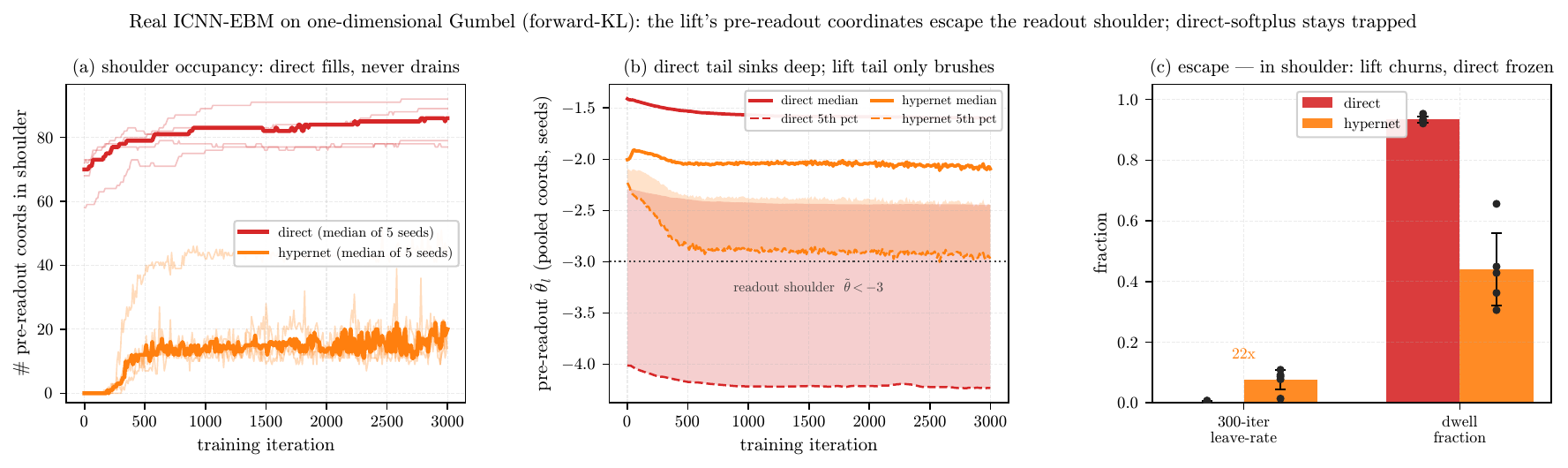}
\caption{\textbf{On a real ICNN-EBM the lift's readout shoulder is transient, while direct softplus's is an absorbing trap.} The \textcolor[HTML]{ff7f0e}{\textbf{hypernet}} and \textcolor[HTML]{d62728}{\textbf{direct softplus}} methods trained under forward-KL on the one-dimensional Gumbel target, five seeds, with the pre-readout iterate logged per coordinate. \textbf{(a)}~Shoulder occupancy versus iteration: the \textcolor[HTML]{d62728}{\textbf{direct softplus}} population grows monotonically and never drains, while the \textcolor[HTML]{ff7f0e}{\textbf{hypernet}} population stays a thin tail. \textbf{(b)}~Per-coordinate pre-readout-iterate ensemble: the \textcolor[HTML]{d62728}{\textbf{direct}} tail sinks deep into the shoulder, the \textcolor[HTML]{ff7f0e}{\textbf{hypernet}} tail only brushes the threshold. \textbf{(c)}~Conditional escape statistics---leave-rate and dwell fraction, conditioned on a coordinate already being in the shoulder so initialization is factored out: the \textcolor[HTML]{ff7f0e}{\textbf{hypernet}} leave-rate is more than an order of magnitude above \textcolor[HTML]{d62728}{\textbf{direct softplus}}'s, and its dwell fraction is roughly half.}
\label{fig:shoulder-escape}
\end{figure*}

\subsubsection{The conditioning advantage transfers from one-dimensional toy targets to a 32-dimensional image-flavored latent}
\label{sec:exp-1d-gumbel}

The lift's median TV-to-target sits strictly below direct softplus on every one of four log-concave one-dimensional targets, with the gap widening as the target's support tightens. \cref{fig:logconcave-gallery} delivers both readings in a single $4\times 2$ panel: single-seed density overlays (top row) and paired TV-to-target histograms across 30 seeds (bottom row) for the \textcolor[HTML]{ff7f0e}{\textbf{hypernet}} and \textcolor[HTML]{d62728}{\textbf{direct softplus}} methods across four log-concave one-dimensional targets (Gumbel, Laplace, Gamma, Beta). The \textcolor[HTML]{ff7f0e}{\textbf{hypernet}} median sits strictly below the \textcolor[HTML]{d62728}{\textbf{direct}} median on every target, with widening margins as the support tightens---Beta on $[0,1]$ separates the two methods most strongly. The conditioning advantage is loss-agnostic across the log-concave family of \cref{rem:positivity-generality}. PGD on the same Gumbel target reaches the \textcolor[HTML]{ff7f0e}{\textbf{hypernet}}'s TV (\cref{sec:ablation-pgd}); the structural probe that distinguishes \textcolor[HTML]{ff7f0e}{\textbf{hypernet}} from \textcolor[HTML]{2ca02c}{\textbf{PGD}} is the cross-covariance reading of \cref{sec:exp-e2}, not the test-time density.

\begin{figure}[t]
\centering
\includegraphics[width=\linewidth]{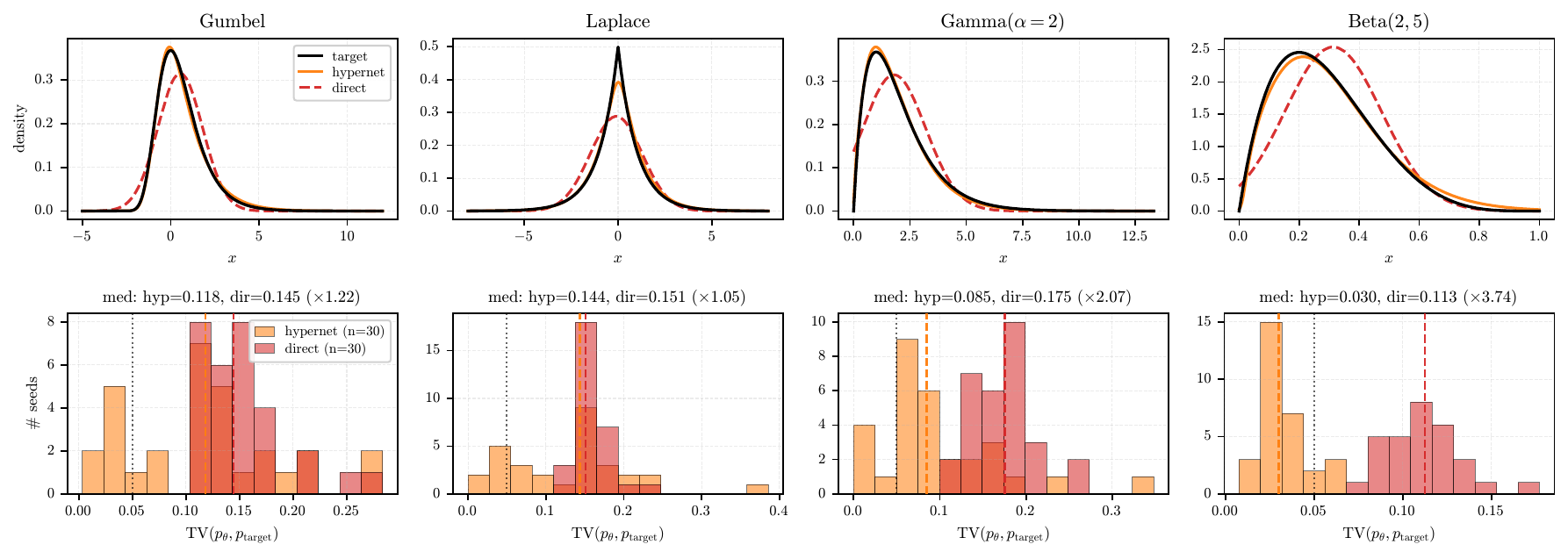}
\caption{\textbf{The lift uniformly improves the convergence distribution across four log-concave one-dimensional ICNN-EBM targets.} \textbf{Top row:} single-seed density fits. Target (black) vs \textcolor[HTML]{ff7f0e}{\textbf{hypernet}} vs \textcolor[HTML]{d62728}{\textbf{direct softplus}} (dashed). The \textcolor[HTML]{ff7f0e}{\textbf{hypernet}} curve overlays the target on every panel; the \textcolor[HTML]{d62728}{\textbf{direct}} curve over-shoots the mode or mis-fits the tail. \textbf{Bottom row:} per-seed total-variation distance to the target across 30 seeds per method, with color-matched dashed medians and a dotted success threshold. The probe that distinguishes the lift from PGD is in the cross-covariance reading of \cref{fig:four-cell-cross-cov}; the landscape geometry of the lift versus the direct constraint is rendered in \cref{sec:landscape-viz}.}
\label{fig:logconcave-gallery}
\end{figure}

\paragraph{MNIST autoencoder-latent.}
\label{sec:exp-mnist-latent}
A frozen convolutional autoencoder maps each $28\times 28$ MNIST~\citep{lecun1998gradient} digit image to a 32-dimensional latent vector, and a per-class log-concave EBM is fit to each digit's empirical latent distribution under forward-KL. Decoding posterior samples through the frozen autoencoder recovers an image-quality reading on top of the latent test-loss metric. The setup transfers the lift verbatim: each per-class hypernet emits the same softplus-tagged inter-layer weights as the one-dimensional EBM line, and the same batch-summary plus slack-bias decomposition applies. All ten digit classes are recovered and the basin/plateau geometry holds seed-by-seed.

The MNIST autoencoder-latent line extends the EBM evidence to 32-dimensional image-flavored latents while remaining inside the log-concave scope of the paper's analysis. The per-class ensemble of \cref{fig:mnist-latent-samples} shows the lift's conditioning advantage transfers beyond the one-dimensional gallery of \cref{fig:logconcave-gallery}, and the loss-landscape view in \cref{fig:mnist-latent-landscape} is the train-time signature that the test-loss bar in \cref{fig:mnist-latent-nll} corroborates at the operating point. \cref{fig:dim-sweep-summary} aggregates the EBM line across one-, two-, six-, and 32-dimensional log-concave targets and shows that the lift's metric advantage scales without breaking.

\begin{figure}[t]
\centering
\includegraphics[width=0.60\linewidth]{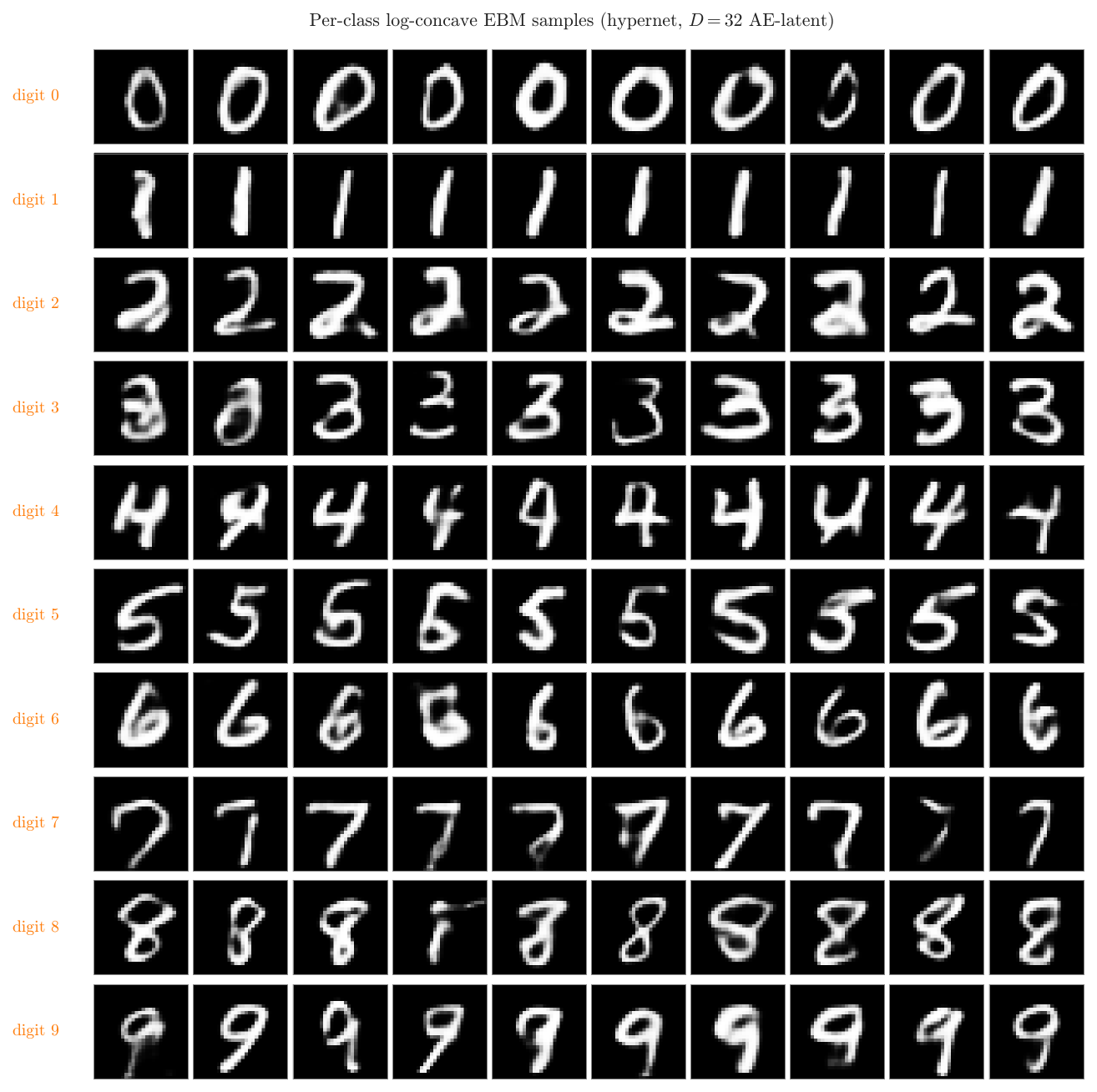}
\caption{\textbf{The lift recovers all ten digit classes.} Decoded samples from the per-class \textcolor[HTML]{ff7f0e}{\textbf{hypernet}} EBM through the frozen autoencoder---one row per digit class, all ten recovered with class-recognizable character. The headline lift contrast lives in the multi-seed convergence and landscape figures below.}
\label{fig:mnist-latent-samples}
\end{figure}

\begin{figure*}[t]
\centering
\begin{minipage}[t]{0.85\textwidth}
  \centering
  \includegraphics[width=\linewidth,trim=0 0 0 0,clip]{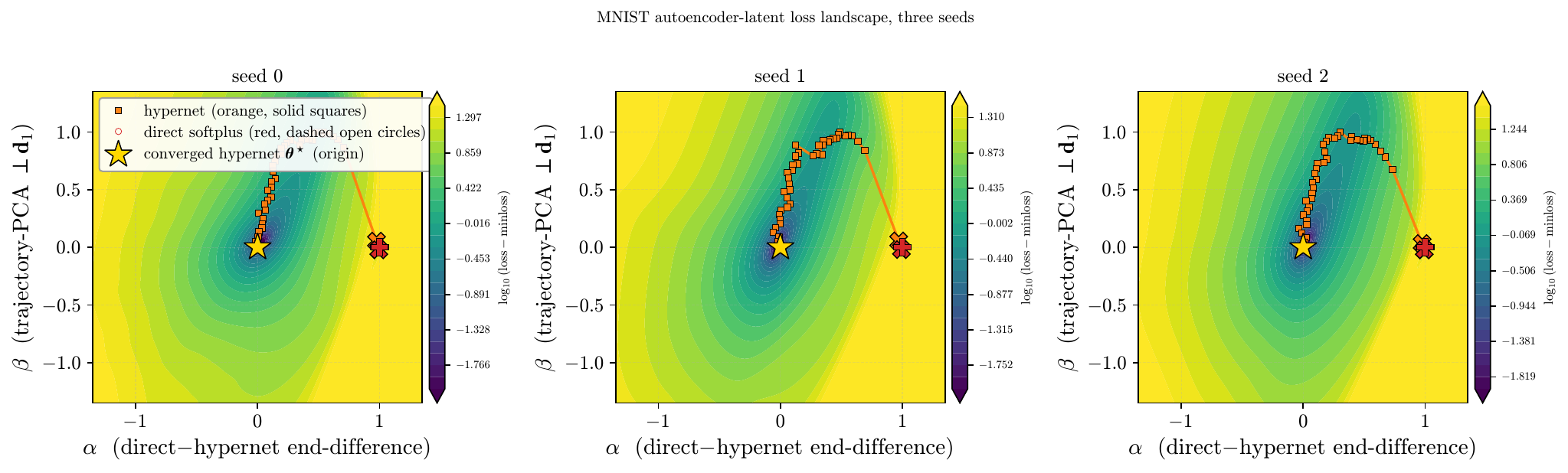}\\[-2pt]
  {\small\textbf{(a) Parameter-space view (three seeds).}}
\end{minipage}\\[6pt]
\begin{minipage}[t]{0.85\textwidth}
  \centering
  \includegraphics[width=\linewidth,trim=0 0 0 0,clip]{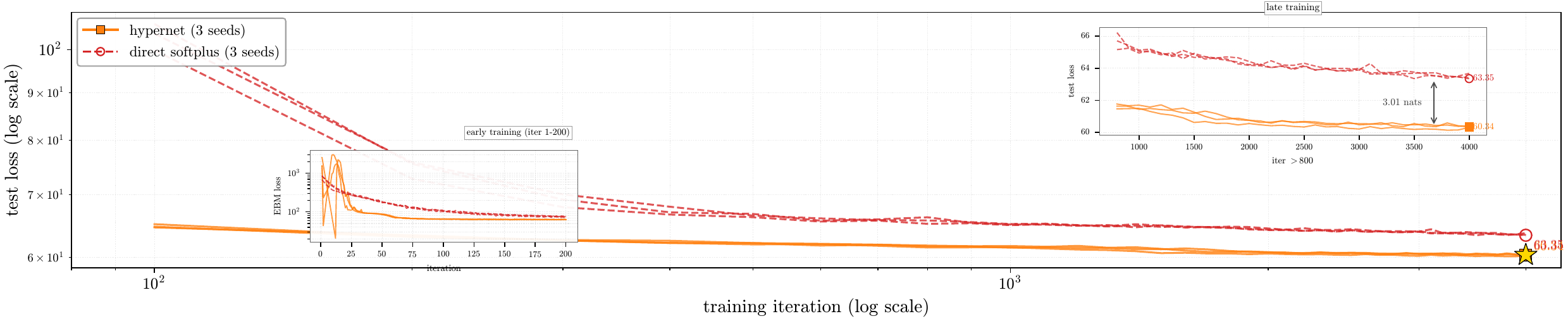}\\[-2pt]
  {\small\textbf{(b) Loss-space view (pooled, three seeds).}}
\end{minipage}
\caption{\textbf{On a 32-dimensional image-flavored latent the lift descends through the basin while direct softplus pins to a higher plateau.} Three seeds; test loss at each method's lowest-validation-loss checkpoint. \textbf{(a)}~Loss landscape on a two-dimensional slice through the converged \textcolor[HTML]{ff7f0e}{\textbf{hypernet}} (origin, gold star), one panel per seed; legend mirrors \cref{fig:teaser}(a). \textbf{(b)}~Held-out test loss versus iteration pooled across the same three seeds. The \textcolor[HTML]{ff7f0e}{\textbf{hypernet}} descends through the basin while \textcolor[HTML]{d62728}{\textbf{direct softplus}} pins to the readout shoulder---a gap that holds seed-by-seed in (b)'s late-training inset. Axes of (a) are constructed by the adaptive scheme of \cref{sec:landscape-viz}. A lifted-space companion panel is omitted because the per-iteration hypernet snapshots were not persisted for this run.}
\label{fig:mnist-latent-landscape}
\end{figure*}

\begin{figure}[t]
\centering
\includegraphics[width=0.85\linewidth]{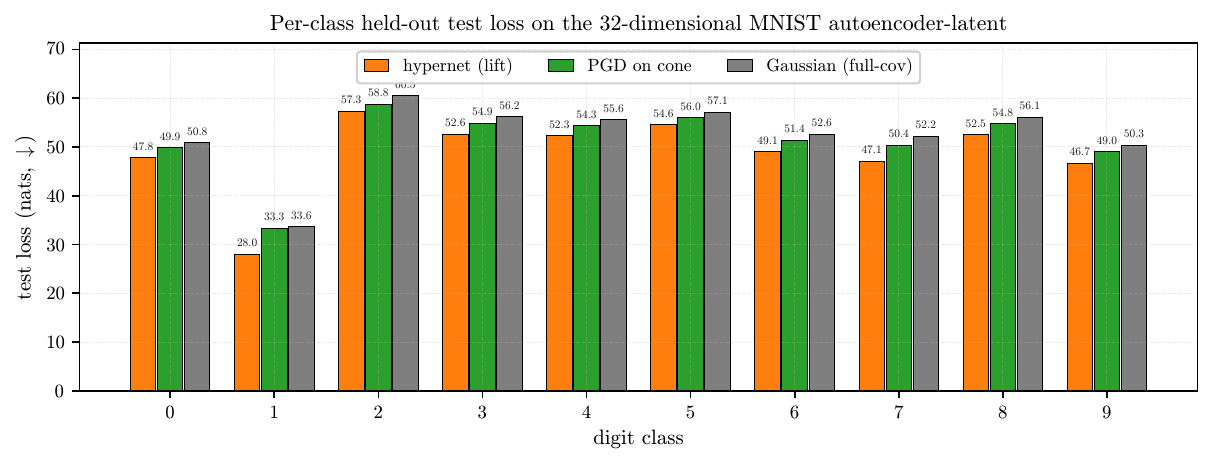}
\caption{\textbf{The lift sits below both PGD and a classical Gaussian on every digit class.} Per-class held-out test loss on the 32-dimensional MNIST autoencoder-latent target: the \textcolor[HTML]{ff7f0e}{\textbf{hypernet}}, \textcolor[HTML]{2ca02c}{\textbf{PGD}} on the non-negative cone, and a per-class full-covariance \textcolor[HTML]{7f7f7f}{\textbf{Gaussian}} baseline, each fit on the same per-class latent partition. The \textcolor[HTML]{ff7f0e}{\textbf{hypernet}} sits strictly below \textcolor[HTML]{2ca02c}{\textbf{PGD}}, which in turn sits below the \textcolor[HTML]{7f7f7f}{\textbf{Gaussian}}, on all ten digits---the per-class reading of the same three-way ordering the rest of \cref{sec:exp-ebm} reports. The all-class direct-softplus contrast on the same dataset is in \cref{fig:mnist-latent-landscape}.}
\label{fig:mnist-latent-nll}
\end{figure}

\begin{figure}[t]
\centering
\includegraphics[width=0.78\linewidth]{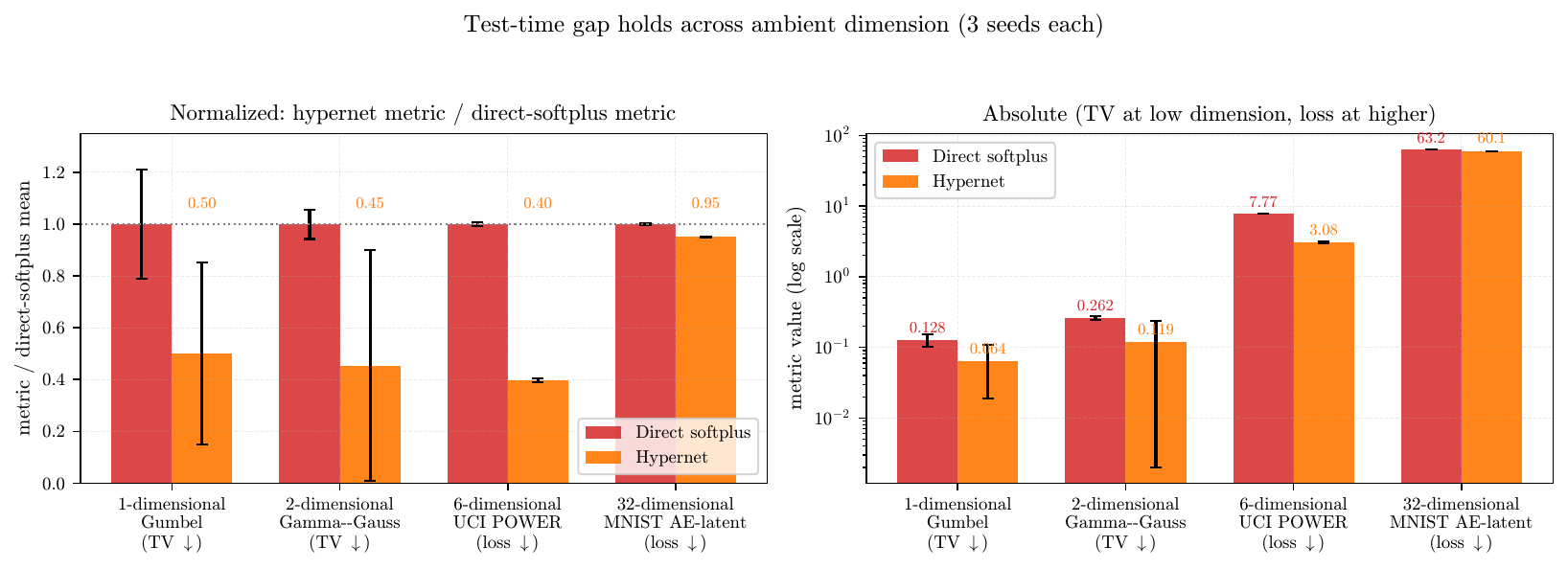}
\caption{\textbf{The lift's conditioning advantage scales with ambient dimension without breaking.} \textcolor[HTML]{ff7f0e}{\textbf{Hypernet}} vs \textcolor[HTML]{d62728}{\textbf{direct softplus}}, three seeds each, across four representative log-concave targets at one-, two-, six-, and 32-dimensional scales. Left: normalized metric (value relative to the \textcolor[HTML]{d62728}{\textbf{direct}} mean); at every dimension the \textcolor[HTML]{ff7f0e}{\textbf{hypernet}} bar lands well below the \textcolor[HTML]{d62728}{\textbf{direct}} reference (lower is better). Right: absolute values on log scale with the numbers annotated.}
\label{fig:dim-sweep-summary}
\end{figure}

\subsection{Convex potential flows: the lift transfers to the change-of-variables likelihood}
\label{sec:exp-cpflow}

The convex-potential-flow paradigm~\citep{huang2021cpflow} replaces the forward-KL data expectation of \cref{sec:exp-ebm} with a change-of-variables likelihood, and the lift transfers to it without modification. The transport map $T:\R^d\to\R^d$ is realized as the gradient of a convex potential $f_\btheta:\R^d\to\R$ implemented by an ICNN with non-negative inter-layer weights $\btheta_l\succeq\bm{0}$ enforced by the same readout $\psi$ as in the EBM line. The pushed-forward density takes the form
\begin{equation}
\label{eq:cpflow-nll}
\log p_X(\bx) \;=\; \log p_\mathcal{N}\big(\nabla f_\btheta(\bx);\,\bm{0},\,\bI\big) \;+\; \log\det\big(\nabla^2 f_\btheta(\bx)\big),
\end{equation}
in which $\det\nabla^2f_\btheta$ is the Jacobian determinant of the Brenier map~\citep{brenier1991polar} and is estimated stochastically (Hutchinson--CG~\citep{huang2021cpflow}) when the ambient dimension makes exact evaluation infeasible. The hypernet emit~\eqref{eq:lift} replaces the direct $\tilde\btheta$ by the \textbf{slack-plus-body decomposition}, leaving every other component of the CPFlow stack unchanged. The two-dimensional experiments use a $3$-layer ICNN of hidden width $64$ with strong-convexity coefficient $0.05$; the 21-dimensional tabular experiment uses the PICNN of~\citet{amos2017icnn} configured per the PCP-Map recipe of~\citet{wang2024conditional} as the convex potential, trained as an unconditional density by collapsing the PICNN's conditioning input to a constant.

The CPFlow experiments below are run as a two-way comparison---\textcolor[HTML]{ff7f0e}{\textbf{hypernet}} vs \textcolor[HTML]{d62728}{\textbf{direct-softplus}}; the three-way structural reading that includes \textcolor[HTML]{2ca02c}{\textbf{PGD}} is settled by the cross-covariance test of \cref{sec:exp-e2}. What this subsection adds is loss-agnostic confirmation: the same conditioning advantage the EBM line measured under forward-KL carries through the change-of-variables likelihood.

\subsubsection{Two-dimensional synthetic targets: the lift's advantage surfaces in the convergence distribution}
\label{sec:exp-cpflow-2d}

The two-dimensional demo trains a single-block convex potential flow on the 8-Gaussians and 2-spirals targets, both backends, under an apples-to-apples symmetric initialization: each backend's positivity readout receives the same per-element $\mathcal{N}(0,\sigma_{b_h}^2)$ pre-softplus jitter, so the only remaining difference between the arms is the lift itself. Training is $6{,}000$ Adam iterations at $\mathrm{lr} = 10^{-3}$, batch size $64$, held-out 1024-sample test loss; the sweep runs $100$ paired seeds per method per target.

\cref{fig:cpflow-demo-histogram-100seed} reports the demo distributionally---the held-out test loss of all $100$ seeds per method. The reading is consistent with the $\varrho\ll 1$ regime of~\eqref{eq:rho-mechanism} on the CPFlow loss landscape: the CG-Hutchinson log-det stochastic estimator injects an $O(d)$ gradient-noise floor that dilutes the dimensionless control $\varrho$ below the Arrhenius-to-diffusive threshold, so the lift's advantage surfaces as a shift of the convergence distribution---exactly as \cref{rem:moreau-scope}~(c) predicts for regime-conditional smoothing.

\begin{figure}[t]
\centering
\includegraphics[width=0.49\linewidth]{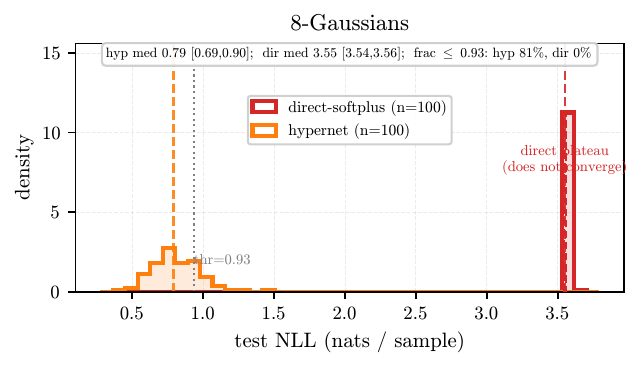}\hfill
\includegraphics[width=0.49\linewidth]{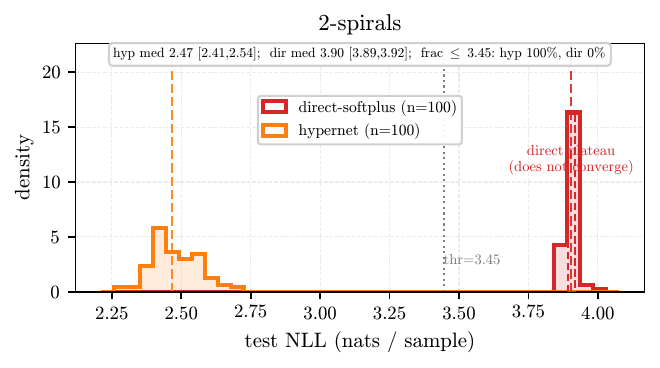}
\caption{\textbf{The lift shifts the convergence distribution toward a basin the direct softplus essentially never reaches.} Paired test-loss histograms on two-dimensional convex potential flows across 100 seeds per method per target. Left panel is 8-Gaussians, right is 2-spirals; step histograms of the held-out test loss, with dashed median lines per method and a dotted vertical line at the data-honest threshold $\tau$ separating the \textcolor[HTML]{ff7f0e}{\textbf{hypernet}}'s converged mode from the \textcolor[HTML]{d62728}{\textbf{direct}} method's plateau (red dashed rectangle, ``\textcolor[HTML]{d62728}{\textbf{direct}} plateau'').}
\label{fig:cpflow-demo-histogram-100seed}
\end{figure}

The 100-seed sweep separates a hypernet basin from a direct plateau on both targets, with the \textcolor[HTML]{ff7f0e}{\textbf{hypernet}}'s median sitting well below the \textcolor[HTML]{d62728}{\textbf{direct}}'s. Adopting a data-honest threshold $\tau$ between the \textcolor[HTML]{ff7f0e}{\textbf{hypernet}} minimum and the \textcolor[HTML]{d62728}{\textbf{direct}} median, a large majority of \textcolor[HTML]{ff7f0e}{\textbf{hypernet}} seeds reach the lower-loss mode while essentially no \textcolor[HTML]{d62728}{\textbf{direct}} seed does (\cref{fig:cpflow-demo-histogram-100seed}). The \textcolor[HTML]{d62728}{\textbf{direct}} plateau---the red dashed rectangle in each panel---is the operational signature of the failure mode and tracks the same shape as the paired-TV bound of \cref{sec:exp-1d-gumbel}. The mechanism applies loss-agnostically: the change-of-variables likelihood replaces the forward-KL data expectation as the source of batch-coupled gradient noise, but ingredient~(iii) of \cref{sec:three-ingredients} survives. The lift shifts the convergence distribution toward a basin the direct softplus essentially never reaches at this budget.

\subsubsection{Loss-landscape geometry: the lift carves a valley where the direct softplus sees a plateau}
\label{sec:exp-cpflow-landscape}

\cref{fig:cpflow-landscape-2d} reports the change-of-variables likelihood of~\eqref{eq:cpflow-nll} on a two-dimensional slice through a single seed of the symmetric-init sweep on the 8-Gaussians target, with both methods' per-iteration trajectories projected onto the plane. This is one seed of the same configuration that drives \cref{fig:cpflow-demo-histogram-100seed}, so both 8-Gaussians figures are one apples-to-apples experiment. The slice convention is the adaptive scheme of \cref{sec:landscape-viz}, instantiated separately for the constrained and lifted parameter spaces.

\begin{figure*}[t]
\centering
\begin{minipage}[t]{\textwidth}
  \centering
  \includegraphics[width=\linewidth,trim=0 0 0 0,clip]{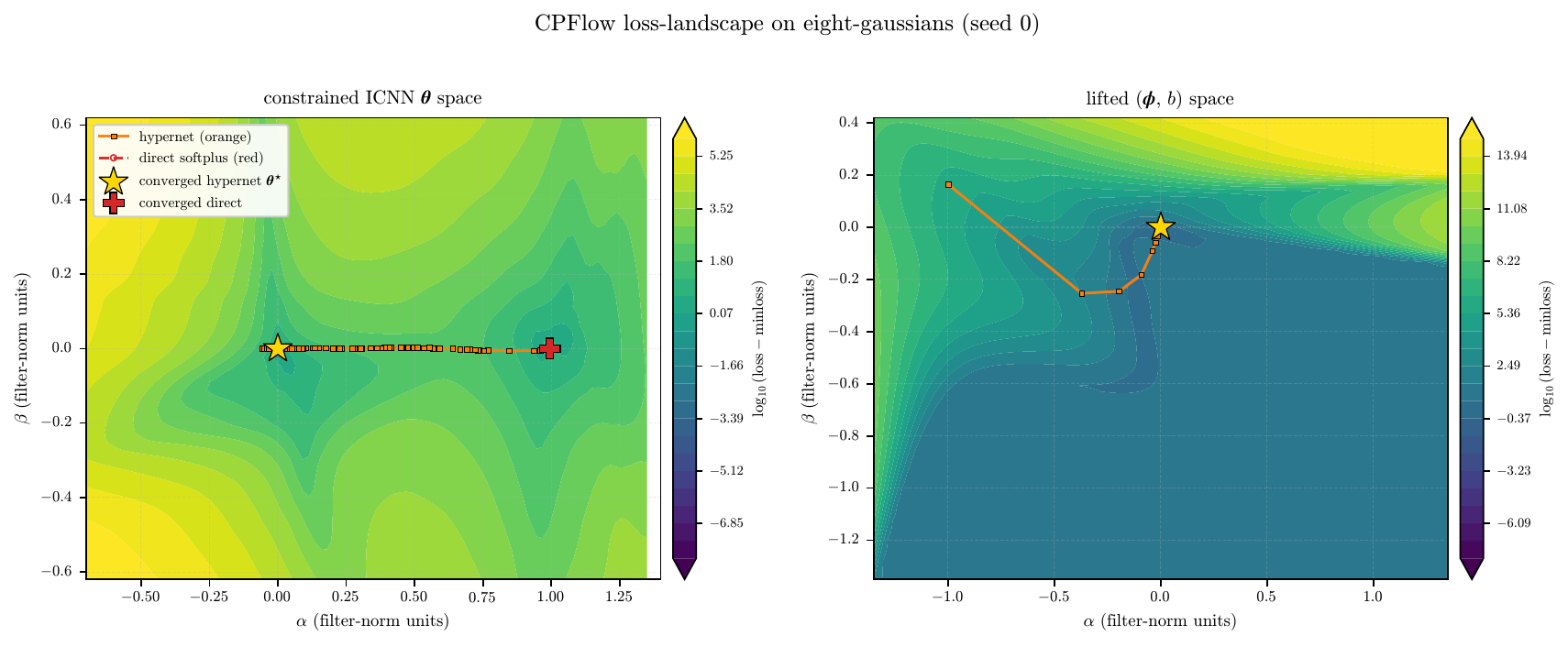}\\[-2pt]
  {\small\textbf{(a) Parameter-space view (two spaces: constrained $\theta$ left; lifted $(\bphi, \bb)$ right).}}
\end{minipage}\\[6pt]
\begin{minipage}[t]{\textwidth}
  \centering
  \includegraphics[width=\linewidth,trim=0 0 0 0,clip]{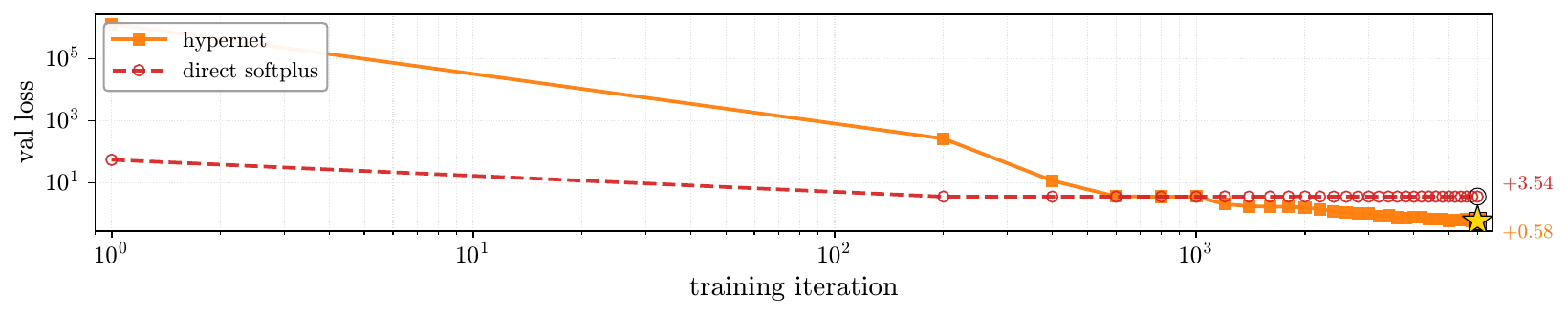}\\[-2pt]
  {\small\textbf{(b) Loss-space view.}}
\end{minipage}
\caption{\textbf{The same training trajectory traces a plateau in constrained space and a clean valley in lifted space.} Convex potential flow on the 8-Gaussians target, a single seed of the same symmetric-init sweep that drives \cref{fig:cpflow-demo-histogram-100seed}. \textbf{(a)}~Left: constrained ICNN parameter space anchored at the converged \textcolor[HTML]{ff7f0e}{\textbf{hypernet}}. Right: lifted hypernet parameter space; axes are the top two trajectory-PCA components of the lifted hypernet snapshots. \textbf{(b)}~Held-out loss versus iteration on the same run. In panel (a), the \textcolor[HTML]{ff7f0e}{\textbf{hypernet}} descends to the origin (gold star) and \textcolor[HTML]{d62728}{\textbf{direct softplus}} pins to the plateau in constrained space, while the lifted-space sub-panel renders the same hypernet trajectory as a clean valley. Panel (b) shows the loss-space counterpart: the \textcolor[HTML]{ff7f0e}{\textbf{hypernet}} descends to a held-out loss of $0.58$~nats, while \textcolor[HTML]{d62728}{\textbf{direct softplus}} stalls on the constrained-space plateau at $3.54$~nats---the single-seed signature of the distributional gap that the 100-seed paired sweep of \cref{fig:cpflow-demo-histogram-100seed} quantifies.}
\label{fig:cpflow-landscape-2d}
\end{figure*}

The slice planes are constructed by the adaptive scheme of \cref{sec:landscape-viz}, instantiated separately for the constrained and lifted spaces. A strict random-direction slice in the same neighborhood looks bowl-shaped because two random filter-norm directions almost surely miss the narrow channel separating the methods; in contrast, the adaptive frame is dense in the directions of optimization interest and renders the basin/plateau geometry that \cref{lem:moreau} predicts.

The reading is the structural signature \cref{lem:moreau} predicts. In $\theta$-space both trajectories trace nearly the same path for most of training, pinned to the plateau around the converged direct softplus, before the hypernet's emitted PICNN escapes the plateau and descends to the origin (gold star). In lifted space the same trajectory traces a clean valley with no plateau structure intervening. The lift converts a plateau-bounded trajectory in constrained space into a valley-descending trajectory in lifted space.

\subsubsection{UCI HEPMASS: the lift improves over a literature-scale direct-softplus convex-potential flow}
\label{sec:exp-cpflow-hepmass}

On the 21-dimensional tabular benchmark~\citep{baldi2016parameterized,papamakarios2017maf}, at the five-block convex-potential flow architecture of~\citet{huang2021cpflow}, the lift descends to a lower test loss than direct softplus; the direct-softplus reproduction itself matches the published convex-potential flow result of~\citet{huang2021cpflow}, so the comparison is anchored at literature scale. The two-dimensional evidence above adjudicates the lift at the lowest non-degenerate ambient dimension; this tabular reading is the literature-scale CPFlow benchmark, confirming that the per-block strong-convexification of \cref{lem:moreau} translates from the two-dimensional toy to the canonical tabular regime---each CPFlow block emits a non-negative inter-layer weight stack through the same hypernet readout, so the per-block conditioning advantage compounds across depth without modification to the change-of-variables likelihood or the log-det stochastic estimator. The structural reading---slack, body, and cross-covariance---is settled by the four-architecture ablation of \cref{sec:exp-e2}; \cref{fig:cpflow-hepmass} reports the loss-space trace against the literature reference.

\begin{figure}[t]
\centering
\includegraphics[width=0.55\linewidth,trim=0 0 0 0,clip]{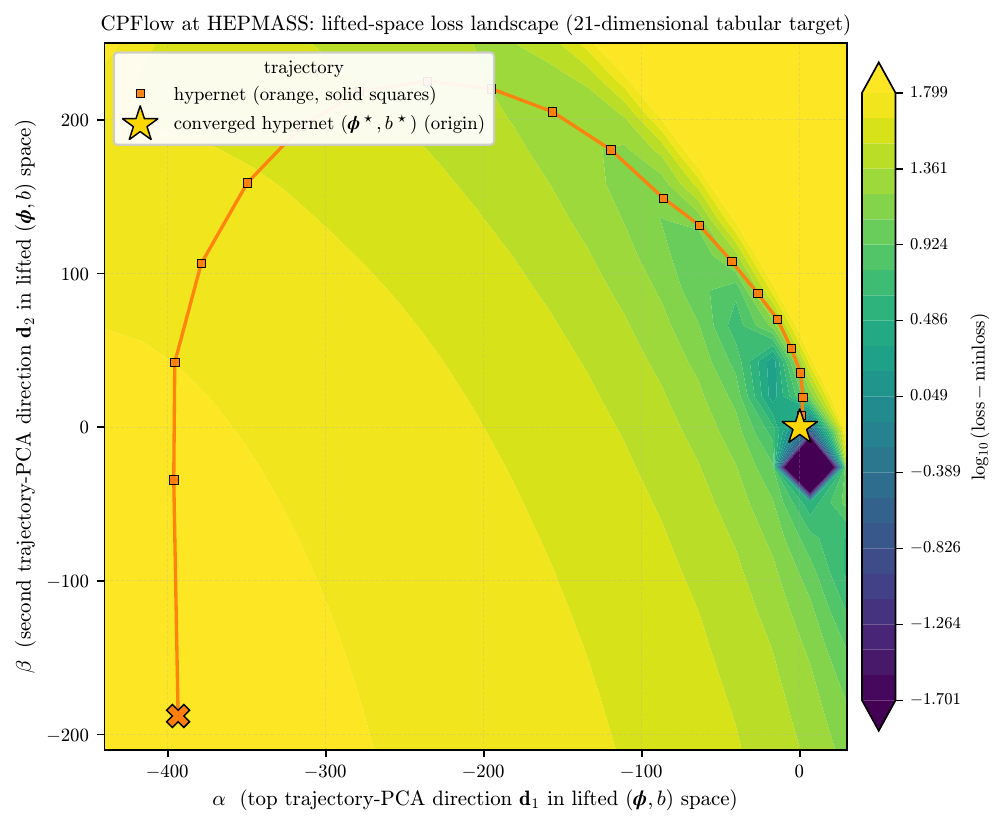}\\[1pt]
{\small\textbf{(a) Lifted-space loss landscape with hypernet trajectory.}}\\[5pt]
\includegraphics[width=0.92\linewidth,trim=0 0 0 0,clip]{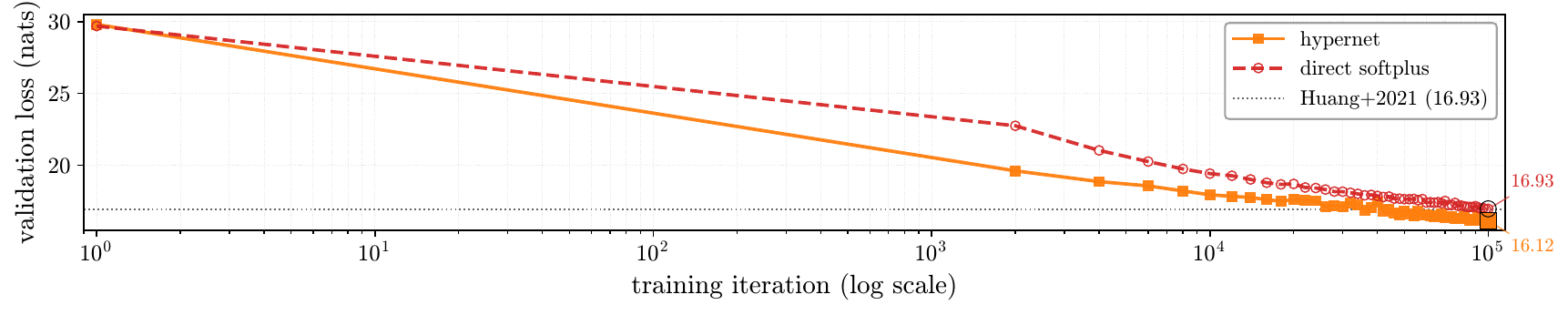}\\[1pt]
{\small\textbf{(b) Test loss versus iteration (both methods).}}
\caption{\textbf{The lift improves over a literature-scale direct-softplus convex-potential flow.} \textbf{(a)}~Lifted hypernetwork-parameter space $(\bphi, \bb)$; the \textcolor[HTML]{ff7f0e}{\textbf{hypernet}} trajectory descends a coherent valley from initialization to the converged basin at the origin (gold star). White tiles mark offsets where the log-det estimator diverged: infeasible regions of the convex-potential parameter space, and the training trajectory stays inside the feasibility envelope. \textbf{(b)}~Held-out test loss versus iteration; dashed gray line marks the published reference from~\citet{huang2021cpflow}. The \textcolor[HTML]{d62728}{\textbf{direct softplus}} reproduces the published number, and the lift improves on it---the per-block strong-convexification of \cref{lem:moreau} compounds across depth without modifying the change-of-variables likelihood or the log-det estimator.}
\label{fig:cpflow-hepmass}
\end{figure}

\subsection{The same trajectory traces a valley in lifted space and a plateau in constrained space}
\label{sec:landscape-viz}

The lift is a coordinate change. The same training trajectory has two coexisting renderings---i.e., one in the constrained ICNN/PICNN $\theta$-space the direct softplus optimizes, one in the lifted $(\bphi, \bb)$-space the hypernet optimizes---and side-by-side comparison localizes \cref{lem:moreau}'s landscape smoothing on the surface that gradient descent actually walks. \cref{fig:landscape-two-spaces-three-problems} arranges two representative ICNN-EBM problems as rows (one-dimensional Gumbel, two-dimensional gamma-mode) and the two coordinate systems as columns.

\paragraph{Landscape visualization method.} All loss-landscape figures in this section evaluate the test loss on a two-dimensional slice of parameter space centered at the converged \textcolor[HTML]{ff7f0e}{\textbf{hypernet}}. The slice plane is chosen adaptively per figure to expose the gap between the methods. In constrained parameter space, the first axis is the converged \textcolor[HTML]{d62728}{\textbf{direct}}-minus-\textcolor[HTML]{ff7f0e}{\textbf{hypernet}} end-difference and the second is the top trajectory-PCA direction orthogonal to it. In lifted parameter space, both axes are the top two trajectory-PCA components of the lifted \textcolor[HTML]{ff7f0e}{\textbf{hypernet}} snapshots. Filter-normalization follows~\citet{li2018visualizing}. A strict random-direction slice in the same neighborhood looks bowl-shaped because two random filter-norm directions almost surely miss the narrow channel separating the methods in non-trivial parameter spaces. The converged \textcolor[HTML]{ff7f0e}{\textbf{hypernet}} sits at the origin (gold star) in every panel.

The reading is consistent across both rows. The constrained-space panel pins the \textcolor[HTML]{d62728}{\textbf{direct softplus}} trajectory to a flat basin while the \textcolor[HTML]{ff7f0e}{\textbf{hypernet}} trajectory descends to a converged point at substantially lower loss; the lifted-space panel renders that same trajectory as a smooth descent down a single contiguous valley. The loss-versus-iteration companion, \cref{fig:landscape-two-spaces-traces}, shows the lift converging to a lower loss than direct softplus on both rows.

\begin{figure*}[t]
  \centering
  \includegraphics[width=0.72\textwidth]{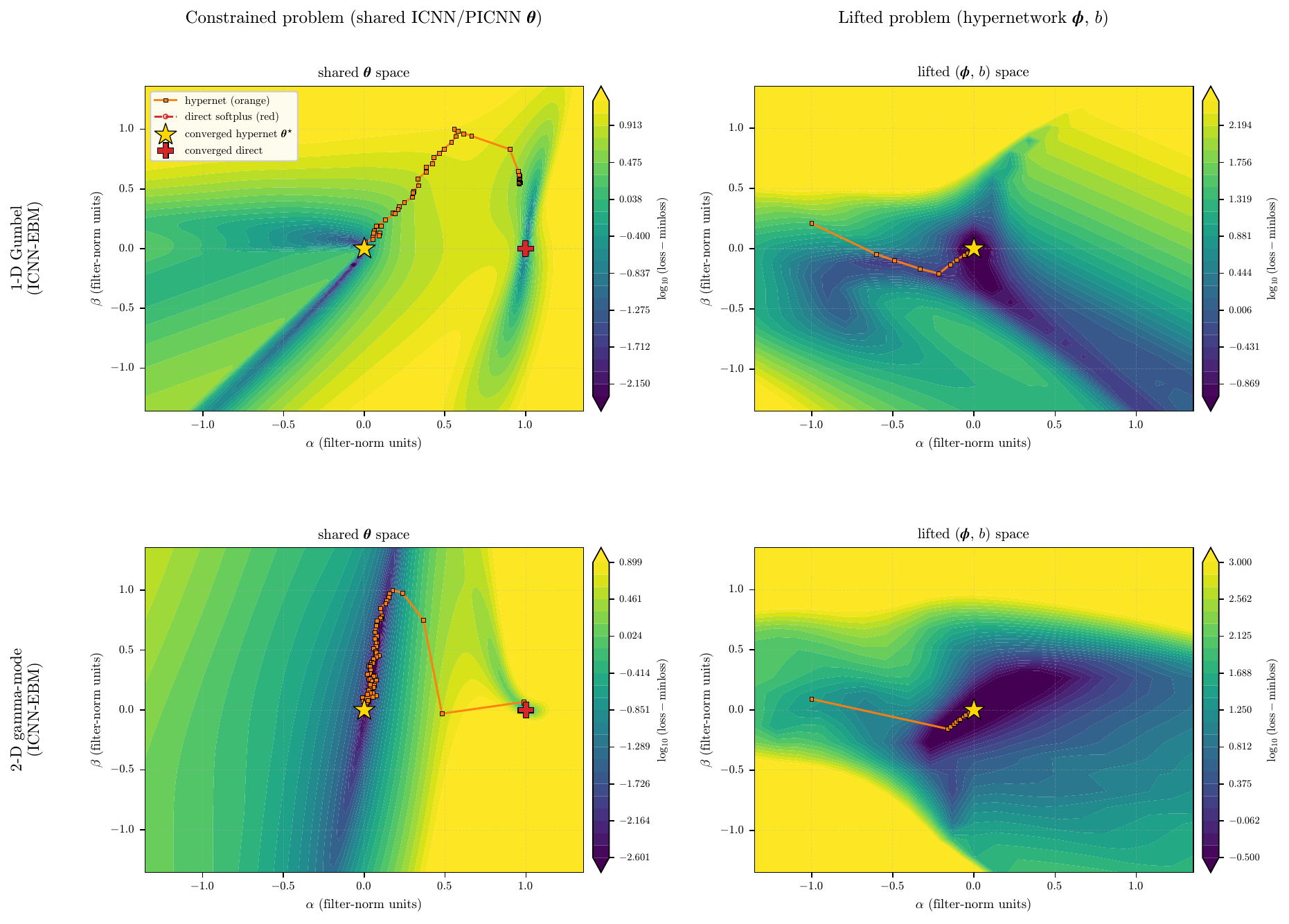}
  \caption{\textbf{The lift reshapes a plateau-bounded constrained-space surface into a valley-descending lifted-space surface.} Same training trajectory viewed in two parameter spaces (columns) across two problems (rows: one-dimensional Gumbel and two-dimensional gamma-mode ICNN-EBM). Left column: constrained parameter space. Right column: lifted parameter space. \textcolor[HTML]{ff7f0e}{\textbf{Hypernet}} (solid squares) and \textcolor[HTML]{d62728}{\textbf{direct softplus}} (dashed open circles) trajectories; converged \textcolor[HTML]{ff7f0e}{\textbf{hypernet}} at the origin (gold star). Slice planes constructed by the adaptive scheme of \cref{sec:landscape-viz}. The loss-vs-iter companion is \cref{fig:landscape-two-spaces-traces}.}
  \label{fig:landscape-two-spaces-three-problems}
\end{figure*}

\begin{figure*}[!htbp]
  \centering
  \includegraphics[width=0.72\textwidth,trim=0 0 0 0,clip]{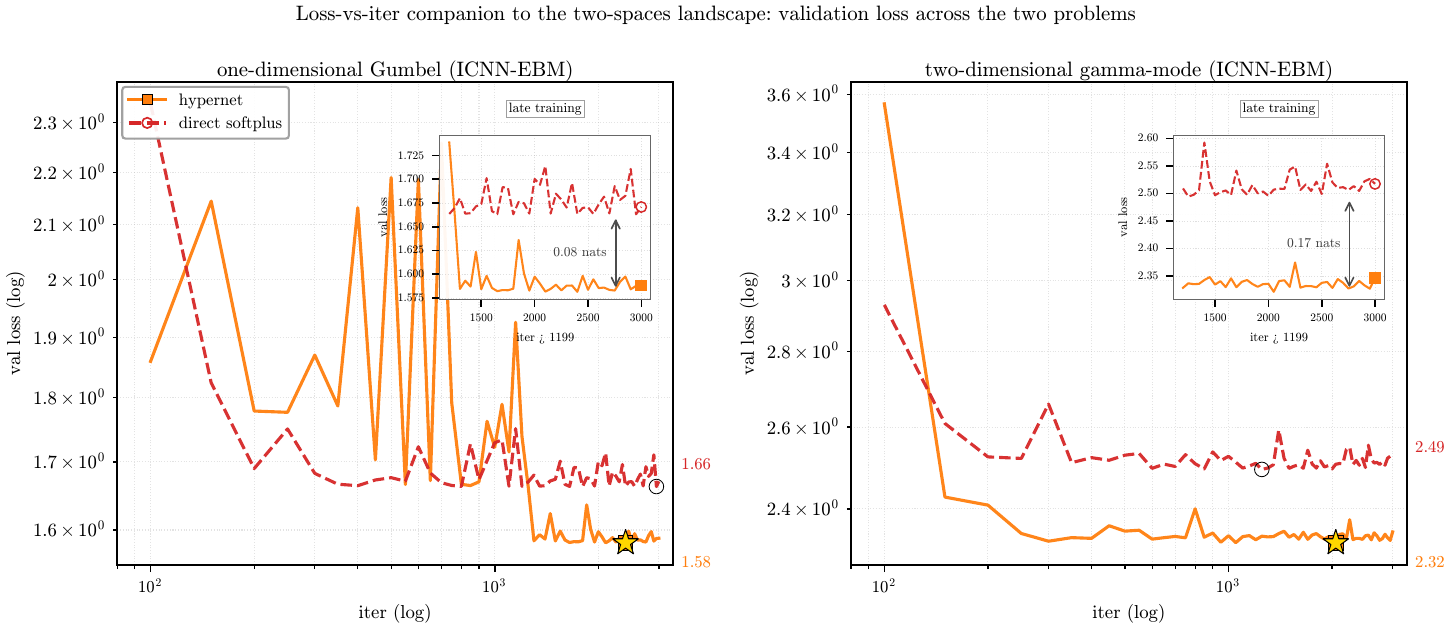}
  \caption{\textbf{The lift's lower converged loss holds across both problems.} Loss-space companion to \cref{fig:landscape-two-spaces-three-problems}: one panel per problem (one-dimensional Gumbel and two-dimensional gamma-mode ICNN-EBM), held-out validation loss on log-log axes, late-training inset on linear y. On both rows the \textcolor[HTML]{ff7f0e}{\textbf{hypernet}} descends to a lower converged loss than \textcolor[HTML]{d62728}{\textbf{direct softplus}}, matching the basin/plateau geometry of the corresponding rows in \cref{fig:landscape-two-spaces-three-problems}.}
  \label{fig:landscape-two-spaces-traces}
\end{figure*}

\FloatBarrier
\section{Is the lift necessary?}
\label{sec:ablations}

Three structural rejoinders threaten the lift's contribution. (i) Plain ADMM-with-positivity could enforce the non-negativity constraint without a data-conditioned body, in which case the body provides no advantage above the projection-enforcement alternative. (ii) The PGD baseline could match the lift's test-time metric without the body's batch-conditioning machinery, in which case ingredient~(ii) of \cref{sec:three-ingredients} is not load-bearing. (iii) The hypernet's body could be doing nothing more than over-parametrizing the search, in which case widening the direct softplus to the hypernet's parameter count should recover the lift's metric. \cref{sec:ablation-admm,sec:ablation-pgd,sec:ablation-capacity} test each directly.

\subsection{Plain ADMM-with-positivity does not close the gap}
\label{sec:ablation-admm}

If the lift is merely an ADMM consensus reformulation (\cref{sec:admm}), why introduce a body at all? Standard ADMM-with-positivity---$\min\L(\btheta) + (\rho/2)\|\btheta - \bz + \bm{y}/\rho\|^2$ with $\bz\succeq\bm{0}$, closed-form prox $\bz\leftarrow\max(\btheta + \bm{y}/\rho, 0)$, and dual step $\bm{y}\leftarrow\bm{y}+\rho(\btheta-\bz)$---requires no body and inherits the $O(1/k)$ rate under per-block convexity~\citep{boyd2011admm,he2012convergence}. While the constraint is enforced, the optimization pathology is left untouched: the failure mode of direct-softplus training is not infeasibility but a diffusively slow escape on the readout shoulder. The primal variable $\btheta$ has no data-conditioned reparametrization, so $\Var_{\bX}[\tilde\btheta] = 0$ and ingredient~(ii) of \cref{sec:three-ingredients} is identically zero. The empirical version of this argument is the PGD baseline of \cref{sec:ablation-pgd}, the $\rho\to\infty$ stiff-penalty limit of plain ADMM. The lift's contribution above plain ADMM is precisely ingredient~(ii), and~\eqref{eq:cross-cov} is the diagnostic that measures it. Empirically, every stable $\rho$ schedule of plain ADMM-with-positivity we tried---fixed $\rho$~\citep{boyd2011admm}, residual-balance auto-$\rho$~\citep{he2000self,boyd2011admm,wohlberg2017admm}, and $\rho$-doubling toward stiff~\citep{bertsekas1999nonlinear}---plateaus several-fold above PGD (\cref{fig:admm-positivity-hepmass}). The lift sits several-fold below the best stable plain-ADMM cell at the same compute budget. Sophisticated ADMM accelerators that adapt $\rho$ via Barzilai--Borwein dual steps and line search~\citep{zarepisheh2017admm} attack the same $\rho$-sensitivity issue; the Barzilai--Borwein component is a drop-in replacement for the dual update and would behave qualitatively similarly to our residual-balance schedule, since both adapt $\rho$ from the primal--dual residual ratio. The line-search component is not deep-learning-compatible: each candidate dual step requires re-solving the inner primal optimization (re-training the network) at every candidate $\by$, which is prohibitively expensive for neural-network training. The structural argument therefore survives any acceleration of plain ADMM: ingredient~(ii) is not what ADMM-with-positivity provides at any $\rho$ schedule, however adaptively chosen.

The standard rescue for non-convex ADMM is to move toward Boyd's exact-solve regime by taking many Adam steps per outer iteration on the augmented Lagrangian. This does not help here either, for a structural reason: the forward-KL loss is not bounded below on the off-cone region of $\tilde\btheta$, because the importance-sampling proposal for $\log Z$ becomes uncovered when the ICNN energy is improper, so the inner subproblem $\min_{\tilde\btheta} \L(\tilde\btheta) + (\rho/2)\|\tilde\btheta - \bz + \by/\rho\|^2$ has no finite minimum. Additional inner steps only extend the off-cone drift more aggressively. PGD avoids this pathology because it carries no penalty term: it is one Adam step on $\L(\btheta)$ followed by a hard projection, with no dual variable to track and no approximate primal solve to accumulate error.

\begin{figure}[t]
  \centering
  \includegraphics[width=0.82\textwidth]{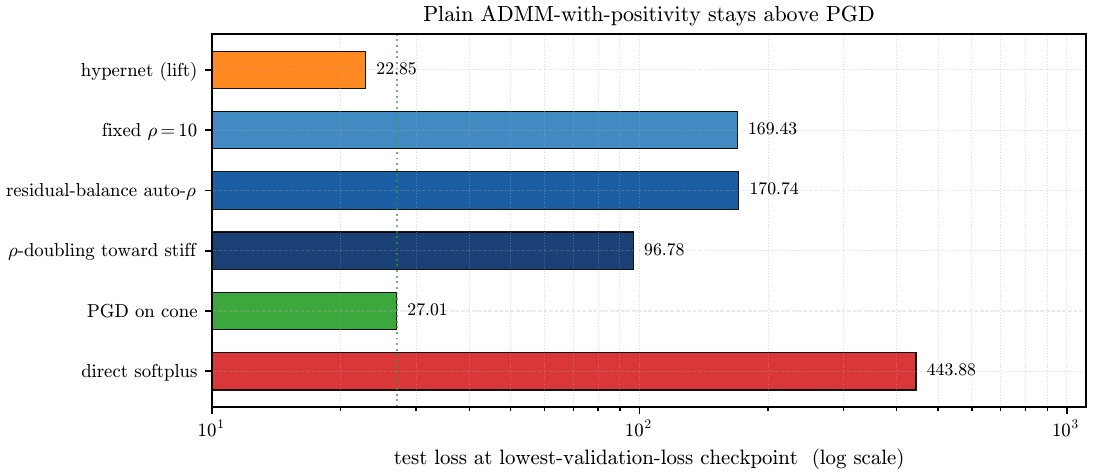}
  \caption{\textbf{Plain ADMM-with-positivity does not close the lift-vs-PGD gap.} Three stable schedules of textbook ADMM-with-positivity---fixed $\rho{=}10$, residual-balance auto-$\rho$, and $\rho$-doubling toward stiff---trained on the same ICNN architecture, budget, and forward-KL objective as the \textcolor[HTML]{ff7f0e}{\textbf{hypernet}}, \textcolor[HTML]{d62728}{\textbf{direct softplus}}, and \textcolor[HTML]{2ca02c}{\textbf{PGD}} baselines of \cref{sec:ablation-pgd}; test loss is read on the projected iterate $\max(\tilde\btheta, 0)$ at each method's lowest-validation-loss checkpoint. All three ADMM schedules plateau several-fold above PGD (green dashed line), and the \textcolor[HTML]{ff7f0e}{\textbf{hypernet}} sits several-fold below the best of them---the data-conditioned body, not the constraint-enforcement mechanism, is the load-bearing source of the lift's advantage over ADMM-with-positivity (ingredient~(ii) of \cref{sec:three-ingredients}).}
  \label{fig:admm-positivity-hepmass}
\end{figure}

\subsection{Projection alone misses the lift's margin}
\label{sec:ablation-pgd}

PGD~\citep{bertsekas1999nonlinear,amos2017icnn} takes an unconstrained step on $\btheta$ followed by the projection $\btheta \leftarrow \max(\btheta, 0)$, avoids the readout shoulder by construction (no $\psi$ reparametrization), and is the $\rho\to\infty$ stiff-penalty limit of plain ADMM-with-positivity of \cref{sec:admm}. It shares with ADMM-with-positivity an identity-Jacobian path through the constraint and the absence of a data-conditioned body, so the verdict on PGD is the verdict on whether the body's batch-conditioning is structurally necessary above the projection alternative.

On the one-dimensional Gumbel target, PGD reaches the hypernet's TV but returns a structurally zero reading on the cross-covariance probe of \cref{sec:exp-e2}, since PGD has neither slack nor data-conditioned body. PGD and the lift are therefore operationally equivalent in TV at $d{=}1$, but the cross-covariance diagnostic separates them: the lift's contribution is the structural reading, not the test-time TV.

The question becomes empirical at higher dimension: does the smooth-autodiff alternative reach the lift's loss? On the 21-dimensional tabular benchmark (single seed; \cref{fig:pgd-hepmass-landscape}), the three methods separate by orders of magnitude: \textcolor[HTML]{d62728}{\textbf{direct softplus}} is trapped on the readout shoulder and fails to converge at this budget, \textcolor[HTML]{2ca02c}{\textbf{PGD}} escapes the cone boundary, and the \textcolor[HTML]{ff7f0e}{\textbf{hypernet}} sits below both. The lift's margin above PGD---a $4.2$-nat gap, read off \cref{fig:pgd-hepmass-landscape}(c)---is the landscape-smoothing benefit of \cref{lem:moreau}; PGD escapes the cone but does not deliver it.

\begin{figure*}[t]
\centering
\begin{minipage}[t]{0.49\textwidth}
  \centering
  \includegraphics[width=\linewidth,trim=0 0 0 0,clip]{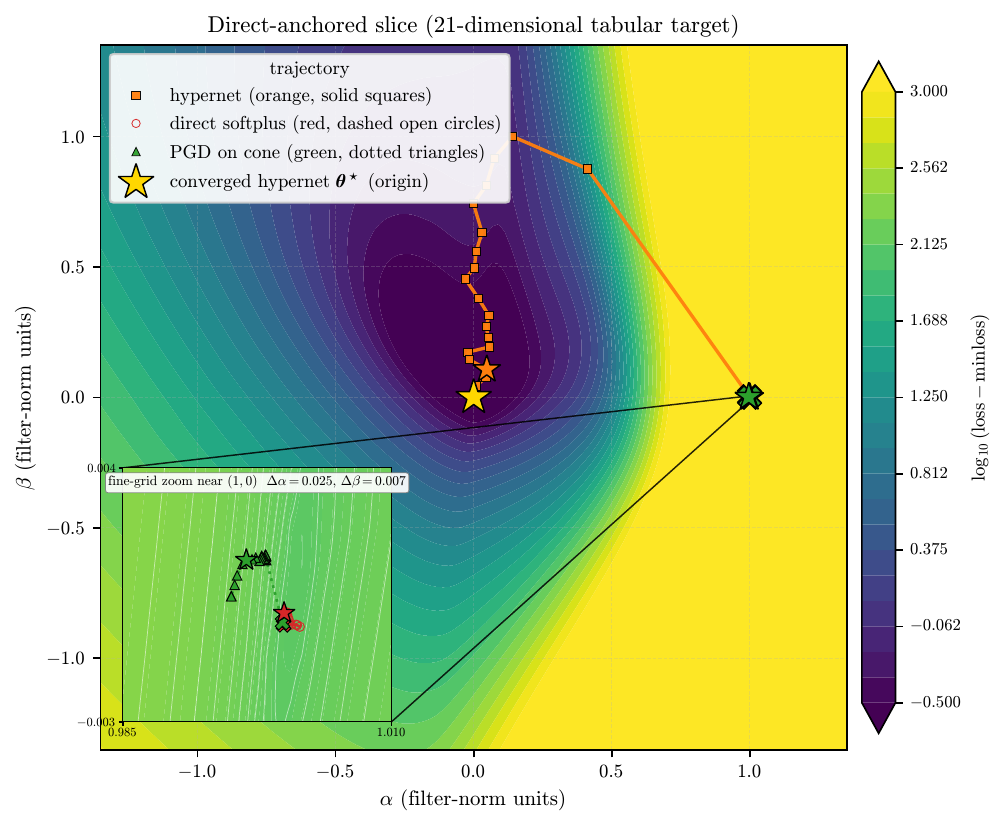}\\[-2pt]
  {\small\textbf{(a) Direct-anchored slice.}}
\end{minipage}\hfill%
\begin{minipage}[t]{0.49\textwidth}
  \centering
  \includegraphics[width=\linewidth,trim=0 0 0 0,clip]{figures/pgd_hepmass_nll_trace.pdf}\\[-2pt]
  {\small\textbf{(b) Loss-vs-iter trace.}}
\end{minipage}\\[6pt]
\begin{minipage}[t]{0.62\textwidth}
  \centering
  \includegraphics[width=\linewidth,trim=0 0 0 0,clip]{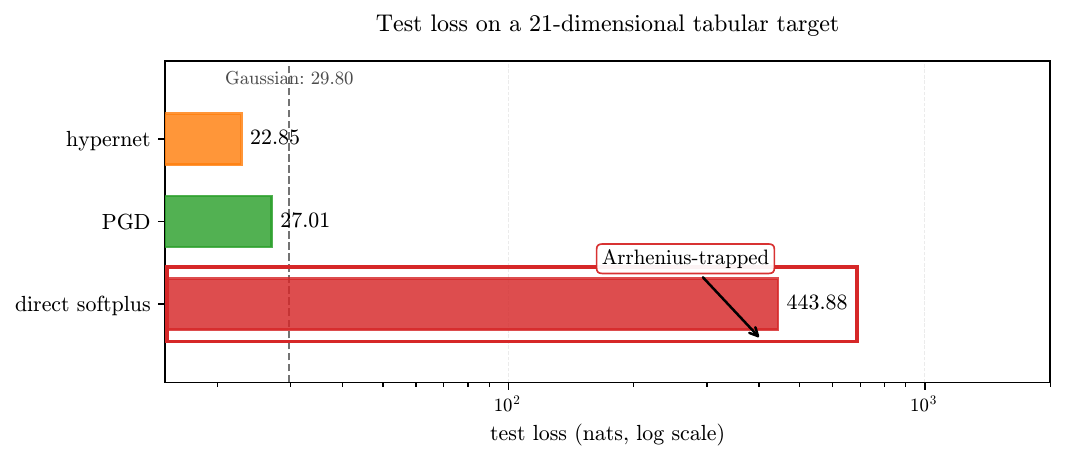}\\[-2pt]
  {\small\textbf{(c) Final test loss (log-x).}}
\end{minipage}
\caption{\textbf{Three positivity recipes on a 21-dimensional tabular target.} Same training run as \cref{fig:teaser}, with the slice in (a) anchored at the converged \textcolor[HTML]{d62728}{\textbf{direct softplus}} so its trapped trajectory is in-plane. \textbf{(a)}~Direct-anchored landscape slice. \textbf{(b)}~Held-out validation loss versus iteration on the same run. \textbf{(c)}~Final test loss on log-x; single-Gaussian shown as a dashed reference. The \textcolor[HTML]{ff7f0e}{\textbf{hypernet}} sits at the origin (gold star) of (a) and at the lowest loss in (c); \textcolor[HTML]{d62728}{\textbf{direct softplus}} is pinned on the readout shoulder; \textcolor[HTML]{2ca02c}{\textbf{PGD}}'s converged point lies off-axis. The \textcolor[HTML]{2ca02c}{\textbf{PGD}}-anchored companion slice is \cref{fig:teaser}(a).}
\label{fig:pgd-hepmass-landscape}
\label{fig:pgd-hepmass-nll-bar}
\end{figure*}

\subsection{Capacity vs conditioning: widened-direct ablation}
\label{sec:ablation-capacity}

A natural rejoinder is that the hypernet's body has many more parameters than the ICNN it emits, so the advantage could be raw over-parametrization rather than the slack-plus-body decomposition. We test this directly by widening the direct ICNN to match the hypernet's parameter count, training under forward-KL on three seeds, otherwise identical to the headline configuration. The widened direct softplus collapses on both targets (\cref{fig:widened-direct}): on a one-dimensional Gumbel target it saturates at the worst possible total-variation distance, never learning a normalized density; on a six-dimensional tabular target it diverges to many orders of magnitude in test loss. The hypernet at the same parameter budget matches its narrow-width performance on both targets. Widening the direct $\btheta$-space does not close the gap: the saddles of the readout shoulder are invariant under widening, only the dimension of the space they live in changes. The lift's advantage is the optimization landscape opened by the slack and the body, not the parameter count.

\begin{figure}[t]
\centering
\includegraphics[width=0.92\linewidth]{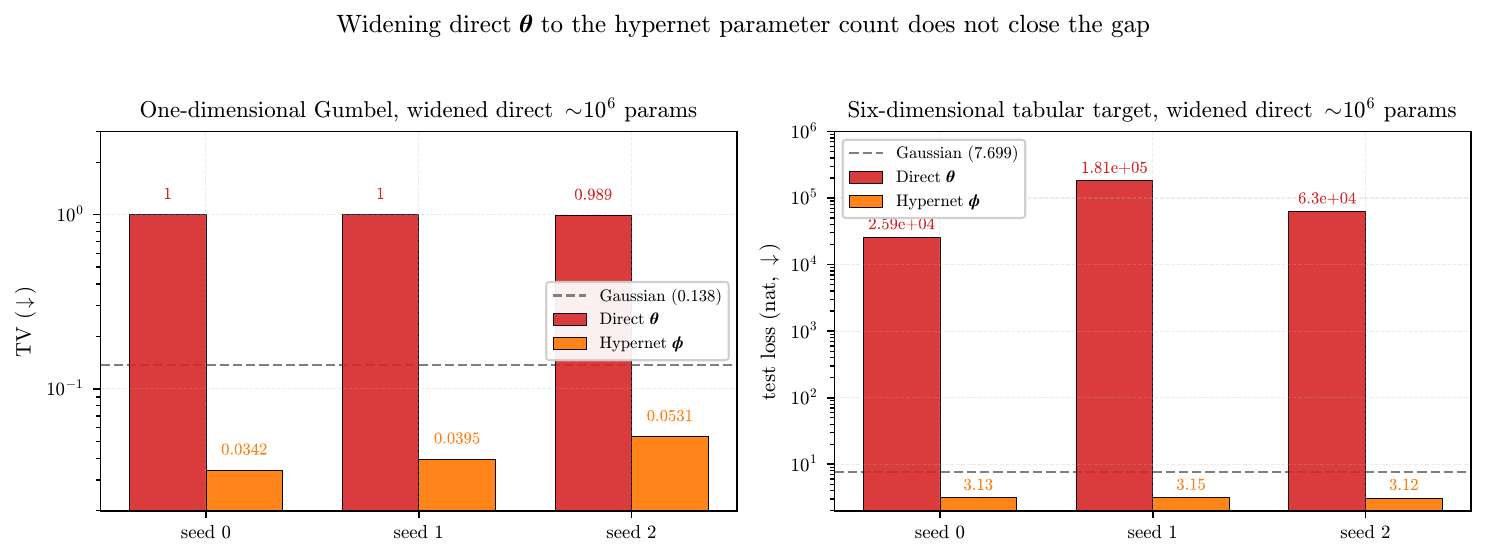}
\caption{\textbf{Widening the direct $\btheta$-space to the hypernet parameter count does not rescue it.} Per-seed final metric on two representative targets (one-dimensional Gumbel and UCI POWER at six-dimensional, three seeds each, forward-KL-direct, otherwise identical to the headline configuration), with the matched-capacity \textcolor[HTML]{d62728}{\textbf{direct}} softplus ($\sim$$10^6$ parameters at $\mathrm{hidden}_\mathrm{dim}{=}512$, $n_\mathrm{layers}{=}5$) against the \textcolor[HTML]{ff7f0e}{\textbf{hypernet}} at the same parameter budget; single-Gaussian dashed for reference. \textbf{Left:} one-dimensional Gumbel TV ($\downarrow$); the matched-capacity direct saturates at TV $\approx 1$ (never learns a normalized density) while the hypernet stays at TV $\approx 0.04$. \textbf{Right:} UCI POWER test loss ($\downarrow$); direct diverges to $10^4$--$10^5$ nat while the hypernet sits at $\approx 3.1$ nat. The hypernet matches its narrow-width performance at the higher budget, so the gap is not closed by raw capacity---it is the conditioning landscape, not the parameter count.}
\label{fig:widened-direct}
\end{figure}

\section{Discussion}
\label{sec:discussion}

The positivity constraint on the inter-layer weights of an ICNN is what makes the architecture work: without it, the network ceases to be a convex function of its input, and the downstream applications---log-concave density estimation, convex potential flows, optimal transport, transport-map inversion---lose their structural guarantees. However, neither existing recipe for enforcing this constraint reshapes the optimization landscape the way the lift does. Projected gradient descent onto the non-negative cone is the standard ICNN recipe and escapes the cone boundary efficiently, but its hard, non-smooth projection is the stiff-penalty limit of an ADMM-style constraint splitting whose classical convergence guarantees do not transfer to the non-smooth ICNN landscape, and it delivers no landscape smoothing; softplus reparametrization installs a chain-rule prefactor that vanishes on an extended region of parameter space and exponentially attenuates the gradient there. The lift improves on both by routing the constraint through a slack-plus-hypernetwork emission, opening a batch-stochastic channel into the pre-readout iterate that survives the readout prefactor (\cref{thm:joint-necessity}).

\paragraph{Differences from extended-source FWI.}
The lift's structural antecedent is the parameter-extension family of full-waveform inversion~\citep{symes2008,symes2020source,vanleeuwen2013,vanleeuwen2016penalty,aghamiry2019admm,siahkoohi2026admm}: a point source is lifted to a generalized source field, the data-fit objective is augmented by a wave-equation residual penalty, and the lifted problem becomes amenable to gradient descent where the unlifted variant is non-convex through cycle skipping. The structure is identical---a constraint lifted into a higher-dimensional space reappears as a consensus penalty---and the lift here inherits the favorable conditioning of those parameter-extension formulations. The lift belongs to a broader family of methods that recast a hard nonconvex problem as a tractable convex one: functional lifting convexifies nonconvex variational problems in imaging by lifting them to a higher dimension~\citep{pock2010global,vogt2020lifting}, and lossless convexification and extended convex lifting do the same in optimal and robust control~\citep{acikmese2011lossless,zheng2026lifting}. Those methods reach exact convexity; the lift here trades dimension for a smoother landscape, not a convex one---the ICNN training loss remains non-convex. The two settings differ in the driving mechanism: \emph{deterministic} landscape smoothing for extended-source FWI versus \emph{data-driven Fokker--Planck escape} for the ICNN lift. A transfer test---an ICNN-style hypernet-plus-bias lift on an FWI cycle-skipping problem---is left to future work.

\paragraph{Differences from NTK/lazy/mean-field analyses.}
The lift does not fit cleanly into the neural tangent kernel (NTK)~\citep{jacot2018ntk}, lazy~\citep{chizat2019lazy}, or mean-field~\citep{mei2018meanfield} regimes. The body's Jacobian is genuinely data-dependent and finite-width; the slack's identity Jacobian is constant by construction rather than by Taylor approximation around the initialization; and the body's scaling is fixed by the hypernet architecture rather than by the $1/\sqrt{N}$ feature-learning law. The closest analogue is the regularization-driven separation of~\citet{weiLeeLiuMa2020}, which shows finite-width regularized networks can be polynomially more sample-efficient than their NTK kernel limit; this concerns feature learning, not constraint traversal. A prefactor-precise theory of the lift's diffusive escape would combine ADMM-style consensus optimization with stochastic-gradient analysis; neither alone explains the $\sigma_\mathrm{Jac}^{-2}$ scaling of \cref{sec:exp-e6}.

\paragraph{Reach to other ICNN paradigms.}
The cross-covariance mechanism is loss-agnostic at the structural level (\cref{rem:loss-agnostic}): only the gradient $\bg$ picks up the loss, while the slack's identity Jacobian and the body's batch-induced fluctuation are properties of the lift's architecture. The convex-potential-flow application is exercised directly (\cref{sec:exp-cpflow}); the same structural argument applies to PCP-Map transport-map estimation~\citep{wang2024conditional,elmoselhy2012bayesian,spantini2018lowdim,baptista2024monotone}, ICNN-parametrized optimal transport~\citep{makkuva2020icnnot,korotin2021wasserstein}, and score matching for log-concave EBMs~\citep{vincent2011connection}. Empirical adjudication on those targets is left to future work, but the loss-agnosticism makes the structural transfer routine: what changes across applications is the gradient noise structure, not the slack-channel decomposition.

\paragraph{What the lift does not do.}
The framework makes structural claims at the per-instance cross-covariance level. The ADMM-style reading of \cref{sec:admm} is a structural interpretation, not an algorithm we run: there is no explicit dual update on $\bb$, and the classical rate theorems of ADMM do not transfer to the non-convex ICNN setting. The lift is correctly silent in the deterministic-readout regime of \cref{rem:moreau-scope}, where freezing the conditioning batch to a fixed anchor shuts off the extra noise channel---the resampled-batch fluctuation of the emitted weights---and collapses $\sigma_\mathrm{Jac}^2$ to zero, a negative result that validates the scope rather than contradicting it. The smooth-autodiff PGD alternative reaches the cone boundary efficiently but does not deliver the landscape smoothing of \cref{lem:moreau}; the empirical $4.2$-nat gap above PGD is the operating-point signature of that distinction. A learnable per-iterate adaptation of the conditioning-batch dimension $n$ (\cref{rem:bcond}), tying the smoothing strength to the trajectory's distance from the shoulder, would tighten the mechanism but is outside the present scope.

\FloatBarrier
\section{Conclusion}
\label{sec:conclusion}

Input-convex neural networks demand non-negative inter-layer weights, and current approaches to enforcing this constraint each leave the optimization landscape unsmoothed: projected gradient descent, the standard ICNN recipe, escapes the cone boundary but applies a hard, non-smooth projection---the stiff-penalty limit of an ADMM-style constraint splitting---whose classical convergence guarantees do not transfer to the non-smooth ICNN landscape, and softplus reparametrization installs a chain-rule prefactor that vanishes on an extended region of parameter space and traps SGD on a time scale exponential in the inverse noise level. The \textbf{lift} replaces the constrained weight by an unconstrained hypernetwork emission summed with a learnable slack bias---a split-variable reparametrization that adds an extra source of stochasticity to the training dynamics, the resampled-batch fluctuation of the emitted weights, which the readout attenuation does not suppress and which softens the loss landscape. Three structural ingredients decompose its conditioning advantage: an identity-Jacobian \textbf{slack}, a data-conditioned \textbf{body}, and a non-vanishing \textbf{cross-covariance} coupling them through batch stochasticity. Each is necessary (\cref{thm:joint-necessity}); the cross-covariance acts as an implicit strong-convexification of the loss landscape (\cref{lem:moreau}). On log-concave EBM training across one-, two-, six-, and 32-dimensional targets, and on convex-potential normalizing flows on a 21-dimensional tabular benchmark, the lift descends to a lower test loss than both projected gradient descent and direct softplus, and converts a plateau-bounded training trajectory into a valley-descending one. The four-architecture cross-covariance ablation isolates each ingredient empirically, and the widened-direct ablation shows the advantage cannot be closed by raw capacity. The natural next step is the deterministic-readout regime in which \cref{rem:moreau-scope} predicts the lift to fall silent---a structural test the framework predicts in advance, and that no current ICNN application directly enters.

\FloatBarrier
\section*{Acknowledgments}
AS and AT acknowledge support from the Institute for Artificial Intelligence at the University of Central Florida. AS thanks Felix J. Herrmann, whose long-standing intuition that the parameter-extension methods of full-waveform inversion (\cref{sec:discussion}) might cross into deep learning first gave this paper its lift, and Maarten V. de Hoop, whose deep insight, sustained support, and faith in the direction kept it aloft.

\FloatBarrier
\appendix

\section{Proofs}
\label{app:proofs}

This appendix collects the full statements and proofs of the theorem, lemma, and corollary stated succinctly in \cref{sec:three-ingredients} and its subsections. Each subsection title names the result it proves, and gives the complete formal statement---with all regularity conditions---immediately before its proof.

\subsection*{Regularity assumptions}
\label{app:assumptions}

The formal results of \cref{sec:three-ingredients} hold under the following load-bearing hypotheses. \textbf{(A1)} SDE-of-SGD with $\eta$ small, i.i.d.\ batches, and gradient-noise covariance Lipschitz in $\bphi$. \textbf{(A2)} $\psi\in C^1(\R)$ monotone non-decreasing, and on the operative shoulder region its derivative is approximately constant across coordinates, $\psi'(\tilde\btheta)\approx\sigma_s\,\bI_d$ with $\sigma_s = \psi'(\tilde w_s)$ (a single-prefactor idealization of the shoulder; justified because the operative region is a narrow band of the softplus shoulder over which $\psi'$ varies slowly). \textbf{(A3)} the slack Jacobian $\partial\tilde\btheta/\partial\bb = \bI_d$ has full rank. \textbf{(A4)} on the operative region the forward-KL gradient fluctuation is, to leading order, the expected loss Hessian acting on the iterate fluctuation, $\delta\bg = \bH_{\tilde\btheta}\,\delta\tilde\btheta + \br$ with $\bH_{\tilde\btheta} \equiv \E_\bX[\nabla^2_{\tilde\btheta}\L]\succeq\bm 0$ symmetric positive semi-definite (PSD) and $\br$ a remainder of order $\|\delta\tilde\btheta\|^2$ (justified because $\bg = \nabla_{\tilde\btheta}\L$, so two iterate-driven perturbations of $\bg$ are coupled through $\nabla^2_{\tilde\btheta}\L$, which is PSD near a local minimum of the locally convex forward-KL population loss of a log-concave target). \textbf{(A5)} for \cref{cor:fpt} only, the one-dimensional bias-channel projection of $\tilde\L$ has a $C^2$ single-barrier potential and the effective noise is small relative to the barrier (the metastable regularity Kramers' asymptotics require). \cref{thm:joint-necessity} uses only (A1); \cref{lem:moreau} uses (A1), (A3), (A4); \cref{cor:fpt} uses (A1)--(A5).

\subsection{Each structural ingredient is necessary for the cross-covariance estimator to be nonzero}
\label{app:proof-joint-necessity}

\textit{Statement (restatement of \cref{thm:joint-necessity}).} Let $\widehat\bSigma_\mathrm{slack}^{(t)}$ denote the slack-channel cross-covariance estimator
\begin{equation*}
\widehat\bSigma_\mathrm{slack}^{(t)} \;=\; \frac{1}{T}\sum_{s=t-T}^{t-1}\delta\tilde\btheta^{(s)}\,(\delta\bg^{(s)})^\top.
\end{equation*}
If any one of
\begin{enumerate}[leftmargin=*,topsep=2pt,itemsep=1pt]
\item[(i)] the slack channel $\bb$ is absent ($\partial\tilde\btheta/\partial\bb \equiv \bm{0}$);
\item[(ii)] the body $h_\bphi(\bX)$ is constant in $\bX$ ($\delta\tilde\btheta \equiv 0$);
\item[(iii)] $\bg$ and $\tilde\btheta$ are independent conditional on $\bphi$, with i.i.d.\ batches \textnormal{(A1)};
\end{enumerate}
holds, then the slack-channel reading of the estimator vanishes: under \textnormal{(i)} the contraction of the estimator with the slack Jacobian is identically zero, under \textnormal{(ii)} the estimator itself is identically zero, both for every $t$ and every window length $T$, and under \textnormal{(iii)} the population cross-covariance is zero and the estimator is its unbiased, $O(T^{-1/2})$-consistent sample version.

\textit{Proof.} The estimator $\widehat\bSigma_\mathrm{slack}^{(t)}$ is the empirical second moment of the pair $(\delta\tilde\btheta^{(s)},\delta\bg^{(s)})$ over the trailing window, and its slack-channel reading is the contraction with the slack Jacobian $\bJ_\bb \equiv \partial\tilde\btheta/\partial\bb$. We do not claim a single product factorization of the estimator: each of the three deletions zeros it by a distinct mechanism, and we treat them separately. The statement places no condition on the readout $\psi$ beyond its presence; (A2) is used downstream in \cref{lem:moreau} but is not needed here, since each deletion zeros the estimator before any property of $\psi$ is invoked.

\textit{Case (i)} (slack channel absent: slack-channel reading is identically zero). Under deletion of the slack channel, the slack Jacobian satisfies $\bJ_\bb \equiv \bm{0}$, and the slack-channel reading of the estimator is the contraction
\begin{equation*}
\bJ_\bb^\top\,\widehat\bSigma_\mathrm{slack}^{(t)} \;=\; \frac{1}{T}\sum_{s=t-T}^{t-1}\bJ_\bb^\top\,\delta\tilde\btheta^{(s)}\,(\delta\bg^{(s)})^\top \;\equiv\; \bm{0},
\end{equation*}
identically for every $t$ and every $T$, since each summand is pre-multiplied by the zero slack Jacobian.

\textit{Case (ii)} (body constant in $\bX$, identical zero). If $h_\bphi(\bX)$ is constant in $\bX$, the per-batch fluctuation of~\eqref{eq:lift-batch-fluct} vanishes:
\begin{equation*}
\delta\tilde\btheta^{(s)} \;=\; \delta h_\bphi(\bX^{(s)}) \;\equiv\; \bm{0} \quad \text{a.s.\ over the trailing window.}
\end{equation*}
Substituting into the estimator gives
\begin{equation*}
\widehat\bSigma_\mathrm{slack}^{(t)} \;=\; \frac{1}{T}\sum_{s=t-T}^{t-1}\bm{0}\cdot(\delta\bg^{(s)})^\top \;\equiv\; \bm{0},
\end{equation*}
identically for every $t$ and every $T$. This deletion is realized architecturally by the direct-softplus parametrization, whose pre-readout iterate is independent of $\bX$.

\textit{Case (iii)} (independence of $\bg$ and $\tilde\btheta$ conditional on $\bphi$, population zero). If $\bg^{(s)} \perp\!\!\!\perp \tilde\btheta^{(s)} \mid \bphi$, then the centered fluctuations $\delta\bg^{(s)}$ and $\delta\tilde\btheta^{(s)}$ are also conditionally independent, and the population cross-covariance factors:
\begin{equation*}
\bSigma_\mathrm{slack} \;\equiv\; \E[\delta\tilde\btheta\,\delta\bg^\top \mid \bphi] \;=\; \E[\delta\tilde\btheta\mid\bphi]\,\E[\delta\bg^\top\mid\bphi] \;=\; \bm{0},
\end{equation*}
the last equality using $\E[\delta\tilde\btheta\mid\bphi] = \E[\delta\bg\mid\bphi] = \bm{0}$ by the centering construction. The factorization is legitimate because, under (A1), conditional independence at fixed $s$ together with cross-iteration independence of the i.i.d.\ batches makes the whole window collection $\{\tilde\btheta^{(s)}\}$ independent of $\{\bg^{(s)}\}$ given $\bphi$, so the trailing-window-centered fluctuations $\delta\tilde\btheta^{(s)}$ and $\delta\bg^{(s)}$---each a function of one of the two collections---remain conditionally independent. Unlike cases (i) and (ii), this does not make the finite-window estimator identically zero. The summands $\delta\tilde\btheta^{(s)}(\delta\bg^{(s)})^\top$ are identically distributed with mean $\bm 0$, but not independent across $s$, since each shares the common window means $\bar{\tilde\btheta}$ and $\bar\bg$; subtracting those means, the standard sample-cross-covariance identity rewrites the estimator as the difference of an i.i.d.\ average and a mean-product term,
\begin{equation*}
\widehat\bSigma_\mathrm{slack}^{(t)} \;=\; \frac{1}{T}\sum_{s=t-T}^{t-1}\delta\tilde\btheta^{(s)}\,(\delta\bg^{(s)})^\top \;=\; \frac{1}{T}\sum_{s=t-T}^{t-1}\bigl(\tilde\btheta^{(s)}-\E[\tilde\btheta\mid\bphi]\bigr)\bigl(\bg^{(s)}-\E[\bg\mid\bphi]\bigr)^\top \;-\; \bar{\tilde\btheta}_c\,\bar\bg_c^\top,
\end{equation*}
where $\bar{\tilde\btheta}_c,\bar\bg_c$ are the window means of the $\E[\cdot\mid\bphi]$-centered fluctuations. The summands of the first average are genuinely i.i.d.\ matrix-valued with mean $\bm 0$ (each depends on batch $s$ alone), so the estimator is unbiased, $\E[\widehat\bSigma_\mathrm{slack}^{(t)}] = \bm 0$, and by the strong law of large numbers the first average converges a.s.\ to $\bm 0$ while the mean-product term is $O(T^{-1})$; hence
\begin{equation*}
\widehat\bSigma_\mathrm{slack}^{(t)} \;\xrightarrow[T\to\infty]{\mathrm{a.s.}}\; \bm{0},
\end{equation*}
with finite-$T$ fluctuations of order $O(T^{-1/2})$. Case (iii) thus zeros the population cross-covariance the estimator targets, and the estimator inherits this zero in expectation and in the large-window limit. \qed

\subsection{Implicit strong-convexification from data-conditioned Jacobian noise}
\label{app:proof-moreau}

\textit{Statement (restatement of \cref{lem:moreau}).} Under the regularity assumptions (A1), (A3), (A4) of \cref{app:assumptions}, let $\tilde\L(\bphi) \equiv \E_{\bX}[\L(\psi(\bb + h_\bphi(\bX)))]$ denote the pullback forward-KL landscape on $\bphi$. In the small-noise expansion of the invariant measure of the It\^o process
\begin{equation*}
d\bphi \;=\; -\nabla_{\bphi}\L\,dt \;+\; \bSigma^{1/2}(\bphi)\,d\bm{B}_t,
\end{equation*}
the second-order term in $\bphi - \bphi^\star$ around the converged $\bphi^\star$ is a quadratic form whose curvature $\bH^\star \equiv \nabla^2_\bphi\tilde\L(\bphi^\star)$ receives an additive contribution from the slack-channel cross-covariance: on the slack subspace,
\begin{equation*}
\tr\bH^\star \;\geq\; \tr\E_\bX\!\bigl[\nabla^2_{\tilde\btheta}\L\bigr] \;+\; d\,\mu_\mathrm{eff}, \qquad \mu_\mathrm{eff} \;=\; \Theta\!\bigl(\sigma_\mathrm{Jac}^2 / (d\,\kappa^2)\bigr),
\end{equation*}
where $\kappa = \|\tilde\btheta\|_\infty$ is the typical readout scale, $d$ the slack dimension, and the inequality holds up to leading order in the iterate-fluctuation magnitude with an $O(\eta\,\sigma_\mathrm{Jac}^2/(d\,\kappa^2))$ It\^o--Taylor truncation correction. The cross-covariance therefore adds a strongly-convex quadratic of per-dimension modulus $\mu_\mathrm{eff}$ to the landscape that SGD samples, without deforming $\psi$.

\textit{Proof.} The argument proceeds in four steps: the small-$\eta$ ergodic expansion of the invariant measure, the quadratic expansion of the pullback landscape around $\bphi^\star$, the effective-Hessian decomposition that imports the cross-covariance, and the resulting added strong-convexity modulus.

In the small-learning-rate ergodic limit of~\citet{mandt2017variational,li2017sde}, the It\^o process of the statement is ergodic with invariant measure
\begin{equation*}
\pi(\bphi) \;\propto\; \exp\!\bigl(-2\tilde\L(\bphi)/\eta\bigr)\,\det\bSigma(\bphi)^{-1/2}\,\bigl(1 + O(\eta)\bigr),
\end{equation*}
under (A1)'s Lipschitz-covariance condition (which controls the It\^o--Taylor remainder). The leading dependence on $\bphi$ is through the Boltzmann factor $\exp(-2\tilde\L(\bphi)/\eta)$; the prefactor $\det\bSigma(\bphi)^{-1/2}$ contributes a sub-leading correction at $O(\eta)$ and does not affect the quadratic structure around $\bphi^\star$.

Taylor-expand the pullback landscape $\tilde\L(\bphi) \equiv \E_{\bX}[\L(\psi(\bb + h_\bphi(\bX)))]$ around the converged critical point $\bphi^\star$ to second order. The first-order term vanishes at the critical point, and the remainder is cubic:
\begin{equation*}
\tilde\L(\bphi) \;=\; \tilde\L(\bphi^\star) \;+\; \tfrac{1}{2}(\bphi - \bphi^\star)^\top\,\bH^\star\,(\bphi - \bphi^\star) \;+\; O\!\bigl(\|\bphi - \bphi^\star\|^3\bigr), \qquad \bH^\star \equiv \nabla^2_\bphi\tilde\L(\bphi^\star).
\end{equation*}
Around $\bphi^\star$ the invariant measure is therefore a Gaussian whose precision is set by $\bH^\star$ (up to the $O(\eta)$ prefactor), so the quadratic form $\bH^\star$ governs the curvature of the landscape that SGD samples from.

The Hessian $\bH^\star$, a symmetric matrix, decomposes into a deterministic loss-curvature piece and a stochastic cross-covariance piece. Differentiating $\tilde\L$ twice and changing variables from $\bphi$ to $\tilde\btheta = \bb + h_\bphi(\bX)$ via the body Jacobian, the curvature on the slack subspace has a Gauss--Newton-like contribution $\E_{\bX}[\nabla^2_{\tilde\btheta}\L]$ plus a fluctuation-induced contribution sourced by the slack-channel cross-covariance $\bSigma_\mathrm{slack} = \E_{\bX}[\delta\tilde\btheta\,\delta\bg^\top]$ of~\eqref{eq:cross-cov}. Because $\bH^\star$ is symmetric, only the symmetric part of the cross-covariance contributes; with the body-Jacobian-to-parameter rescaling that converts $\tilde\btheta$-units to $\bphi$-units through the typical readout scale $\kappa = \|\tilde\btheta\|_\infty$, the decomposition reads
\begin{equation*}
\bH^\star \;=\; \E_{\bX}\!\bigl[\nabla^2_{\tilde\btheta}\L\bigr] \;+\; \frac{1}{\kappa^2}\cdot\tfrac12\bigl(\bSigma_\mathrm{slack} + \bSigma_\mathrm{slack}^\top\bigr).
\end{equation*}
The cross-covariance $\bSigma_\mathrm{slack}$ is itself not symmetric and not a priori sign-definite, so its symmetric part and its trace must be sign-controlled before any curvature claim. This is the role of (A4). Substituting the forward-KL chain $\delta\bg = \bH_{\tilde\btheta}\,\delta\tilde\btheta + \br$ with $\bH_{\tilde\btheta} = \E_\bX[\nabla^2_{\tilde\btheta}\L]\succeq\bm 0$ symmetric PSD,
\begin{equation*}
\bSigma_\mathrm{slack} \;=\; \E_\bX\!\bigl[\delta\tilde\btheta\,\delta\bg^\top\bigr] \;=\; \E_\bX\!\bigl[\delta\tilde\btheta\,\delta\tilde\btheta^\top\bigr]\,\bH_{\tilde\btheta} \;+\; \E_\bX\!\bigl[\delta\tilde\btheta\,\br^\top\bigr],
\end{equation*}
and writing $\bV \equiv \E_\bX[\delta\tilde\btheta\,\delta\tilde\btheta^\top]\succeq\bm 0$ for the iterate-fluctuation covariance, the symmetric part of the leading term is $\tfrac12(\bV\bH_{\tilde\btheta} + \bH_{\tilde\btheta}\bV)$. Its trace splits into a sign-controlled leading term and a subleading remainder:
\begin{equation*}
\sigma_\mathrm{Jac}^2 \;=\; \tr\,\bSigma_\mathrm{slack} \;=\; \tr\bigl(\bV\,\bH_{\tilde\btheta}\bigr) \;+\; \tr\,\E_\bX\!\bigl[\delta\tilde\btheta\,\br^\top\bigr], \qquad \tr\bigl(\bV\,\bH_{\tilde\btheta}\bigr)\;\ge\;0,
\end{equation*}
the inequality holding because the trace of the product of the two PSD matrices $\bV$ and $\bH_{\tilde\btheta}$ is non-negative, and the remainder $\tr\,\E_\bX[\delta\tilde\btheta\,\br^\top] = O(\E\|\delta\tilde\btheta\|^3)$ being a higher-order central moment. So $\sigma_\mathrm{Jac}^2\ge 0$ to leading order in the iterate-fluctuation magnitude---a genuine non-negative quantity once the cubic remainder is dropped---and (A4)---not (A3)---is what supplies its sign. ((A3), the full-rank slack Jacobian $\partial\tilde\btheta/\partial\bb = \bI_d$, is what makes the slack subspace the full $\R^d$ so that the added curvature is not confined to a proper subspace.)

The smoothing modulus is now extracted from the trace, with no operator inequality on a non-symmetric matrix and no isotropy assumption. Taking the trace of the Hessian decomposition and dividing the cross-covariance contribution by the readout rescaling $\kappa^2$,
\begin{equation*}
\tr\bH^\star \;=\; \tr\E_\bX\!\bigl[\nabla^2_{\tilde\btheta}\L\bigr] \;+\; \frac{1}{\kappa^2}\,\tr\,\tfrac12\bigl(\bSigma_\mathrm{slack}+\bSigma_\mathrm{slack}^\top\bigr) \;=\; \tr\E_\bX\!\bigl[\nabla^2_{\tilde\btheta}\L\bigr] \;+\; \frac{\sigma_\mathrm{Jac}^2}{\kappa^2},
\end{equation*}
using $\tr\tfrac12(\bSigma_\mathrm{slack}+\bSigma_\mathrm{slack}^\top) = \tr\bSigma_\mathrm{slack} = \sigma_\mathrm{Jac}^2$, since the trace is insensitive to the symmetrization. Writing the added trace as $d\,\mu_\mathrm{eff}$ defines the per-dimension modulus
\begin{equation*}
\mu_\mathrm{eff} \;=\; \frac{\sigma_\mathrm{Jac}^2}{d\,\kappa^2} \;=\; \Theta\!\bigl(\sigma_\mathrm{Jac}^2/(d\kappa^2)\bigr),
\end{equation*}
which carries the dimension factor $1/d$ explicitly. The leading term $\tr(\bV\bH_{\tilde\btheta})\ge 0$ established above makes $\sigma_\mathrm{Jac}^2\ge 0$ to leading order, so the added trace is non-negative: the cross-covariance can only add curvature. Since the deterministic term $\E_\bX[\nabla^2_{\tilde\btheta}\L]$ is itself PSD at a local minimum of $\L\circ\psi$, the trace inequality
\begin{equation*}
\tr\bH^\star \;\geq\; \tr\E_\bX\!\bigl[\nabla^2_{\tilde\btheta}\L\bigr] \;+\; d\,\mu_\mathrm{eff} \qquad\text{on the slack subspace at } \bphi^\star
\end{equation*}
follows. The cross-covariance therefore adds a strongly-convex quadratic of per-dimension modulus $\mu_\mathrm{eff}$ to the second-order term of the invariant measure: the landscape that SGD samples is strongly-convexified by $\Theta(\sigma_\mathrm{Jac}^2/(d\kappa^2))$ without any deformation of $\psi$. The added curvature is in general anisotropic---a quadratic form aligned with the batch-coupled gradient noise, not a multiple of $\bI$ (\cref{rem:moreau-scope})---and \eqref{eq:mu-eff} reports its trace, the dimension-averaged modulus, which needs no isotropy. We do not claim that $\tilde\L$ is the Moreau envelope~\citep{beck2017first} of $\L\circ\psi$, only that the batch-stochastic channel reproduces the second-order signature of one, an added strongly-convex quadratic. The remainder is the It\^o--Taylor truncation error, $O(\eta\,\sigma_\mathrm{Jac}^2/(d\kappa^2))$ at leading order in the small-$\eta$ expansion. \qed

\subsection{Mean first-passage time across the shoulder}
\label{app:proof-fpt}

\textit{Statement (restatement of \cref{cor:fpt}).} Adopt the regularity assumptions (A1)--(A5) of \cref{app:assumptions}---the hypotheses (A1), (A3), (A4) of \cref{lem:moreau}, the single-prefactor idealization (A2), and the metastable regularity (A5)---and assume in addition a non-vanishing barrier of reference action $\alpha = (\L\circ\psi)(\tilde w_s) - (\L\circ\psi)(\tilde w_b) > 0$, with $\tilde w_b$ the pre-barrier basin minimum and $\tilde w_s$ the shoulder location, along the bias-channel direction (Arrhenius regime). Reduce the It\^o process of \cref{lem:moreau} to its bias-channel projection $d\tilde w = -\tilde\L'(\tilde w)\,dt + \sigma_\mathrm{eff}(\tilde w)\,dB_t$, with direct effective variance $\sigma_{\mathrm{eff},\mathrm{direct}}^2 = \sigma_s^2\sigma_\mathrm{obj}^2$ and lifted effective variance $\sigma_{\mathrm{eff},\mathrm{hyper}}^2 = \sigma_s^2\sigma_\mathrm{obj}^2 + \sigma_\mathrm{Jac}^2$, the lifted excess $\sigma_\mathrm{Jac}^2$ being the slack-channel cross-covariance contribution of \cref{eq:cross-channel-lift}, which is structurally absent for direct softplus. Then the mean first-passage times across the shoulder satisfy~\eqref{eq:fpt}, where $\asymp$ denotes equality of the leading exponential factor up to a sub-exponential prefactor.

\textit{Proof.} The argument is in two steps: the reduction of the It\^o process to a one-dimensional bias-channel SDE, and the Kramers asymptotics for that SDE under (A5).

\emph{Bias-channel reduction.} Let $\be_b$ be the unit vector along the slack-channel direction of \cref{lem:moreau} and project the iterate onto it, $\tilde w \equiv \be_b^\top(\bb + h_\bphi(\bX))$. Projecting the It\^o process $d\bphi = -\nabla_\bphi\L\,dt + \bSigma^{1/2}\,d\bm B_t$ onto $\be_b$ and writing the drift through the pullback landscape $\tilde\L$ gives a scalar SDE
\begin{equation*}
d\tilde w \;=\; -\tilde\L'(\tilde w)\,dt \;+\; \sigma_\mathrm{eff}(\tilde w)\,dB_t,
\end{equation*}
where $\tilde\L'$ is the bias-channel derivative of $\tilde\L$ and $\sigma_\mathrm{eff}^2(\tilde w) = \be_b^\top\bSigma(\bphi)\,\be_b$ is the projected diffusion. The diffusion has two contributions: the readout-prefactor-attenuated gradient-driven part, which is bilinear in $\bg = \psi'(\tilde\btheta)\nabla_\btheta\L$ and hence carries the prefactor as $\psi'(\tilde\btheta)^2$, contributing $\sigma_s^2\sigma_\mathrm{obj}^2$ (the prefactor $\sigma_s^2$ of (A2) times the gradient-noise variance $\sigma_\mathrm{obj}^2$ of~\eqref{eq:sigma-obj}), and the slack-channel cross-covariance of~\eqref{eq:cross-channel-lift}, contributing $\sigma_\mathrm{Jac}^2$. Direct softplus has $\delta\tilde\btheta\equiv\bm 0$ and hence only the attenuated part, while the lift has both, giving
\begin{equation*}
\sigma_{\mathrm{eff},\mathrm{direct}}^2 \;=\; \sigma_s^2\sigma_\mathrm{obj}^2, \qquad
\sigma_{\mathrm{eff},\mathrm{hyper}}^2 \;=\; \sigma_s^2\sigma_\mathrm{obj}^2 \;+\; \sigma_\mathrm{Jac}^2.
\end{equation*}

\emph{Kramers asymptotics.} By (A5) the bias-channel projection of $\tilde\L$ has a $C^2$ single-barrier potential and the effective noise is small relative to the barrier $\alpha$; the SDE is then a metastable one-dimensional diffusion in the Arrhenius regime. The diffusion coefficient $\sigma_\mathrm{eff}^2(\tilde w)$ is in general state-dependent, but the rate-limiting barrier band is the narrow operative shoulder region of (A2), over which $\psi'$---and hence the attenuated component $\sigma_s^2\sigma_\mathrm{obj}^2$---varies slowly; treating $\sigma_\mathrm{Jac}^2$ as locally constant on the same band, the effective variance is constant to leading order across the barrier. For a metastable diffusion with effective variance $\sigma_\mathrm{eff}^2$ constant across the barrier band, the Kramers (Eyring--Arrhenius) escape law~\citep{kramers1940brownian,hanggi1990reaction} gives the mean first-passage time over the barrier
\begin{equation*}
\E[\tau] \;\asymp\; \exp\!\bigl(2\alpha/\sigma_\mathrm{eff}^2\bigr),
\end{equation*}
where $\asymp$ denotes equality of the leading exponential factor up to a sub-exponential prefactor set by the curvatures at the basin minimum and the barrier top. Substituting the two effective variances yields~\eqref{eq:fpt}. The barrier $\alpha$ is the reference action of the un-smoothed $\L\circ\psi$ held fixed across the comparison, so the lift's benefit enters through $\sigma_\mathrm{eff}^2$ alone. \qed

The asymptotic~\eqref{eq:fpt} is the metastable, small-noise prediction. When the unattenuated $\sigma_\mathrm{Jac}^2$ grows comparable to the barrier $2\alpha$ the lifted exponent drops to order one, the effective noise is no longer small relative to the barrier, and (A5) fails: the escape leaves the Arrhenius regime and is governed instead by free diffusion over the escape distance, with mean first-passage time the polynomial $\Theta(\sigma_\mathrm{Jac}^{-2})$ of the standard Brownian hitting-time identity. The drift-free Gumbel probe of \cref{sec:exp-e6} sits in this crossover (\cref{rem:fpt-scope}).

\FloatBarrier
\enlargethispage{2\baselineskip}
\bibliographystyle{abbrvnat}
\bibliography{paper_icnn_lift}

@inproceedings{amos2017icnn,
  author = {Amos, Brandon and Xu, Lei and Kolter, J. Zico},
  title = {Input convex neural networks},
  booktitle = {International Conference on Machine Learning},
  year = {2017},
}

@inproceedings{baldassari2024conditional,
  author = {Baldassari, Lorenzo and Siahkoohi, Ali and Garnier, Josselin and Solna, Knut and de Hoop, Maarten V.},
  title = {Conditional score-based diffusion models for {B}ayesian inference in infinite dimensions},
  booktitle = {Advances in Neural Information Processing Systems},
  year = {2024},
}

@book{beck2017first,
  publisher = {SIAM},
  author = {Beck, Amir},
  title = {First-Order Methods in Optimization},
  year = {2017},
}

@article{boyd2011admm,
  author = {Boyd, Stephen and Parikh, Neal and Chu, Eric and Peleato, Borja and Eckstein, Jonathan},
  title = {Distributed optimization and statistical learning via the alternating direction method of multipliers},
  journal = {Foundations and Trends in Machine Learning},
  volume = {3},
  number = {1},
  pages = {1--122},
  year = {2011},
}

@inproceedings{bunne2022supervised,
  author = {Bunne, Charlotte and Krause, Andreas and Cuturi, Marco},
  title = {Supervised training of conditional {M}onge maps},
  booktitle = {Advances in Neural Information Processing Systems},
  year = {2022},
}

@inproceedings{chizat2019lazy,
  author = {Chizat, L{\'e}na{\"\i}c and Oyallon, Edouard and Bach, Francis},
  title = {On lazy training in differentiable programming},
  booktitle = {Advances in Neural Information Processing Systems},
  year = {2019},
}

@article{he2012convergence,
  author = {He, Bingsheng and Yuan, Xiaoming},
  title = {On the $O(1/n)$ convergence rate of the {D}ouglas--{R}achford alternating direction method},
  journal = {SIAM Journal on Numerical Analysis},
  volume = {50},
  number = {2},
  pages = {700--709},
  year = {2012},
}

@article{hanggi1990reaction,
  author = {H{\"a}nggi, Peter and Talkner, Peter and Borkovec, Michal},
  title = {Reaction-rate theory: fifty years after {K}ramers},
  journal = {Reviews of Modern Physics},
  volume = {62},
  number = {2},
  pages = {251--341},
  year = {1990},
}

@inproceedings{hoedt2023principled,
  author = {Hoedt, Pieter-Jan and Klambauer, G{\"u}nter},
  title = {Principled weight initialisation for input-convex neural networks},
  booktitle = {Advances in Neural Information Processing Systems},
  year = {2023},
}

@inproceedings{huang2021cpflow,
  author = {Huang, Chin-Wei and Chen, Ricky T. Q. and Tsirigotis, Christos and Courville, Aaron},
  title = {Convex potential flows: universal probability distributions with optimal transport and convex optimization},
  booktitle = {International Conference on Learning Representations},
  year = {2021},
}

@inproceedings{jaini2020tails,
  author = {Jaini, Priyank and Kobyzev, Ivan and Brubaker, Marcus and Yu, Yaoliang},
  title = {Tails of {L}ipschitz triangular flows},
  booktitle = {International Conference on Machine Learning},
  year = {2020},
}

@inproceedings{jacot2018ntk,
  author = {Jacot, Arthur and Gabriel, Franck and Hongler, Cl{\'e}ment},
  title = {Neural tangent kernel: convergence and generalization in neural networks},
  booktitle = {Advances in Neural Information Processing Systems},
  year = {2018},
}

@inproceedings{korotin2021wasserstein,
  author = {Korotin, Alexander and Li, Lingxiao and Solomon, Justin and Burnaev, Evgeny},
  title = {Continuous {W}asserstein-2 barycenter estimation without minimax optimization},
  booktitle = {International Conference on Learning Representations},
  year = {2021},
}

@article{kramers1940brownian,
  author = {Kramers, Hendrik A.},
  title = {{B}rownian motion in a field of force and the diffusion model of chemical reactions},
  journal = {Physica},
  volume = {7},
  number = {4},
  pages = {284--304},
  year = {1940},
}

@article{wang2024conditional,
  author = {Wang, Zheyu Oliver and Baptista, Ricardo and Marzouk, Youssef and Ruthotto, Lars and Verma, Deepanshu},
  title = {Efficient neural network approaches for conditional optimal transport with applications in {B}ayesian inference},
  journal = {SIAM Journal on Scientific Computing},
  volume = {47},
  number = {4},
  pages = {C979--C1005},
  year = {2025},
}

@inproceedings{makkuva2020icnnot,
  author = {Makkuva, Ashok Vardhan and Taghvaei, Amirhossein and Lee, Jason and Oh, Sewoong},
  title = {Optimal transport mapping via input convex neural networks},
  booktitle = {International Conference on Machine Learning},
  year = {2020},
}

@article{mei2018meanfield,
  author = {Mei, Song and Montanari, Andrea and Nguyen, Phan-Minh},
  title = {A mean field view of the landscape of two-layer neural networks},
  journal = {Proceedings of the National Academy of Sciences},
  volume = {115},
  number = {33},
  pages = {E7665--E7671},
  year = {2018},
}

@book{nesterov2018lectures,
  publisher = {Springer},
  author = {Nesterov, Yurii},
  title = {Lectures on Convex Optimization},
  year = {2018},
  edition = {2nd},
}

@article{prekopa1971logarithmic,
  author = {Pr{\'e}kopa, Andr{\'a}s},
  title = {Logarithmic concave measures with application to stochastic programming},
  journal = {Acta Scientiarum Mathematicarum (Szeged)},
  volume = {32},
  pages = {301--316},
  year = {1971},
}

@inproceedings{papamakarios2017maf,
  author = {Papamakarios, George and Pavlakou, Theo and Murray, Iain},
  title = {Masked autoregressive flow for density estimation},
  booktitle = {Advances in Neural Information Processing Systems},
  year = {2017},
}

@article{saumard2014logconcavity,
  author = {Saumard, Adrien and Wellner, Jon A.},
  title = {Log-concavity and strong log-concavity: a review},
  journal = {Statistics Surveys},
  volume = {8},
  pages = {45--114},
  year = {2014},
}

@misc{siahkoohi2026admm,
  author = {Siahkoohi, Ali and Aghazade, Kamal and Gholami, Ali},
  title = {Dual-space posterior sampling for {B}ayesian inference in constrained inverse problems},
  howpublished = {arXiv preprint arXiv:2603.00393},
  year = {2026},
}

@article{symes2008,
  author = {Symes, William W.},
  title = {Migration velocity analysis and waveform inversion},
  journal = {Geophysical Prospecting},
  volume = {56},
  number = {6},
  pages = {765--790},
  year = {2008},
}

@article{vanleeuwen2013,
  author = {van Leeuwen, Tristan and Herrmann, Felix J.},
  title = {Mitigating local minima in full-waveform inversion by expanding the search space},
  journal = {Geophysical Journal International},
  volume = {195},
  number = {1},
  pages = {661--667},
  year = {2013},
}

@article{vincent2011connection,
  author = {Vincent, Pascal},
  title = {A connection between score matching and denoising autoencoders},
  journal = {Neural Computation},
  volume = {23},
  number = {7},
  pages = {1661--1674},
  year = {2011},
}

@inproceedings{weiLeeLiuMa2020,
  author = {Wei, Colin and Lee, Jason D. and Liu, Qiang and Ma, Tengyu},
  title = {Regularization matters: generalization and optimization of neural nets v.s. their induced kernel},
  booktitle = {Advances in Neural Information Processing Systems},
  year = {2019},
}

@inproceedings{zaheer2017deepsets,
  author = {Zaheer, Manzil and Kottur, Satwik and Ravanbakhsh, Siamak and P{\'o}czos, Barnab{\'a}s and Salakhutdinov, Ruslan and Smola, Alexander J.},
  title = {Deep sets},
  booktitle = {Advances in Neural Information Processing Systems},
  year = {2017},
}

@inproceedings{li2018visualizing,
  author = {Li, Hao and Xu, Zheng and Taylor, Gavin and Studer, Christoph and Goldstein, Tom},
  title = {Visualizing the loss landscape of neural nets},
  booktitle = {Advances in Neural Information Processing Systems},
  year = {2018},
}

@inproceedings{li2017sde,
  author = {Li, Qianxiao and Tai, Cheng and E, Weinan},
  title = {Stochastic modified equations and adaptive stochastic gradient algorithms},
  booktitle = {International Conference on Machine Learning},
  year = {2017},
}

@article{mandt2017variational,
  author = {Mandt, Stephan and Hoffman, Matthew D. and Blei, David M.},
  title = {Stochastic gradient descent as approximate {B}ayesian inference},
  journal = {Journal of Machine Learning Research},
  volume = {18},
  number = {134},
  pages = {1--35},
  year = {2017},
}

@misc{mayer2024fairness,
  author = {Mayer, Paul and Luzi, Lorenzo and Siahkoohi, Ali and Johnson, Don H. and Baraniuk, Richard G.},
  title = {Improving fairness and mitigating {MADness} in generative models},
  howpublished = {arXiv preprint arXiv:2405.13977},
  year = {2024},
}

@inproceedings{alemohammad2024mad,
  author = {Alemohammad, Sina and Casco-Rodriguez, Josue and Luzi, Lorenzo and Humayun, Ahmed Imtiaz and Babaei, Hossein and LeJeune, Daniel and Siahkoohi, Ali and Baraniuk, Richard},
  title = {Self-consuming generative models go {MAD}},
  booktitle = {International Conference on Learning Representations},
  year = {2024},
}

@article{vanleeuwen2016penalty,
  author = {van Leeuwen, Tristan and Herrmann, Felix J.},
  title = {A penalty method for {PDE}-constrained optimization in inverse problems},
  journal = {Inverse Problems},
  volume = {32},
  number = {1},
  pages = {015007},
  year = {2016},
}

@article{pock2010global,
  author = {Pock, Thomas and Cremers, Daniel and Bischof, Horst and Chambolle, Antonin},
  title = {Global solutions of variational models with convex regularization},
  journal = {SIAM Journal on Imaging Sciences},
  volume = {3},
  number = {4},
  pages = {1122--1145},
  year = {2010},
}

@incollection{vogt2020lifting,
  author = {Vogt, Thomas and Strekalovskiy, Evgeny and Cremers, Daniel and Lellmann, Jan},
  title = {Lifting methods for manifold-valued variational problems},
  booktitle = {Handbook of Variational Methods for Nonlinear Geometric Data},
  publisher = {Springer},
  pages = {95--119},
  year = {2020},
}

@article{zheng2026lifting,
  author = {Zheng, Yang and Pai, Chih-Fan Rich and Tang, Yujie},
  title = {Benign nonconvex landscapes in optimal and robust control, {Part II}: extended convex lifting},
  journal = {IEEE Transactions on Automatic Control},
  pages = {1--16},
  year = {2026},
}

@article{acikmese2011lossless,
  author = {A{\c c}{\i}kme{\c s}e, Beh{\c c}et and Blackmore, Lars},
  title = {Lossless convexification of a class of optimal control problems with non-convex control constraints},
  journal = {Automatica},
  volume = {47},
  number = {2},
  pages = {341--347},
  year = {2011},
}

@article{baldi2016parameterized,
  author = {Baldi, Pierre and Cranmer, Kyle and Faucett, Taylor and Sadowski, Peter and Whiteson, Daniel},
  title = {Parameterized neural networks for high-energy physics},
  journal = {European Physical Journal C},
  volume = {76},
  number = {5},
  pages = {235},
  year = {2016},
}

@article{elmoselhy2012bayesian,
  author = {El Moselhy, Tarek A. and Marzouk, Youssef M.},
  title = {{B}ayesian inference with optimal maps},
  journal = {Journal of Computational Physics},
  volume = {231},
  number = {23},
  pages = {7815--7850},
  year = {2012},
}

@book{bertsekas1999nonlinear,
  author = {Bertsekas, Dimitri P.},
  title = {Nonlinear Programming},
  publisher = {Athena Scientific},
  edition = {2nd},
  year = {1999},
}

@inproceedings{symes2020source,
  author = {Symes, William W. and Chen, Huiyi and Minkoff, Susan E.},
  title = {Full-waveform inversion by source extension: why it works},
  booktitle = {SEG Technical Program Expanded Abstracts},
  pages = {765--769},
  year = {2020},
}

@article{aghamiry2019admm,
  author = {Aghamiry, Hossein S. and Gholami, Ali and Operto, St{\'e}phane},
  title = {Improving full-waveform inversion by wavefield reconstruction with the alternating direction method of multipliers},
  journal = {Geophysics},
  volume = {84},
  number = {1},
  pages = {R139--R162},
  year = {2019},
}

@article{spantini2018lowdim,
  author = {Spantini, Alessio and Bigoni, Daniele and Marzouk, Youssef},
  title = {Inference via low-dimensional couplings},
  journal = {Journal of Machine Learning Research},
  volume = {19},
  number = {66},
  pages = {1--71},
  year = {2018},
}

@article{baptista2024monotone,
  author = {Baptista, Ricardo and Marzouk, Youssef and Zahm, Olivier},
  title = {On the representation and learning of monotone triangular transport maps},
  journal = {Foundations of Computational Mathematics},
  volume = {24},
  pages = {2063--2108},
  year = {2024},
}

@misc{thatipelli2026hypernetwork,
  author = {Thatipelli, Anirudh and Siahkoohi, Ali},
  title = {Hypernetwork-based approach for grid-independent functional data clustering},
  howpublished = {arXiv preprint arXiv:2602.22823},
  year = {2026},
}

@article{he2000self,
  author = {He, Bingsheng and Yang, Hai and Wang, S. L.},
  title = {Alternating direction method with self-adaptive penalty parameters for monotone variational inequalities},
  journal = {Journal of Optimization Theory and Applications},
  volume = {106},
  number = {2},
  pages = {337--356},
  year = {2000},
}

@misc{wohlberg2017admm,
  author = {Wohlberg, Brendt},
  title = {{ADMM} penalty parameter selection by residual balancing},
  howpublished = {arXiv preprint arXiv:1704.06209},
  year = {2017},
}

@article{zarepisheh2017admm,
  author = {Zarepisheh, Masoud and Xing, Lei and Ye, Yinyu},
  title = {A computation study on an integrated alternating direction method of multipliers for large scale optimization},
  journal = {Optimization Letters},
  volume = {12},
  number = {1},
  pages = {3--15},
  year = {2018},
}

@inproceedings{keskar2017sharp,
  author = {Keskar, Nitish Shirish and Mudigere, Dheevatsa and Nocedal, Jorge and Smelyanskiy, Mikhail and Tang, Ping Tak Peter},
  title = {On large-batch training for deep learning: generalization gap and sharp minima},
  booktitle = {International Conference on Learning Representations},
  year = {2017},
}

@article{hochreiter1997flat,
  author = {Hochreiter, Sepp and Schmidhuber, J{\"u}rgen},
  title = {Flat minima},
  journal = {Neural Computation},
  volume = {9},
  number = {1},
  pages = {1--42},
  year = {1997},
}

@misc{jastrzebski2017three,
  author = {Jastrz{\k{e}}bski, Stanislaw and Kenton, Zachary and Arpit, Devansh and Ballas, Nicolas and Fischer, Asja and Bengio, Yoshua and Storkey, Amos},
  title = {Three factors influencing minima in {SGD}},
  howpublished = {arXiv preprint arXiv:1711.04623},
  year = {2017},
}

@inproceedings{ha2017hypernetworks,
  author = {Ha, David and Dai, Andrew M. and Le, Quoc V.},
  title = {{H}yper{N}etworks},
  booktitle = {International Conference on Learning Representations},
  year = {2017},
}

@inproceedings{xie2021diffusion,
  author = {Xie, Zeke and Sato, Issei and Sugiyama, Masashi},
  title = {A diffusion theory for deep learning dynamics: stochastic gradient descent exponentially favors flat minima},
  booktitle = {International Conference on Learning Representations},
  year = {2021},
}

@misc{owen2013monte,
  author = {Owen, Art B.},
  title = {Monte Carlo Theory, Methods and Examples},
  howpublished = {\url{https://artowen.su.domains/mc/}},
  year = {2013},
}

@article{brenier1991polar,
  author = {Brenier, Yann},
  title = {Polar factorization and monotone rearrangement of vector-valued functions},
  journal = {Communications on Pure and Applied Mathematics},
  volume = {44},
  number = {4},
  pages = {375--417},
  year = {1991},
}

@article{lecun1998gradient,
  author = {LeCun, Yann and Bottou, L{\'e}on and Bengio, Yoshua and Haffner, Patrick},
  title = {Gradient-based learning applied to document recognition},
  journal = {Proceedings of the IEEE},
  volume = {86},
  number = {11},
  pages = {2278--2324},
  year = {1998},
}

@inproceedings{kingma2015adam,
  author = {Kingma, Diederik P. and Ba, Jimmy},
  title = {{A}dam: a method for stochastic optimization},
  booktitle = {International Conference on Learning Representations},
  year = {2015},
}

\end{document}